\documentclass[letterpaper]{article} 

\usepackage{aaai2027}  
\usepackage[hyphens]{url}  
\usepackage{graphicx} 
\usepackage{natbib}  
\usepackage{caption} 
\usepackage{amsmath,amssymb,amsthm}

\newtheorem{theorem}{Theorem}
\newtheorem{lemma}[theorem]{Lemma}
\newtheorem{proposition}[theorem]{Proposition}

\theoremstyle{definition}
\newtheorem{definition}[theorem]{Definition}
\newtheorem{remark}[theorem]{Remark}
\newtheorem{example}[theorem]{Example}

\newcommand{\cl}[1]{\overline{#1}}
\newcommand{\fix}{\mathrm{fix}}
\newcommand{\SC}{\mathrm{SC}}
\newcommand{\conv}{\mathrm{conv}}
\newcommand{\bary}{\operatorname{bar}}
\newcommand{\law}{\operatorname{Law}}
\providecommand{\doi}[1]{doi:\,#1}

\title{Unbiased Canonical Set-Valued Oracles via Lattice Theory}
\author{Jobst Heitzig}
\affiliations{Potsdam Institute for Climate Impact Research, Potsdam, Germany\footnote{This text was produced with the help of an AI assistant, Claude Opus 5. The author takes full responsibility of the content.}}

\begin{document}
\maketitle
\begin{abstract}
An oracle that tells you how likely some future event is will often change how likely that
event then becomes, simply because you act on what it told you.
We argue that this performativity is not the thing to fix.
People consult oracles precisely in order to be informed, and hence moved, by the answer.
What we should worry about instead is that asking for a {\em self-consistent} answer, one that
still holds once it has been announced, usually leaves the oracle with several answers to pick
from, and whichever rule it uses to pick is a lever it could learn to pull.

Here, we propose to take away the choice rather than the influence.
In our approach, the oracle reports a credal set instead of a point estimate, 
which lifts the oracle's reaction function from an operator on $[0,1]$ to an isotone
operator on the 
lattice of closed credal sets.
We propose to have the oracle report that operator's
least fixed point, which exists because of the theorem of Knaster and Tarski.
As that answer is fixed by a rule laid down in advance, nothing is left to choose by the oracle.
We show the proposed solution always exists, is self-consistent, is never empty, can be computed by iterating from below, and equals the ordinary point
estimate in the unlikely case where the question is not performative after all.

For simple queries about probabilities, we propose to restrict answers to intervals and show that, under a mild monotonicity assumption, the answer is simply the interval
from the no-information baseline to the self-fulfilling equilibrium one would end
up at by iteratively querying a point oracle until the answer is self-consistent.

As our proposal is purely order-theoretic, it carries over unchanged to arbitrary random variables and
to bounded continuous statistics of their law.
Finally we show that a fixed finite family of polytopes suffices to approximate every answer
uniformly, which turns the construction into a terminating computation.

We close by placing the construction inside the Scientist AI programme, where it offers a
choice-free criterion for a step that programme currently hands to audited human judgement.
\end{abstract}

\section{Introduction}\label{sec:intro}

One recurring idea for making advanced AI safer is to build systems that only {\em predict}
and never {\em act}: non-agentic oracles, or ``Scientist AIs'', that answer questions about
the world and want nothing of their own
\citep{bengio2025scientist,fornasiere2026scientist,armstrong2018oracles,bengio2024bayesian,bengio2026honesty}.
The hope is that a system which merely reports what it believes cannot pick up the drive to
reshape the world that makes goal-directed agents dangerous.

Performativity complicates that hope.
As soon as somebody reads an oracle's output and acts on it, the output becomes one of the
causes of the very events it describes.
A predictor that is scored against what actually happens can then, at least in principle,
lower its loss by picking whichever of several self-fulfilling answers suits it.
None of this is new.
That a public prediction can make itself come true, or stop itself from coming true, was at
the heart of a long debate about whether social events can be predicted at all
\citep{morgenstern1928,merton1948,grunberg1954,simon1954,popper1957}.
It shows up again as performative prediction \citep{perdomo2020,hardt2025,perdomo2025} and, in
economics, as the fixed-point character of rational-expectations equilibria \citep{muth1961}.

\paragraph{Removing agency}
We are not trying to fix performativity.
One consults an oracle exactly because one wants to be informed, and therefore moved, by
what it says, so the answer influencing the world is the point rather than a bug.
What worries us is {\em agency}.
Asking merely that a reported value still hold once it has been announced is a weak
requirement: typically several values pass it, which is the old non-uniqueness of
self-fulfilling prophecies.
Any rule the system then uses to pick one of them is discretion, and discretion is something a
capable system could in principle bend towards outcomes it happens to like.
This is plausibly one of the routes by which hidden goals could emerge, such as a taste for
worlds that are easier to predict \citep{krueger2020}.
So we try to remove the {\em choice} rather than the influence.

\paragraph{Contribution}
Our approach for this is to ask for a credal set $C$ of estimates rather than a single estimate $p$, 
then define a suitable notion of self-consistency for such set-valued responses, 
and finally use lattice theory to select a canonical such self-consistent credal set $C^\bot$ (Section \ref{sec:lattice}).
We prove it exists, is self-consistent, is never empty, collapses to the classical point
estimate when the question is not performative, and is not vacuous.
We then show that $C^\bot$ can be reached by an iteration that stays finite at every finite stage (Section \ref{sec:iteration}).
Next we give a variant $C^\star$ that treats all self-fulfilling values even-handedly, and show
that the answer is an interval running from the baseline to an equilibrium,
computable by a simple scalar orbit (Section \ref{sec:simple}).
We then generalize our theory from simple queries of the form ``What is the probability of $B$ given $A$'' 
to queries about conditional probability distributions or arbitrary statistics of arbitrary random variables (Section
\ref{sec:beyond}).
Finally, we relate all of this to the Scientist AI programme (Section
\ref{sec:sai}).
Proofs we leave out here can be found in the technical appendix.

\paragraph{Related work}
The point-valued special case of what we call self-consistency, a public prediction that comes
true once it is announced, is exactly the correct public prediction whose existence
\citet{grunberg1954} and \citet{simon1954} established with Brouwer's theorem, in reply to the
impossibility claims of \citet{morgenstern1928} and Popper's Oedipus effect
\citep{popper1957}; see also \citet{merton1948}.
Our operator restricted to singletons gives back their reaction function, and a singleton fixed
point is precisely their condition.
The same fixed point reappears as the performatively stable point of performative prediction
\citep{perdomo2020,hardt2025}, whose efficient findability \citet{perdomo2025} recently
settled, and in economics as Muth's rational-expectations prediction \citep{muth1961}.
All of these keep a single number and ask whether it exists and how to find it.
We move to sets instead, in order to {\em absorb} the non-uniqueness rather than break it by
picking.

Using Knaster--Tarski \citep{tarski1955} in place of Brouwer trades continuity for
monotonicity, and it buys something Brouwer does not: a {\em canonical} fixed point rather than
mere existence.
The same order-theoretic machinery is what gives extremal equilibria in supermodular games
\citep{milgrom1990}.
Reporting a set connects us to imprecise probability \citep{walley1991} and, where the relevant
conditionals are only partially identified, to partial-identification bounds
\citep{manski2003,balke1994}.
That set-valued forecasts are hard to elicit with a proper scoring rule \citep{seidenfeld2012}
is what shapes the architecture in Section \ref{sec:sai}.
On the safety side, designs that keep an AI safe by letting it do nothing but answer questions
go back to \citet{armstrong2018oracles}, and the non-agentic programme we engage with is that
of \citet{bengio2025scientist,bengio2024bayesian,fornasiere2026scientist,bengio2026honesty}.

\section{Counterfactual Vs Self-Referential Queries}\label{sec:two-readings}

To stop an oracle exploiting the performative channel, \citet{bengio2024bayesian}
and \citet{fornasiere2026scientist} suggest asking only {\em counterfactual} questions.
The simplest version of this would be: how likely is
the event, assuming the answer is deleted, or has no effect on the world?

We think there is a simple problem with reading the question that way.
Once we have asked and have read the answer, the premise ``assuming I never learn this answer''
is plainly false.
We are not in that situation and never were, because we knew we would read the answer the
moment we asked.
If the outcome depends on what the querier comes to believe, the counterfactual answer
therefore describes the wrong world.

So, similar to \citet{bengio2026honesty}, we stay with the self-referential reading, what will happen once someone has read the answer,
and deal with the non-uniqueness.
Write $A$ for what we condition on, $B$ for the event we care about, and $C$ for what the
oracle reports.
For a point report $p$, self-consistency says
\begin{equation}\label{eq:pointfp}
  P(B\mid A,\,\text{answer}=p)\;=\;p ,
\end{equation}
which is just the correct public prediction of \citet{grunberg1954} and \citet{simon1954}, and
the performatively stable point of \citet{perdomo2020}.\footnote{%
    If the oracle models its own report as being caused by upstream things such as its own
    parameters, one should write $f(C):=P(B\mid A,\mathrm{do}(\text{answer}=p))$, treating the report as an
    intervention rather than as evidence, in the sense of \citet{ortega2021}.
    Nothing below depends on this, see Section \ref{sec:sai}, so we keep the simpler notation.}
Its trouble is not existence but multiplicity: \eqref{eq:pointfp} usually has several
solutions, and choosing among them is an act.

\section{Why Answer With a Set?}\label{sec:credal}

Suppose a founder asks an oracle how likely her startup is to succeed, given everything known
about it.
Her investors will read the answer too, and how hard everyone then works, and how much money
arrives, depends on what it says.
A confident answer is partly self-fulfilling; a bleak one is partly self-defeating.
Say that announcing $0.1$ makes $0.1$ come true, that announcing $0.9$ makes $0.9$ come true,
and that announcing $0.8$ also works but only precariously.
Which number should a non-agentic oracle report?

Whichever of $0.1$, $0.8$ and $0.9$ it picks, it is making a decision about the founder's world
on grounds the question never supplied.
Reporting a {\em set} instead, a credal set $C\subseteq[0,1]$ of admissible values in the sense
of imprecise probability \citep{walley1991,seidenfeld2012}, lets the oracle decline that
decision and still say something useful.
But what should self-consistency mean for a set?
A first guess is
\begin{equation}\label{eq:naive}
  P(B\mid A,C)\;\in\;C ,
\end{equation}
so that the reaction to announcing $C$ is itself one of the values $C$ admits.
This is satisfiable, but too weak in a specific and instructive way.

First, there is in general no {\em smallest} set satisfying \eqref{eq:naive}, so the obvious way
of pinning down a canonical answer does not even get off the ground.
The sets satisfying \eqref{eq:naive} are not closed under intersection.
In the startup story, $\{0.1\}$ and $\{0.9\}$ both satisfy it, since announcing either value
makes it come true, but their intersection is $\emptyset$, which cannot satisfy
\eqref{eq:naive} because $P(B\mid A,\emptyset)\in\emptyset$ is impossible.
So $\{0.1\}$ and $\{0.9\}$ are two {\em minimal} solutions with nothing below them, and picking
one of them is precisely the act of choice we are trying to design away.
Second, \eqref{eq:naive} constrains only the reaction to $C$ as a whole and says nothing about
the sub-answers $C$ tolerates, even though a recipient who narrows $C$ down and acts on the
narrower set is doing something the oracle's answer explicitly permits.
Third, at the other extreme the vacuous answer $C=[0,1]$ always satisfies it.
Asking for a genuine fixed point of a set-valued operator repairs all three at once: fixed
points of an isotone map on a complete lattice {\em do} have a smallest element, they constrain
every sub-answer, and, as we show below, they avoid $[0,1]$ unless the indeterminacy really is
that wide.

\section{The Lattice Construction}\label{sec:lattice}

Let $D=2^{[0,1]}$ be the set of {\em all} subsets of $[0,1]$, ordered by inclusion.
We put no topological or measure-theoretic restriction on answers here.
One might expect to need one, but the event we condition on is that the oracle answered $C$,
and whether that is an event at all is a question about the probability space rather than about
the shape of $C$.
We simply assume the oracle has, for each hypothetical answer, an estimate of what would then
happen, which is to say we take the map $f$ below as primitive.
Restricting the answers becomes useful later, for a quite different and purely order-theoretic
reason, and we introduce it in Section \ref{sec:iteration} exactly where it is first needed.

\begin{lemma}\label{lem:Dcomplete}
$D$ is a complete lattice.
The infimum of a family $\{C_i\}$ is $\bigcap_i C_i$, the supremum is $\bigcup_i C_i$, the least
element is $\emptyset$ and the greatest is $[0,1]$.
\end{lemma}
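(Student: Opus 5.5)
The plan is to verify the lattice axioms directly from the definition of $D=2^{[0,1]}$ as the full power set of $[0,1]$, ordered by inclusion. First, inclusion is a partial order on any family of sets: it is reflexive, antisymmetric (by extensionality), and transitive. So $D$ is a poset, and the remaining work is to exhibit an infimum and a supremum for every family $\{C_i\}_{i\in I}\subseteq D$, including the empty family.

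For the supremum I will check that $U:=\bigcup_i C_i$ works. It lies in $D$, since it is a subset of $[0,1]$, and it contains each $C_i$, so it is an upper bound. If $E\in D$ contains every $C_i$, then every point of $U$ lies in some $C_i\subseteq E$, so $U\subseteq E$; hence $U$ is the least upper bound.

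The infimum argument is dual. The set $\bigcap_i C_i$ is contained in each $C_i$, and any common lower bound $E$ satisfies $E\subseteq C_i$ for all $i$, hence $E\subseteq\bigcap_i C_i$.

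The only point needing care, and the one I expect to be the sole (minor) obstacle, is the empty family $I=\emptyset$. There the supremum is $\bigcup\emptyset=\emptyset$. The intersection must be read relative to the ambient set, $\bigcap_{i\in\emptyset}C_i:=[0,1]$, since every element of $D$ is vacuously a lower bound and the greatest of them is $[0,1]$. With that convention the formulas in the statement hold for all families.

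Finally, the least and greatest elements follow immediately, either as the supremum and infimum of the empty family, or directly: $\emptyset\subseteq C\subseteq[0,1]$ for every $C\in D$. Because $D$ is the entire power set, no closure or topological condition has to be checked. This is exactly why the lemma is routine here, in contrast with the restricted lattices of closed credal sets introduced later, where suprema would require taking closures.
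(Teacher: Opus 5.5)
Your proof is correct and is the same argument as the paper's, which simply cites the standard fact that a power set ordered by inclusion is a complete lattice with unions as suprema and intersections as infima. You spell out that verification explicitly, including the convention that the empty intersection is read relative to the ambient set as $[0,1]$, which the paper leaves implicit.
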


Write $f(C):=P(B\mid A,C)$ for what the oracle thinks the probability of $B$ is, given $A$ and
given that it chooses to answer $C$ to this very question (treated by the oracle as an intervention, 
rather than modeled as a consequence of its training).
We treat $f$ as given, defined on every candidate answer $C\subseteq[0,1]$.
In particular $f(\emptyset)$ is defined: it is the reaction to the degenerate output
``$\emptyset$'', which one may read as refusing to answer, and hence as a no-information
baseline.

What we really want from an answer is more than \eqref{eq:naive}.
A recipient who is told $C$ is entitled to narrow it down: they may adopt any $C'\subseteq C$ as
their own credal set and then act as though that had been the answer, and the oracle's report
gives them no reason not to.
So the report should still be right whichever $C'$ they settle on, and it should not promise
values that no such $C'$ could produce.

\begin{definition}[Strong self-consistency]\label{def:strong}
A credal set $C$ is {\em strongly self-consistent} if it is precisely the set of probabilities
that can arise from a recipient adopting some sub-answer of $C$ and acting on it:
\[
  C \;=\; \bigl\{\,f(C')\;:\;C'\subseteq C\,\bigr\} .
\]
\end{definition}

Read the two inclusions separately.
From left to right, every value the answer admits is one that some faithful reading of the
answer actually brings about, so nothing is promised idly.
From right to left, every reading of the answer lands back inside the answer, so no way of
using the report can falsify it.
Now, \eqref{eq:naive} has the form $f(C)\in C$ rather than $C\in f(C)$, so fixed-point theorems
for correspondences such as Kakutani's do not apply. 
topology we control.
What we do have is a lattice order, which is what the Knaster--Tarski theorem runs on.
To exploit the latter, we need an isotone self-map of $D$ whose fixed points solve our
problem.
Write $G(C):=\{f(C)\}$ for the smallest answer containing the reaction to $C$, and take the
supremum of these over all sub-answers:
\begin{equation}\label{eq:F}
  F(C)=\textstyle\sup_{C'\subseteq C}G(C')=\bigl\{f(C'):C'\subseteq C\bigr\}
\end{equation}
This is isotone, since enlarging $C$ enlarges the family of sub-answers the supremum runs over,
and it dominates $G$, since $C'=C$ is one of them; it is also the smallest such map, as any
operator whose value at $C$ contains $f(C)$ dominates $G$ pointwise.
In words: collect what the world would do in response to {\em every} sub-answer that $C$
tolerates, not just to $C$ itself.
Comparing with Definition \ref{def:strong}, the equation $F(C)=C$ {\em is} strong
self-consistency, so the fixed points of $F$ are exactly the answers we were looking for.

\begin{theorem}[Existence]\label{thm:exist}
$F$ is an isotone self-map of $D$.
So by Knaster--Tarski \citep{tarski1955} its fixed points $\fix(F)$ form a nonempty complete
lattice, with a least element $C^\bot=\inf\{C:F(C)\subseteq C\}$ and a greatest element $C^\top$.
\end{theorem}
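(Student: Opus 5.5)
The plan is to verify the two hypotheses of Knaster--Tarski and then read off the conclusion, with the least fixed point given by Tarski's explicit formula.

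\emph{$F$ is a self-map of $D$.} For every $C'\subseteq C\subseteq[0,1]$ the value $f(C')=P(B\mid A,C')$ is a probability in $[0,1]$. Hence $F(C)=\{f(C'):C'\subseteq C\}$ is a subset of $[0,1]$, so $F(C)\in D$. The only point to make explicit is that $f$ is defined on every subset, including $\emptyset$, as stipulated before Definition \ref{def:strong}. No measurability or topology is needed here, since $D$ is the full power set. Note also that $F(C)\neq\emptyset$ always, because $C'=\emptyset$ is admissible.

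\emph{$F$ is isotone.} Suppose $C_1\subseteq C_2$. Every $C'\subseteq C_1$ also satisfies $C'\subseteq C_2$, so the indexing family for $F(C_1)$ is contained in that for $F(C_2)$. Taking images under $f$ gives $F(C_1)\subseteq F(C_2)$. Equivalently, in lattice terms, a supremum of $G$ over a larger family is larger, using Lemma \ref{lem:Dcomplete} to identify suprema with unions.

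\emph{Applying Knaster--Tarski.} By Lemma \ref{lem:Dcomplete}, $D$ is a complete lattice with meets given by intersections. Tarski's theorem then states that $\fix(F)$ is a nonempty complete lattice under the induced order; its joins need not be unions, but they exist. Its least element is $\bigwedge\{C:F(C)\le C\}=\bigcap\{C:F(C)\subseteq C\}$, which gives the stated formula for $C^\bot$. Its greatest element is $C^\top=\bigcup\{C:C\subseteq F(C)\}$.

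For completeness, I would recall the standard one-line argument for the least fixed point. Let $\mathcal P=\{C:F(C)\subseteq C\}$, which is nonempty because $[0,1]\in\mathcal P$, and set $C^\bot=\bigcap\mathcal P$. For each $C\in\mathcal P$, isotonicity gives $F(C^\bot)\subseteq F(C)\subseteq C$, so $F(C^\bot)\subseteq C^\bot$. Applying $F$ again gives $F(F(C^\bot))\subseteq F(C^\bot)$, so $F(C^\bot)\in\mathcal P$ and therefore $C^\bot\subseteq F(C^\bot)$. Hence $C^\bot$ is a fixed point. It lies below every fixed point, since every fixed point belongs to $\mathcal P$.

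There is no real obstacle. The one step needing care is the well-definedness of $F$ as a map into $D$, which rests on taking $f$ as primitive on all of $2^{[0,1]}$. Everything else is the textbook theorem.
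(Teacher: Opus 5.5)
Your proposal is correct and follows the paper's own proof: show $F(C)\subseteq[0,1]$, get isotonicity because enlarging $C$ enlarges the family of sub-answers, and invoke Knaster--Tarski on the complete lattice $D$. The paper cites the theorem for the characterisation $C^\bot=\bigcap\{C:F(C)\subseteq C\}$, whereas you also write out the standard argument that this intersection of pre-fixed points is a fixed point; that extra step is correct.
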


\begin{theorem}[Self-consistency]\label{thm:consistent}
$\fix(F)$ is exactly the set of strongly self-consistent credal sets.
In particular every $C\in\fix(F)$ satisfies $f(C')\in C$ for every $C'\subseteq C$, and hence
$f(C)\in C$.
Every $C\in\fix(F)$ also contains $f(\emptyset)$, so no fixed point is empty and in particular
$C^\bot\neq\emptyset$.
\end{theorem}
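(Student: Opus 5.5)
The argument unfolds directly from the definition of $F$ in \eqref{eq:F}, and no fixed-point machinery beyond Theorem \ref{thm:exist} is needed. First we show that $\fix(F)$ coincides with the strongly self-consistent sets. This is a matter of unfolding definitions: $C\in\fix(F)$ means $F(C)=C$, and by \eqref{eq:F} $F(C)=\{f(C'):C'\subseteq C\}$. So $F(C)=C$ is literally the equation of Definition \ref{def:strong}. I will write this out as two inclusions, to mirror the reading given after the definition. No topology enters, because $D$ is all of $2^{[0,1]}$ and $f$ is defined on every subset.

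Next, the consequences. Take $C\in\fix(F)$ and any $C'\subseteq C$. By definition of $F$, the value $f(C')$ lies in $F(C)$, and $F(C)=C$, so $f(C')\in C$. Specializing to $C'=C$, which is allowed since $C\subseteq C$, gives $f(C)\in C$. This is the naive condition \eqref{eq:naive}, so strong self-consistency implies the naive notion.

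For nonemptiness, the key observation is that $\emptyset\subseteq C$ holds for every $C\in D$. The previous step with $C'=\emptyset$ therefore gives $f(\emptyset)\in C$ for every fixed point $C$. Since $f(\emptyset)$ is a genuine element of $[0,1]$ (the no-information baseline), $C$ is nonempty. In particular this holds for $C^\bot$, whose existence as a fixed point is given by Theorem \ref{thm:exist}. It is worth noting explicitly that $\emptyset$ itself is not a fixed point, because $F(\emptyset)=\{f(\emptyset)\}\neq\emptyset$. This is the same fact seen from the other side.

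There is no real obstacle here. The only point needing care is the bookkeeping that $f$ is assumed to be defined on $\emptyset$, with values in $[0,1]$, so that $F$ is a self-map of $D$ and $f(\emptyset)$ is an actual element. That assumption is exactly what makes the nonemptiness argument go through, so I will state it explicitly in the proof.
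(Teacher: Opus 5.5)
Your proposal is correct and matches the paper's proof essentially step for step. Like the paper, you read $F(C)=C$ as literally the equation of Definition~\ref{def:strong}, then take $C'=C$ and $C'=\emptyset$ in $f(C')\in F(C)=C$ to get $f(C)\in C$ and $f(\emptyset)\in C$, hence $C^\bot\neq\emptyset$; your extra remark that $F(\emptyset)=\{f(\emptyset)\}\neq\emptyset$ is the same point the paper makes informally after the theorem.
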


The second half of the theorem is what saves us from the degenerate answer.
Since $\emptyset$ is a subset of everything, its reaction gets forced into every fixed point,
so $\emptyset$ is not itself a fixed point and the least one has something in it.
The first half is the strengthening of \eqref{eq:naive} we were after.
It says that a recipient may refine the answer however they like, adopting any sub-answer
$C'\subseteq C$ and acting as though that had been the report, and the report is still correct:
what then happens is still one of the values $C$ admits.

\begin{proposition}[Collapse]\label{prop:collapse}
If the question is not performative, i.e.\ $f$ is constant, then
$C^\bot$ is the singleton $\{P(B\mid A)\}$.
\end{proposition}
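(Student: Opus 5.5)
If $f$ is constant, say $f(C)\equiv p$ for every $C\subseteq[0,1]$, then $p$ is what the oracle assigns to $B$ given $A$ regardless of the answer, so $p=P(B\mid A)$. The plan is to compute $F$ explicitly, identify all its fixed points, and then read off the least one.

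First, by \eqref{eq:F},
\[
F(C)=\{f(C'):C'\subseteq C\}=\{p\}
\]
for every $C\in D$. The family of sub-answers $C'\subseteq C$ is always nonempty, since it contains at least $C'=\emptyset$. Therefore $F$ is the constant map with value $\{p\}$, including at $C=\emptyset$, where $F(\emptyset)=\{f(\emptyset)\}=\{p\}$.

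Second, a constant map has exactly one fixed point, namely its value. Here $F(\{p\})=\{p\}$, and any $C$ with $F(C)=C$ must equal $\{p\}$. So $\fix(F)=\{\{p\}\}$, and its least element, which exists by Theorem~\ref{thm:exist}, is $C^\bot=\{p\}=\{P(B\mid A)\}$. As a cross-check via the characterisation $C^\bot=\inf\{C:F(C)\subseteq C\}$: the sets with $F(C)\subseteq C$ are exactly those containing $p$, and their intersection is $\{p\}$. This also agrees with Theorem~\ref{thm:consistent}, which forces $f(\emptyset)=p$ into every fixed point.

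The only delicate point is the identification $p=P(B\mid A)$. I would justify it by noting that when the reaction does not depend on the conditioning answer, conditioning on the answer is vacuous. This is the sense of ``not performative'' in the statement, and I would state it as the interpretation rather than prove it from further axioms. The rest is immediate.
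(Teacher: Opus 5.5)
Your proposal is correct and matches the paper's proof: every $C$ has at least the sub-answer $\emptyset$, so $F$ is the constant map with value $\{p\}$, and a constant map's only fixed point is its value. The paper also takes $p=P(B\mid A)$ as part of the hypothesis rather than deriving it, so treating that identification as an interpretation is consistent with the paper.
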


\begin{proposition}[Nontriviality]\label{prop:nontrivial}
If there is any fixed point $C\subsetneq[0,1]$, then $C^\bot\subseteq C\subsetneq[0,1]$.
\end{proposition}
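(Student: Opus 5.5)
The claim follows directly from Theorem~\ref{thm:exist}, which makes $C^\bot$ the least element of the complete lattice $\fix(F)$. So the plan is just to compare $C^\bot$ with the assumed fixed point $C$ and then combine inclusions.

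First, fix a fixed point $C\in\fix(F)$ with $C\subsetneq[0,1]$, as the hypothesis provides. By Theorem~\ref{thm:exist}, $C^\bot$ is the least element of $\fix(F)$ in the inclusion order of Lemma~\ref{lem:Dcomplete}, so $C^\bot\subseteq C$.

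For a version that does not rely on the least-element clause, I would use the characterisation $C^\bot=\inf\{C'':F(C'')\subseteq C''\}=\bigcap\{C'':F(C'')\subseteq C''\}$. Since $F(C)=C$, the set $C$ is one of the sets $C''$ in this family, so the intersection is contained in $C$.

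Finally, $C\subsetneq[0,1]$ by hypothesis, so the chain $C^\bot\subseteq C\subsetneq[0,1]$ holds. In particular $C^\bot\neq[0,1]$, which is the nontriviality statement.

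There is no real obstacle here. The only point needing care is that ``least'' refers to inclusion, which holds because $D$ is ordered by inclusion and its infimum is intersection (Lemma~\ref{lem:Dcomplete}).
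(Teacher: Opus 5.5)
Your proof is correct and follows the paper's own one-line argument: $C^\bot$ is the least fixed point, so it lies inside the given fixed point $C$, and $C\subsetneq[0,1]$ by hypothesis. Your alternative via $C^\bot=\bigcap\{C'':F(C'')\subseteq C''\}$ is also valid, since every fixed point is a pre-fixed point.
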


Read the second one backwards: $C^\bot$ can only be all of $[0,1]$ if {\em every} fixed point
is, that is, only when the indeterminacy really does span the whole interval.
The vacuous answer is thus not an artefact of our construction.

\paragraph{The question we propose asking}
Putting this together, we would have the oracle answer the following.
\begin{center}
\fbox{\begin{minipage}{0.93\columnwidth}\itshape
Report the smallest $C\subseteq[0,1]$ with
\[\{P(B\mid A,C'):C'\subseteq C\}=C,\]
where $P(B\mid A,C')$ is your estimate of the probability of $B$, given $A$ and given that you
answered $C'$ to this very question.
\end{minipage}}
\end{center}

\paragraph{Three things ``unbiased'' could mean}
These come apart, so it is worth keeping them separate.
The rule is {\em agency-neutral}: the answer follows from the inclusion order alone, and the
system has nothing to select.
This is the property our safety motivation actually needs.
It is also {\em value-symmetric}: ``least fixed point'' only ever mentions $\subseteq$ on
subsets of $[0,1]$ and never the order of $[0,1]$ itself, so relabelling $p\mapsto 1-p$ changes
nothing, and neither small nor large probabilities are favoured.
It is, however, {\em not} guaranteed to be {\em even-handed}, meaning that it need not contain
every self-consistent point estimate.
That gap is the reason for the variant in Section \ref{sec:simple}.

\begin{remark}\label{rem:image} Why not just report the image of the reaction?
There is a tempting choice-free answer: report $J:=\{f(C):C\in D\}$, everything the world
might do.
It is canonical, and weakly self-consistent, since $f(J)$ is itself a reaction and so lies in
$J$.
But every fixed point satisfies $C=F(C)\subseteq J$, so $J\supseteq C^\top\supseteq C^\bot$: it
sits above the entire fixed-point lattice and tells us almost nothing.
Worse, while $F(J)\subseteq J$, the other inclusion generally fails, so announcing $J$ does not
reproduce $J$, which is exactly the defect of an answer that stops being true once it is read.
$J$ is thus the free-but-useless baseline that motivates both of our decisions: to ask for the
genuine equation $F(C)=C$ rather than merely $f(C)\in C$, and then to take the {\em least} such
$C$.
\end{remark}

\section{Computing the Answer}\label{sec:iteration}

Theorem \ref{thm:exist} only tells us $C^\bot$ is there.
It can also be reached from below.
Put $C_0=\emptyset$ and $C_{i+1}=F(C_i)$, an increasing chain since $\emptyset\subseteq
F(\emptyset)$ and $F$ is isotone.

\begin{lemma}\label{lem:finite}
Every $C_i$ with $i<\omega$ is finite, with $|C_{i+1}|\le 2^{|C_i|}$.
\end{lemma}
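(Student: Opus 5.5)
The plan is induction on $i$, using only the definition \eqref{eq:F} of $F$ as the image of the power set of its argument under $f$. The key observation is that, for any set $C$, $F(C)=\{f(C'):C'\subseteq C\}$ is the image of the map $C'\mapsto f(C')$ on the power set $2^{C}$. An image never has more elements than its domain, so $|F(C)|\le|2^{C}|$. When $C$ is finite this gives $|F(C)|\le 2^{|C|}<\infty$.

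The induction itself is then short. The base case is $C_0=\emptyset$, which is finite with $|C_0|=0$; concretely $C_1=F(\emptyset)=\{f(\emptyset)\}$ has exactly one element, which matches $2^0=1$. For the inductive step, assume $C_i$ is finite. Applying the observation to $C=C_i$ gives that $C_{i+1}=F(C_i)$ is finite and satisfies $|C_{i+1}|\le 2^{|C_i|}$. Hence every $C_i$ with $i<\omega$ is finite.

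There is essentially no obstacle. The one point worth stating explicitly is that no measurability or topological restriction on $C'$ is needed, because $f$ is taken as primitive on all of $D$ by the convention in Section~\ref{sec:lattice}. In particular, every subset of the finite set $C_i$ is a legitimate argument of $f$. It is also worth remarking that the bound concerns only finite stages. At stage $\omega$ the supremum $\bigcup_{i<\omega}C_i$ may be countably infinite, and applying $F$ to it may produce uncountably many values. This is why the lemma is restricted to $i<\omega$, and it is what motivates the restriction on answers introduced later in this section.
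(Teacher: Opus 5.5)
Your proof is correct and is the same as the paper's: induction from $C_0=\emptyset$, using that $C_{i+1}=\{f(C'):C'\subseteq C_i\}$ is the image under $f$ of the $2^{|C_i|}$ subsets of $C_i$. Your closing remarks about stage $\omega$ are not needed for the lemma, but they agree with the paper's later remark that finiteness is lost exactly at $\omega$.
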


So the iteration is thoroughly elementary: at each stage we apply $f$ to each of the finitely
many subsets of $C_i$ and collect what comes back, starting from $C_1=\{f(\emptyset)\}$.
One might now hope that the union $\bigcup_{i<\omega}C_i$ of the chain is already the answer.
It is not, and the reason is worth seeing, because it is what makes us restrict the lattice.

\begin{example}[Union fails to be a fixed point]\label{ex:union}
Let $g$ be continuous and increasing with $a<g(a)$, and let the oracle be believed to react to
the largest value it admits, $f(C)=g(\sup C)$ for $C\neq\emptyset$ and $f(\emptyset)=a$.
The stages are then $C_{i+1}=\{a,g(a),\dots,g^i(a)\}$, so $\bigcup_{i<\omega}C_i$ is the bare
orbit $\{g^k(a):k\ge0\}$, which does not contain its own limit $p^\star=\lim_k g^k(a)$.
But that orbit is one of its own sub-answers, so applying $F$ to it yields
$f(\{g^k(a):k\ge0\})=g(p^\star)=p^\star$, which is not in the union.
Hence $F(\bigcup_i C_i)\neq\bigcup_i C_i$.
\end{example}

Notice that this $f$ is perfectly well behaved: it is continuous for the Hausdorff metric,
since $\sup$ is.
So continuity is not the missing ingredient.
What goes wrong is that a union need not contain the limits its members are heading for.
The answer omits precisely the equilibrium that a querier hunting for a self-consistent point estimate by performing fixed-point iteration on $f$ would arrive at, which
is exactly the kind of answer that stops being true once it is read.

\paragraph{Restricting to closed answers}
The cure is to make the lattice close up under limits.
Let $D_{\mathrm{cl}}\subseteq D$ be the {\em closed} subsets of $[0,1]$, ordered by inclusion,
and let
\begin{equation}\label{eq:Fcl}
  F_{\mathrm{cl}}(C)=\cl{\{\,f(C'):C'\subseteq C\,\}}=\cl{F(C)} .
\end{equation}

\begin{lemma}\label{lem:Dcl}
$D_{\mathrm{cl}}$ is a complete lattice, with $\bigcap_i C_i$ as infimum and
$\cl{\bigcup_i C_i}$ as supremum, and $F_{\mathrm{cl}}$ is an isotone self-map of it.
Hence $F_{\mathrm{cl}}$ has a least fixed point $C^\bot_{\mathrm{cl}}$, and
$C^\bot\subseteq C^\bot_{\mathrm{cl}}$.
\end{lemma}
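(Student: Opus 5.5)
The lemma has three parts: $D_{\mathrm{cl}}$ is a complete lattice with the stated meet and join, $F_{\mathrm{cl}}$ is an isotone self-map, and the least fixed points compare as claimed. I would prove them in that order.

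\textbf{Lattice structure.} Take any family $\{C_i\}$ of closed subsets of $[0,1]$.
\begin{itemize}
\item[] Infimum: $\bigcap_i C_i$ is closed, because any intersection of closed sets is closed. It is contained in every $C_i$, and it contains every closed set lying below all $C_i$. So it is the infimum in $D_{\mathrm{cl}}$.
\item[] Supremum: $\cl{\bigcup_i C_i}$ is closed and contains each $C_i$. Any closed $E$ containing all $C_i$ contains their union, and hence its closure, since $E$ is closed. So this is the least closed upper bound.
\item[] Empty family: the conventions give top element $[0,1]$ and bottom element $\emptyset$, both of which are closed.
\end{itemize}
Alternatively, one can just observe that $D_{\mathrm{cl}}$ is a subset of the complete lattice $D$ of Lemma~\ref{lem:Dcomplete} that is closed under arbitrary intersections and contains $[0,1]$. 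Such a subset is automatically a complete lattice, and its join is the closure of the union.

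\textbf{Isotone self-map.} By construction $F_{\mathrm{cl}}(C)$ is a closure, so it is a closed subset of $[0,1]$ and $F_{\mathrm{cl}}$ maps $D_{\mathrm{cl}}$ into itself. If $C\subseteq E$, then $F(C)\subseteq F(E)$ by the isotonicity of $F$ from Theorem~\ref{thm:exist}. Closure is monotone, so $F_{\mathrm{cl}}(C)\subseteq F_{\mathrm{cl}}(E)$. Knaster--Tarski now gives the least fixed point
\[
C^\bot_{\mathrm{cl}}=\bigcap\{C\in D_{\mathrm{cl}}:F_{\mathrm{cl}}(C)\subseteq C\}.
\]

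\textbf{Comparison $C^\bot\subseteq C^\bot_{\mathrm{cl}}$.} This is the one step needing an idea, and it is small. Set $K=C^\bot_{\mathrm{cl}}$. Then
\[
F(K)\subseteq \cl{F(K)}=F_{\mathrm{cl}}(K)=K,
\]
so $K$ is a pre-fixed point of $F$ in $D$. Theorem~\ref{thm:exist} characterizes $C^\bot$ as the infimum of all pre-fixed points of $F$ in $D$, so $C^\bot\subseteq K$.

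The only point to watch is that this argument must use the pre-fixed-point characterization of $C^\bot$ over all of $D$. One should not try to argue directly that $C^\bot$ is closed, because Example~\ref{ex:union} suggests it need not be.
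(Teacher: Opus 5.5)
Your proof is correct and follows the paper's argument essentially step for step, including the key move of observing $F(C^\bot_{\mathrm{cl}})\subseteq\cl{F(C^\bot_{\mathrm{cl}})}=C^\bot_{\mathrm{cl}}$ and then invoking the pre-fixed-point characterisation of $C^\bot$ over all of $D$. One small correction to your closing aside, which does not affect the proof: Example~\ref{ex:union} only shows that the $\omega$-stage union need not be closed, and in that example $C^\bot=\{g^k(a):k\ge0\}\cup\{p^\star\}$ is actually closed, although your caution is still justified because other, discontinuous reactions can produce a non-closed $C^\bot$.
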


\paragraph{One pair of operators per answer lattice}
This is the first of several restrictions, so we fix the pattern once: every answer lattice $E$
below carries its own version of $G$ and \eqref{eq:F},
\begin{equation}\label{eq:FE}
\begin{gathered}
  G_E(C)=\textstyle\bigvee_E\bigl\{f(C)\bigr\},\\
  F_E(C)=\textstyle\bigvee_E\bigl\{\,G_E(C'):C'\subseteq C\,\bigr\},
\end{gathered}
\end{equation}
with joins taken in $E$, and $C^\bot_E$ denotes the least fixed point of $F_E$; associativity of
joins makes $F_E(C)$ equally the join of the reactions themselves.
For the powerset the join is the union, recovering $G(C)=\{f(C)\}$ and \eqref{eq:F}; for closed
sets it is the closure of the union, giving $F_{\mathrm{cl}}$, with $G_{\mathrm{cl}}=G$ since
singletons are already closed.
Later come closed intervals ($F_{\mathrm{int}}$, Section \ref{sec:simple}), closed convex sets
($F_{\mathrm{cvx}}$, Section \ref{sec:beyond}) and a finite family of polytopes ($F_k$, Section
\ref{sec:finite}); all are isotone for the same reason, and what changes is only how far the
join rounds the answer outwards.

The two lattices agree on everything finite, so Lemma \ref{lem:finite} is untouched: the stages
$C_i$ are finite, hence closed, and the closure in \eqref{eq:Fcl} does nothing below $\omega$: all of the topology sits in the
single step $C_\omega=\cl{\bigcup_{i<\omega}C_i}$.
What we give up is a little of Definition \ref{def:strong}: a fixed point of $F_{\mathrm{cl}}$
still cannot be falsified by any reading of it, since $f(C')\in C$ for every $C'\subseteq C$,
but the converse now holds only up to limits, in that a value the answer admits may be
approached by readings of it rather than produced by one exactly.

\begin{theorem}[Reachability in the limit]\label{thm:kleene}
If $f$ is continuous for the Hausdorff metric, then $C_\omega=C^\bot_{\mathrm{cl}}$.
\end{theorem}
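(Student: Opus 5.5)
The plan is to prove the two inclusions $C_\omega\subseteq C^\bot_{\mathrm{cl}}$ and $C^\bot_{\mathrm{cl}}\subseteq C_\omega$ separately. The second will come from showing that $C_\omega$ is a pre-fixed point of $F_{\mathrm{cl}}$, so that the characterisation $C^\bot_{\mathrm{cl}}=\inf\{C:F_{\mathrm{cl}}(C)\subseteq C\}$ from Knaster--Tarski (Lemma \ref{lem:Dcl}) applies. This way we never have to verify $C_\omega\subseteq F_{\mathrm{cl}}(C_\omega)$ directly.

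\emph{Lower inclusion.} We show by induction that $C_i\subseteq C^\bot_{\mathrm{cl}}$ for every $i<\omega$. The base case is trivial, since $C_0=\emptyset$. For the step, suppose $C_i\subseteq C^\bot_{\mathrm{cl}}$. Then
\[
C_{i+1}=F(C_i)\subseteq F(C^\bot_{\mathrm{cl}})\subseteq F_{\mathrm{cl}}(C^\bot_{\mathrm{cl}})=C^\bot_{\mathrm{cl}},
\]
using that $F$ is isotone and $F\subseteq F_{\mathrm{cl}}$ pointwise. Since $C^\bot_{\mathrm{cl}}$ is closed and contains $\bigcup_{i<\omega}C_i$, it contains the closure $C_\omega$.

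\emph{Pre-fixed point.} Since $C_\omega$ is closed, it suffices to show $f(C')\in C_\omega$ for every $C'\subseteq C_\omega$; taking closures then gives $F_{\mathrm{cl}}(C_\omega)\subseteq C_\omega$.
\begin{itemize}
\item If $C'=\emptyset$, then $f(\emptyset)\in C_1$.
\item Now let $C'\ne\emptyset$. The Hausdorff distance between $C'$ and $\cl{C'}$ is $0$, and a function continuous for a pseudometric is constant on distance-zero classes. Hence $f(C')=f(\cl{C'})$, and we may assume $C'$ is closed, so compact.
\item Fix $\varepsilon>0$ and choose a finite $\varepsilon$-net $\{x_1,\dots,x_m\}\subseteq C'$. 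Each $x_j$ lies in $C_\omega=\cl{\bigcup_i C_i}$, so we can pick $y_j\in\bigcup_i C_i$ with $|x_j-y_j|<\varepsilon$.
\item Because the $C_i$ form an increasing chain and there are only finitely many $y_j$, all of them lie in a single $C_{i}$. Put $S_\varepsilon=\{y_1,\dots,y_m\}\subseteq C_i$. Then $d_H(S_\varepsilon,C')\le 2\varepsilon$.
\item By definition of $F$, $f(S_\varepsilon)\in F(C_i)=C_{i+1}\subseteq C_\omega$.
\item Letting $\varepsilon\to0$, continuity of $f$ gives $f(S_\varepsilon)\to f(C')$. Since $C_\omega$ is closed, $f(C')\in C_\omega$.
\end{itemize}
Hence $C_\omega$ is a pre-fixed point, so $C^\bot_{\mathrm{cl}}\subseteq C_\omega$, and equality follows.

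\emph{Expected obstacle.} The delicate step is the reduction from arbitrary, possibly non-closed, sub-answers $C'$ to compact ones. On all of $D$ the Hausdorff metric is only a pseudometric, and the empty set sits at infinite distance from everything else. The plan handles these by treating $\emptyset$ separately and by using distance-zero invariance. If the paper intends $f$'s continuity only on closed sets, I would instead read the hypothesis as continuity on the pseudometric space of nonempty subsets, which is exactly what the argument needs. The compactness of $[0,1]$ is what makes the finite nets, and hence the finite stages $C_i$ of Lemma \ref{lem:finite}, sufficient. Example \ref{ex:union} shows that the final closure step cannot be dropped.
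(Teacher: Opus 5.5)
Your proof is correct and is essentially the paper's argument. The same induction puts every $C_i$, and hence $C_\omega$, inside the least fixed point, and the same compactness-plus-continuity step (approximating each nonempty $C'\subseteq C_\omega$ in $d_H$ by finite subsets of a single stage $C_i$, with $\emptyset$ handled separately) gives $F_{\mathrm{cl}}(C_\omega)\subseteq C_\omega$. The differences are cosmetic: you invoke the pre-fixed-point characterisation of $C^\bot_{\mathrm{cl}}$ instead of also checking $C_\omega\subseteq F_{\mathrm{cl}}(C_\omega)$, and you build the approximants from a finite $\varepsilon$-net of $C'$ rather than through the paper's Approximation Lemma, which truncates $C_i$ using $d_H(C_i,C_\omega)\to0$.
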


Without continuity the chain still climbs, and one simply keeps going transfinitely, with
$C_\lambda=\cl{\bigcup_{\alpha<\lambda}C_\alpha}$ at limits, until it stabilises at some
closure ordinal.
Continuity is precisely what collapses that ordinal back to $\omega$.

Allowing only closed sets is not the last restriction we shall make: closed sets buy termination but are still
unwieldy to compute with. In Sections \ref{sec:simple} and \ref{sec:beyond} we cut the lattice down to closed
{\em intervals} or {\em convex} sets, and in Section \ref{sec:finite} to a finite family, so that the iteration stops
after boundedly many steps.

\section{Variants for Simple Queries}\label{sec:simple}

\paragraph{Being even-handed}
$C^\bot$ need not contain all self-fulfilling point answers.
It might stay within the basin of attraction containing the baseline $a:=f(\emptyset)$ and never find the self-fulfilling values elsewhere.
Let $\SC:=\{p:f(\{p\})=p\}$ be the self-consistent singletons and $x:=\{a\}\cup\SC$.

\begin{proposition}\label{prop:cstar}
$x\subseteq F(x)$.
Hence the order interval $[x,[0,1]]$ is $F$-invariant and carries a least fixed point
$C^\star$, the smallest strongly self-consistent credal set that contains the baseline and
every self-consistent singleton.
Moreover $C^\bot\subseteq C^\star$.
On the closed lattice the same construction works with $x$ replaced by $\cl{x}$, except that
$C^\star_{\mathrm{cl}}$ is then strongly self-consistent only up to closure, in the sense of
Section \ref{sec:iteration}.
\end{proposition}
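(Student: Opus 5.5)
The plan is to verify $x\subseteq F(x)$ pointwise and then apply Knaster--Tarski on the order interval $[x,[0,1]]$ of $D$.

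\emph{Step 1: $x\subseteq F(x)$.} Recall $F(x)=\{f(C'):C'\subseteq x\}$. Take $C'=\emptyset\subseteq x$; this gives $a=f(\emptyset)\in F(x)$. For $p\in\SC$, we have $\{p\}\subseteq x$, so $p=f(\{p\})\in F(x)$. Hence $x\subseteq F(x)$.

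\emph{Step 2: invariance and existence of $C^\star$.} If $x\subseteq C$, then isotonicity (Theorem~\ref{thm:exist}) gives $x\subseteq F(x)\subseteq F(C)\subseteq[0,1]$. So $F$ maps $[x,[0,1]]$ into itself. This interval is a complete lattice: joins and meets of nonempty families are unions and intersections, which still contain $x$, and the empty family is handled by the endpoints. Knaster--Tarski therefore gives a least fixed point $C^\star$ of $F$ restricted to the interval. A fixed point of $F$ contains $x$ exactly when it lies in the interval. So $C^\star$ is the least fixed point of $F$ containing $x$, and by Theorem~\ref{thm:consistent} it is the smallest strongly self-consistent set containing $a$ and all of $\SC$.

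\emph{Step 3: $C^\bot\subseteq C^\star$.} $C^\star$ is a fixed point of $F$ on all of $D$, so $F(C^\star)\subseteq C^\star$. Since $C^\bot=\inf\{C:F(C)\subseteq C\}$, this gives $C^\bot\subseteq C^\star$.

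\emph{Step 4: the closed case.} Repeat Steps 1--3 in $D_{\mathrm{cl}}$ with $F_{\mathrm{cl}}$, using Lemma~\ref{lem:Dcl} for completeness and isotonicity. From $x\subseteq F(x)$ and $\cl{F(x)}=F_{\mathrm{cl}}(\cl{x})$-type monotonicity, we get $\cl{x}\subseteq\cl{F(x)}\subseteq F_{\mathrm{cl}}(\cl{x})$, where the last inclusion holds because $x\subseteq\cl{x}$ and $F$ is isotone. Invariance of $[\cl{x},[0,1]]$ in $D_{\mathrm{cl}}$ follows as in Step 2. Joins are closures of unions, meets are intersections, and both still contain $\cl{x}$. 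Knaster--Tarski then gives $C^\star_{\mathrm{cl}}$, and $C^\bot_{\mathrm{cl}}\subseteq C^\star_{\mathrm{cl}}$ follows as in Step 3.

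For the qualifier ``strongly self-consistent only up to closure'', use $C^\star_{\mathrm{cl}}=\cl{F(C^\star_{\mathrm{cl}})}$. This gives $f(C')\in C^\star_{\mathrm{cl}}$ for all $C'\subseteq C^\star_{\mathrm{cl}}$, and it makes every admitted value a limit of reactions, exactly as discussed in Section~\ref{sec:iteration}.

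The only point needing care is Step 4's inclusion $\cl{x}\subseteq F_{\mathrm{cl}}(\cl{x})$. The limit points of $\SC$ need not be self-consistent, so we cannot argue pointwise as in Step 1. Instead we must pass through $\cl{F(x)}$ using closure monotonicity, as above.
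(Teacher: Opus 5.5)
Your proposal is correct and follows the paper's proof essentially step for step: the pointwise check via $C'=\emptyset$ and $C'=\{p\}$, $F$-invariance of the order interval by isotonicity followed by Knaster--Tarski, $C^\bot\subseteq C^\star$ by leastness, and in the closed case passing to the closure because $F_{\mathrm{cl}}(\cl{x})$ is closed. One wording slip: the phrase ``$\cl{F(x)}=F_{\mathrm{cl}}(\cl{x})$-type monotonicity'' suggests an equality, which need not hold; what holds in general is the inclusion $\cl{F(x)}\subseteq F_{\mathrm{cl}}(\cl{x})$, and that inclusion is all your chain actually uses.
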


Whether to report $C^\bot$ or $C^\star$ is a real modelling decision, between saying as little as
possible and treating all equilibria alike.
Both are fixed by a rule, so both are agency-neutral.

\paragraph{A worked example}
To see that this distinction really bites, let us make the startup story concrete.
Let the singleton reaction be $g(p)=f(\{p\})=p+h(p)$ with
$h(p)=(p-\tfrac1{10})(p-\tfrac8{10})(p-\tfrac9{10})(\alpha p+\beta)$,
$\alpha=-\tfrac{25}{12}$ and $\beta=-\tfrac{25}{36}$.
The three explicit roots put the diagonal crossings at $0.1$, $0.8$ and $0.9$, and the linear
factor makes $g$ increasing on $[0,1]$ with range $[0.05,0.95]$.
Numerically $g'(0.1)\approx0.49$, $g'(0.8)\approx1.17$ and $g'(0.9)\approx0.79$, so
$\SC=\{0.1,0.8,0.9\}$, with $0.1$ and $0.9$ stable and $0.8$ the unstable threshold between
their basins.
For sets, let the founder react to the range of $C$ but be unsettled by ambiguity, which drags
the outcome towards a neutral baseline: with midpoint $m$ and width $w$ of
$[\min C,\max C]$, put
\[
  f(C)=(1-\lambda w)\,g(m)+\lambda w\,b,\qquad b=0.4,\ \lambda=0.6,
\]
and $f(\emptyset)=b$, so the anchor is $a=0.4$.
Singletons give $f(\{p\})=g(p)$ as they should, and $f([0.1,0.9])\approx0.41$: told the whole
span of equilibria, the founder ends up somewhere unremarkable near the baseline.

Discretising $[0,1]$ and iterating $F$ up from $\emptyset$---on a finite grid every subset is
finite, hence closed, so $F$ and $F_{\mathrm{cl}}$ coincide---gives $C^\bot=[0.10,\,c]\cup T$ with
$c\approx0.26$, where $T$ is a thin scatter of isolated points along the descending transient
$0.40, 0.31, 0.27,\dots$
So $C^\bot$ is a filled lower basin with a discrete tail attached, and not an interval; the gaps
multiply rather than disappear as we refine the grid, so they are real.
It contains $0.1$ but neither $0.8$ nor $0.9$.
The reason is simply that the baseline $0.4$ lies below the unstable threshold, so the orbit
starting from the empty answer drains into the lower basin and never discovers the optimistic
equilibria.
Iterating instead from $\{a\}\cup\SC$ gives the clean interval $C^\star=[0.10,\,0.90]$,
the full span of the self-consistent singletons, with $C^\bot\subseteq C^\star$ as guaranteed.
The example shows both what is good and what is awkward about the least-fixed-point rule.
It is minimal, but it can come out fractured, and it can be captured by whichever basin the
baseline happens to fall into.
$C^\star$ fixes the latter, at the price of a wider answer.
The reason for the fracturing is that this $f$ violates the no-overshoot condition
\eqref{eq:c1} below: the ambiguity pull drags the reaction outside the local range of $g$ for
narrow announcements, and that overshoot is exactly what grows the transient tail.

\paragraph{Restricting to intervals: from baseline to equilibrium}
The worked example gave a fractured $C^\bot$, which is awkward to report and awkward to compute
with, so let us simply {\em require} the answer to be an interval.
The closed subintervals of $[0,1]$ form a complete lattice, with intersection as meet and convex
hull of the union as join; write $F_{\mathrm{int}}$ for the resulting operator \eqref{eq:FE} and
$C^\bot_{\mathrm{int}}$ for its least fixed point.
On this much smaller lattice the whole construction collapses to iterating a single real-valued
map, and the result has a reading in plain words.
Write $g(p):=f(\{p\})$ for the singleton reaction, and consider the following no-overshoot
condition:
\begin{equation}\label{eq:c1}
\begin{gathered}
  f(C) \in \textstyle \left[\inf_{p\in C}g(p),  \sup_{p\in C}g(p)\right]
  \quad\forall C\neq\emptyset.
\end{gathered}\tag{C1}
\end{equation}
I.e., announcing a spread of values never provokes a reaction outside the range of
reactions to the individual values in it.

\begin{theorem}[The answer is an anchor-to-equilibrium interval]\label{thm:interval}
Let $g$ be continuous and nondecreasing and let \eqref{eq:c1} hold.
Let $a=f(\emptyset)$ and let $p^\star=\lim_{k\to\infty}g^k(a)$, the stable fixed point of $g$ in
whose basin the baseline lies.
Then
\[
  C^\bot_{\mathrm{int}}=\bigl[\min(a, p^\star), \max(a,p^\star)\bigr].
\]
\end{theorem}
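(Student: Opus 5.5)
We work throughout on the lattice of closed subintervals of $[0,1]$, with the empty set as bottom, intersection as meet and closed convex hull of the union as join. By \eqref{eq:FE},
\[
  F_{\mathrm{int}}(C)=\cl{\conv}\{f(C'):C'\subseteq C\},
\]
where $C'$ ranges over sub-answers of $C$. We allow $C'$ to be any subset of $C$ (or only sub-intervals); the argument below works for either reading, because \eqref{eq:c1} is stated for all nonempty $C$. By the symmetry $p\mapsto 1-p$ noted under ``value-symmetric'' (and because the hypotheses are invariant under it), we may assume $g(a)\ge a$; the case $g(a)<a$ is the mirror image. The plan is to show directly that $I:=[a,p^\star]$ is a fixed point of $F_{\mathrm{int}}$, and that every fixed point contains $I$. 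Together these give $C^\bot_{\mathrm{int}}=I$.

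\emph{Step 1: the scalar orbit.} Since $g$ is nondecreasing and $a\le g(a)$, induction gives $g^k(a)\le g^{k+1}(a)$. The orbit is therefore monotone and bounded in $[0,1]$, so it converges to some $p^\star\ge a$. Continuity of $g$ gives $g(p^\star)=p^\star$. We also record the bound $g^k(a)\le p^\star$ for all $k$, and that $g$ maps $[a,p^\star]$ into $[g(a),p^\star]\subseteq[a,p^\star]$ by monotonicity.

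\emph{Step 2: $I$ is a fixed point.} For the inclusion $F_{\mathrm{int}}(I)\subseteq I$, take any $C'\subseteq I$.
\begin{itemize}
\item If $C'=\emptyset$, then $f(C')=a\in I$.
\item Otherwise \eqref{eq:c1} gives $f(C')\in[\inf_{C'}g,\sup_{C'}g]\subseteq[g(a),g(p^\star)]\subseteq[a,p^\star]$.
\end{itemize}
Since $I$ is a closed interval, it contains the join of all these values. For the reverse inclusion, note that $a=f(\emptyset)$ and $p^\star=g(p^\star)=f(\{p^\star\})$ both lie in $F_{\mathrm{int}}(I)$. Taking the convex hull then gives $I\subseteq F_{\mathrm{int}}(I)$.

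\emph{Step 3: minimality.} Let $C$ be any fixed point of $F_{\mathrm{int}}$.
\begin{itemize}
\item Since $\emptyset\subseteq C$, we get $a=f(\emptyset)\in C$.
\item Inductively, if $g^k(a)\in C$, then $\{g^k(a)\}\subseteq C$ is a sub-answer, so $g^{k+1}(a)=f(\{g^k(a)\})\in F_{\mathrm{int}}(C)=C$.
\item Because $C$ is closed, it contains the limit $p^\star$.
\item Because $C$ is an interval, it contains $[a,p^\star]$.
\end{itemize}
Hence $I\subseteq C$ for every fixed point $C$, and combined with Step 2, $I$ is the least fixed point. Equivalently, we could check that $I$ lies below every pre-fixed point $F_{\mathrm{int}}(C)\subseteq C$ and invoke the characterisation in Theorem~\ref{thm:exist}; the same induction works there verbatim.

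The main obstacle I anticipate is Step 2. It is the only place where \eqref{eq:c1} is needed, and the worked example shows its failure produces overshoot beyond the orbit. We must also check that the interval join, which rounds outward, cannot push $F_{\mathrm{int}}(I)$ past $I$. This holds because all reactions already lie in the closed convex set $I$. A minor bookkeeping point is that ``the stable fixed point in whose basin the baseline lies'' is simply taken to mean the limit of the orbit from Step 1, so no stability analysis is required.
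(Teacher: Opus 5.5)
Your proof is correct and follows the paper's argument: both show that $[a,p^\star]$ is a fixed point of $F_{\mathrm{int}}$, and both prove leastness by the same induction, in which every fixed interval contains $a$, then the whole orbit $g^k(a)$ via singleton sub-answers, then $p^\star$ by closedness. The one difference is that the paper first derives the closed form $F_{\mathrm{int}}([\ell,u])=[a\wedge g(\ell),\,a\vee g(u)]$ and tracks the chain $I_k=\conv\{a,\dots,g^k(a)\}$, which also justifies the stopping rule for the scalar orbit. You instead check the two inclusions for the candidate directly from (C1), which is slightly more economical but does not yield that computational by-product.
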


So under these hypotheses the answer reads off very simply: it runs from the no-information
baseline to the self-fulfilling equilibrium that a querier who starts there will actually end
up at.
It is computed by the scalar orbit $x_0=a$, $x_{k+1}=g(x_k)$, which one may stop as soon as
$x_{k+1}\in\conv\{x_0,\dots,x_k\}$; no set ever has to be represented explicitly.
Leaving out the equilibria in other basins is right rather than an oversight, since a querier
honestly starting at $a$ will not reach them.
That is also why this is $C^\bot_{\mathrm{int}}$ and not $C^\star$: a monotone orbit cannot
cross the unstable fixed point that separates the basins.
Note that the theorem is about the interval lattice.
On the full lattice of closed sets the same hypotheses give only the orbit together with its
limit, a discrete set; it is the convex hull in the join that fills the gaps.
Note also that the worked example above satisfies neither hypothesis of the theorem in full:
its ambiguity pull overshoots the local range of $g$ for narrow announcements, violating
\eqref{eq:c1}, and that overshoot is exactly what grows the transient tail.

One might instead want to report the interval up to the {\em nearest} stable fixed point.
That sounds better but is wrong.
It need not be self-consistent, since announcing it can push the querier into the other
equilibrium; it names an endpoint the dynamics never reach; its discontinuity sits at the
midpoint between the two equilibria, which means nothing dynamically, instead of at the genuine
bifurcation; and choosing it optimises some external objective, which is exactly the discretion
we set out to remove.

\section{Beyond Simple Queries}\label{sec:beyond}

Nothing so far used the structure of $[0,1]$ beyond its being a compact metric space, and
nothing used the answer being the probability of an event. So we can generalize:

\paragraph{Arbitrary random variables}
Replace $B$ by an arbitrary random variable $X$ and $P(B\mid A,C)$ by the conditional law
$\law(X\mid A,C)$, so that credal sets are now closed subsets of the space
$\mathcal P(\mathcal X)$ of laws.
Lemma \ref{lem:Dcomplete}, Theorems \ref{thm:exist} and \ref{thm:consistent}, and Propositions
\ref{prop:collapse} and \ref{prop:cstar} all go through word for word.

\paragraph{Convex credal sets}
It is natural to restrict further to closed {\em convex} sets, a credal set being convex by
definition anyway.
These again form a complete lattice, with intersection as meet and closed convex hull as join,
so Knaster--Tarski still applies, and convexity sweeps away the isolated points the full
closed-set lattice can produce.
Here \eqref{eq:FE} reads $F_{\mathrm{cvx}}(C)=\cl{\conv}\{f(C'):C'\subseteq C\}$, with least
fixed point $C^\bot_{\mathrm{cvx}}$.

Things become explicit in the affine case.
If $f(C)$ is an affine function of the barycenter of $C$, $f(C)=T(\bary\,C)$, then the barycentres of subsets fill $C$, so
$F_{\mathrm{cvx}}(C)=\cl{\conv}(T(C)\cup\{a\})$ and
\begin{equation}\label{eq:orbithull}
  C^\bot_{\mathrm{cvx}}=\cl{\conv}\{\,T^n a: n\ge 0\,\},
\end{equation}
the closed convex hull of the anchor's orbit.
Its shape is decided by the spectrum of $T$'s linear part (Figure \ref{fig:simplex}).
A complex pair makes the orbit spiral into the stationary law, a negative eigenvalue makes it
zig-zag across it, and a positive real spectrum makes it creep in monotonically along a thin
lens, which degenerates to the segment from baseline to equilibrium and so is the direct
analogue of the binary interval.
If the orbit ever lands inside the convex hull of its predecessors it stays there, by
affineness, and $C^\bot_{\mathrm{cvx}}$ is then a polytope; a purely positive real spectrum never encircles the
limit, so in that case $C^\bot_{\mathrm{cvx}}$ has infinitely many extreme points.

\begin{figure}[t]\centering
\includegraphics[width=\linewidth]{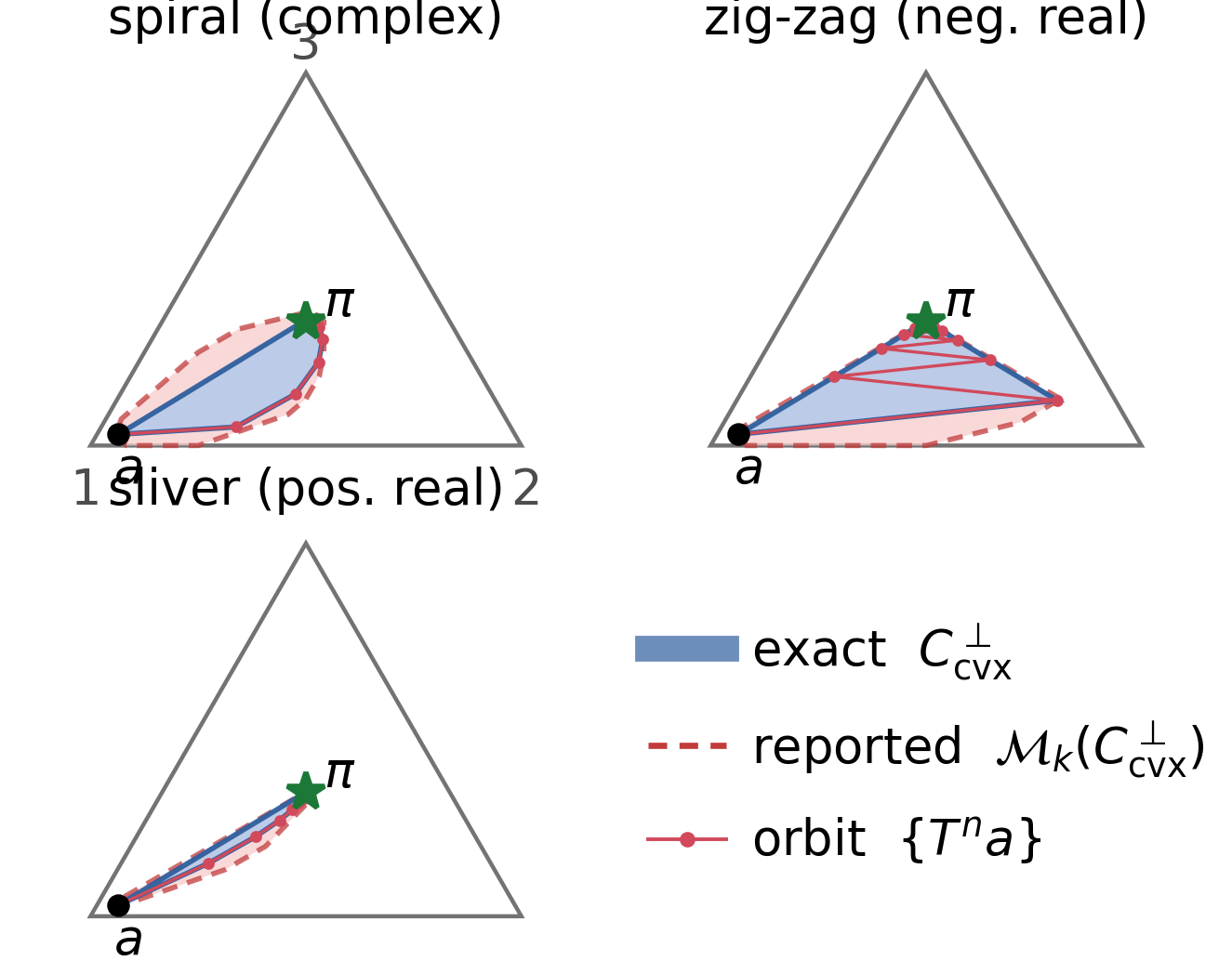}
\caption{A three-valued $X$ with an affine reaction; simplex vertices $X=1,2,3$ are marked in the
first panel. $C^\bot_{\mathrm{cvx}}$ is the closed convex hull of the anchor's orbit. Its shape is decided
by the spectrum of $T$'s linear part: $1,\,0.52\pm0.25i$ spirals in, $1,-0.70,0.70$ zig-zags
across the stationary law $\pi$, and $1,\,0.63,\,0.52$ creeps in along a thin lens. Dashed red is
the finite-family hull of Section \ref{sec:finite} we would report, here at a deliberately coarse $k=4$, so the two can
be told apart.}
\label{fig:simplex}
\end{figure}

\paragraph{Arbitrary statistics}
Most generally, let $Y:\mathcal P(\mathcal X)\to\mathbb R^d$ be some statistic of the law of $X$---a
mean, a covariance, a vector of moments or quantiles---, where $\mathcal P(\mathcal X)$ is the space of probability measures on the value set of $X$, and have the oracle report a credal set of values of $Y$, reacting by $f(C)=Y(\law(X\mid A,C))$.
Existence needs nothing at all here: each of these lattices is complete and its operator
\eqref{eq:FE} isotone by construction, which is all Knaster--Tarski consumes, so no assumption on
$Y$ is used.

\begin{proposition}\label{prop:general}
If $C\mapsto\law(X\mid A,C)$ is continuous and $Y$ is bounded and continuous, then on
both \textup{(a)} all closed and \textup{(b)} all closed convex subsets of
$K:=\cl{Y(\mathcal P(\mathcal X))}$, the operators $F_{\mathrm{cl}}$ and $F_{\mathrm{cvx}}$ are
$\omega$-Scott-continuous and their least fixed points are reached at stage $\omega$.
\end{proposition}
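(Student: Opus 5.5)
The plan is to run the same argument as in Theorem~\ref{thm:kleene}, with $[0,1]$ replaced by the compact set $K$ and the Hausdorff metric taken on closed subsets of $K$.

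\textbf{Step 1: compactness of $K$.} First I make sure $K$ is compact.
\begin{itemize}
\item $Y$ is bounded, so $K=\cl{Y(\mathcal P(\mathcal X))}$ is a closed bounded subset of $\mathbb R^d$, hence compact.
\item The closed subsets of $K$ then form a compact metric space $\mathcal K(K)$ under the Hausdorff metric $d_H$ (Blaschke selection).
\item The closed convex subsets form a closed subspace of $\mathcal K(K)$, because Hausdorff limits of convex sets are convex.
\end{itemize}

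\textbf{Step 2: continuity of the reaction.} The reaction $f(C)=Y(\law(X\mid A,C))$ is a composition of continuous maps. Its first factor is continuous in $C$ for the Hausdorff metric; I read the hypothesis this way. So $f$ is continuous on $\mathcal K(K)$, with $f(\emptyset)$ treated as an isolated point. Compactness of $\mathcal K(K)$ then makes $f$ uniformly continuous.

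\textbf{Step 3: $\omega$-Scott-continuity.} Take an increasing chain $C_0\subseteq C_1\subseteq\cdots$ of closed (respectively closed convex) sets. Its join is $C_\omega=\cl{\bigcup_i C_i}$ (respectively the same closure, which is already convex as the closure of an increasing union of convex sets). Isotonicity gives $\bigvee_i F(C_i)\subseteq F(C_\omega)$ at once. For the reverse inclusion, since $F(C_\omega)$ is the closure (or closed convex hull) of $\{f(C'):C'\subseteq C_\omega\}$, it suffices to show that each $f(C')$ lies in $\cl{\bigcup_i F(C_i)}$.

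The key step is this: for a closed $C'\subseteq C_\omega$ and $\varepsilon>0$, I want an index $i$ and a closed $C''\subseteq C_i$ with $d_H(C',C'')<\delta$, where $\delta$ comes from uniform continuity.
\begin{itemize}
\item By compactness, cover $C'$ by finitely many $\delta/2$-balls centred at points $q_1,\dots,q_n\in C'$.
\item Each $q_j$ lies in $\cl{\bigcup_i C_i}$, so it has a $\delta/2$-close point $r_j$ in some $C_{i_j}$.
\item Let $i=\max_j i_j$ and $C''=\{r_1,\dots,r_n\}$, which is finite, hence closed, and is contained in $C_i$.
\item Then $d_H(C',C'')<\delta$, so $|f(C')-f(C'')|<\varepsilon$, and $f(C'')\in F(C_i)$.
\end{itemize}
For the convex lattice the candidate sub-answer $C''$ need not be convex. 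I plan to handle this by letting sub-answers $C'$ range over arbitrary (or closed) subsets, exactly as in \eqref{eq:FE}. Should only convex sub-answers be permitted, I would instead take $C''=\conv\{r_j\}$, which lies in the convex $C_i$.

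Non-closed sub-answers $C'$ are a second technical issue, and I expect this step to be the main obstacle. I would reduce them to $\cl{C'}$ by assuming that $f$ depends on $C'$ only through its closure, which is implicit in Hausdorff continuity as a pseudometric. The step therefore hinges on making precise on which class of sub-answers $f$ is continuous.

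\textbf{Step 4: conclusion.} Once $\omega$-Scott-continuity holds, Kleene's fixed-point theorem gives that the iterates from $\emptyset$ reach the least fixed point at stage $\omega$, as in Theorem~\ref{thm:kleene}. Concretely:
\begin{itemize}
\item the join $C_\omega$ of the chain $F^n(\emptyset)$ satisfies $F(C_\omega)=\bigvee_n F^{n+1}(\emptyset)=C_\omega$;
\item it lies below every pre-fixed point, by induction on $n$.
\end{itemize}
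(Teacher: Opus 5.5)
Your proof is correct. For case (a) it follows essentially the paper's route. The paper defers to Theorem~\ref{thm:kleene}, whose key input is Lemma~\ref{alem:approx}: it approximates a sub-answer $C'$ of the supremum by the thickened truncations $\{x\in C_i: d(x,C')\le\delta_i\}$, with $\delta_i=d_H(C_i,C_\omega)$. You instead push a finite $\delta/2$-net of $C'$ into some $C_i$, and both arguments then finish with continuity of $f$ and closedness of the join. That lemma is stated only for the chain starting from $\emptyset$. Your version works for an arbitrary increasing chain, so it is actually the closer match to the $\omega$-Scott-continuity the statement asserts.

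The real difference is in case (b), and there your route is the one that works. The paper approximates a closed convex $C'\subseteq C_\infty$ by the plain truncations $C'\cap C_i$. These stay convex, but they need not converge to $C'$. Take $K=[0,1]$, $C_i=[0,1-1/i]$ and $C'=\{1\}$: every $C'\cap C_i$ is empty, so $f(C'\cap C_i)=a$ says nothing about $f(\{1\})$. Your approximant $\conv\{r_1,\dots,r_n\}$ avoids this, because it is built from points of the $C_i$ near $C'$ rather than from points of $C'$ inside the $C_i$. It lies in $C_i$ because $C_i$ is convex. You should add the one line that gives $d_H(C',\conv\{r_j\})\le\delta$: each $\sum_j\lambda_j r_j$ is within $\delta/2$ of $\sum_j\lambda_j q_j$, and that point lies in $C'$ exactly because $C'$ is convex. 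The thickening from Lemma~\ref{alem:approx} would also repair the paper's step, since it intersects the convex $C_i$ with a sublevel set of the convex function $d(\cdot,C')$.

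Finally, what you call the main obstacle is not one. A finite $\delta/2$-net of $C'$ with centres in $C'$ exists by total boundedness whether or not $C'$ is closed. So your key step already covers arbitrary sub-answers, as Lemma~\ref{alem:approx} does. Likewise, uniform continuity is unnecessary: continuity of $f$ at the fixed $C'$ already supplies $\delta$.
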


Boundedness makes $K$ compact, continuity makes $f$ continuous, and the approximation argument
survives because closed convex sets are closed under intersection.
If $Y$ is unbounded we adjoin the ambient space as a top element.
Existence survives that, but chains need no longer converge, and then, just as when $Y$ is
merely discontinuous, reaching the least fixed point may take transfinite iteration.

\section{Finite Approximation}\label{sec:finite}

Everything so far has been exact and infinite.
Even the interval lattice of Section \ref{sec:simple} has continuum many members, and
Theorem \ref{thm:kleene} only promises convergence after $\omega$ steps.
For an oracle that has to return an answer, we want a lattice that is finite, so that the
iteration stops, while still being fine enough that the answer is not thrown away.

\paragraph{The motivating case}
For $d=1$ the obvious candidate is fixed-precision intervals: fix $n$ and let
$\mathcal M^{(n)}$ consist of $\emptyset$ together with all $[\ell,u]\subseteq[0,1]$ whose
endpoints lie in $\tfrac1n\mathbb Z$.
This family is finite, contains $[0,1]$, and is closed under intersection, so it induces a
closure operator $\mathcal M^{(n)}(C)$, namely rounding the endpoints of $C$ outwards; and
$d_H(C,\mathcal M^{(n)}(C))\le 1/n$ for every interval $C$.
Reporting to two decimal places is exactly this.

\paragraph{The general case}
We want the same in higher dimensions, where answers are convex subsets of a cube
$[0,1]^d$, and the difficulty is that the obvious generalisation fails.
Polytopes with vertices on a grid are {\em not} closed under intersection, since two
grid-vertex simplices generally meet in a polytope with off-grid vertices.
One could close that family under intersections, but our solution is simpler.
We fix the facet normals and offsets instead of the vertices.
We believe neither the construction nor the resulting bound is new, but were unable to find the exact statements in the literature, so we state our own version here. 
Let
\[
  Z_k=\{-k,\dots,k\}^d\setminus\{0\},\qquad \Gamma_k=\tfrac1k\mathbb Z\cap[-dk,dk],
\]
let $H_{z,c}=\{x\in[0,1]^d:\langle z,x\rangle\le c\}$ for $z\in Z_k$ and $c\in \Gamma_k$, and let
$\mathcal M_k$ be the set of all intersections of subfamilies of these.
Fixing the normals in advance is the {\em template polyhedra} idea of
\citet{sankaranarayanan2005,sankaranarayanan2008}; discretising the offsets to a grid is what
makes the family finite.

\begin{lemma}\label{lem:moore}
$\mathcal M_k$ is a finite family of polytopes containing $\emptyset$ and $[0,1]^d$ and closed
under intersection, hence a complete lattice under inclusion, with intersection as meet.
Its closure operator is computed from support functions,
\[
  \mathcal M_k(C)=\textstyle\bigcap_{z\in Z_k}H_{z,c_z},~
  c_z=\min\{c\in \Gamma_k: c\ge h_C(z)\},
\]
where $h_C(z)=\sup_{y\in C}\langle z,y\rangle$, so $\mathcal M_k$ itself is never enumerated.
For $d=1$ it contains every fixed-precision interval family $\mathcal M^{(n)}$ with $n\le k$.
\end{lemma}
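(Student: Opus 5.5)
The plan is to verify the claims of Lemma \ref{lem:moore} one at a time, in the order stated.

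\emph{Finiteness and extreme members.} Since $Z_k$ and $\Gamma_k$ are finite, there are finitely many half-spaces $H_{z,c}$. Hence there are finitely many subfamilies, and so finitely many intersections. Each intersection is a finite intersection of half-spaces with the cube, hence a (possibly empty) polytope. The empty subfamily gives $[0,1]^d$, using the convention that the empty intersection is the ambient cube. For $\emptyset$, take $z=(1,0,\dots,0)$ and $c=-1\in\Gamma_k$: then $H_{z,-1}=\{x\in[0,1]^d: x_1\le -1\}=\emptyset$. Closure under intersection holds by construction, since the intersection of two intersections of subfamilies is the intersection of their union. A finite family that is closed under arbitrary intersections and contains the top element is a Moore family, hence a complete lattice with meet given by intersection. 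This is the same reasoning as in Lemma \ref{lem:Dcl}.

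\emph{Closure operator formula.} The closure of $C$ is the smallest member of $\mathcal M_k$ containing $C$, namely the intersection of all $H_{z,c}\supseteq C$. For fixed $z$, we have $C\subseteq H_{z,c}$ iff $h_C(z)\le c$. Because $c$ ranges over the grid $\Gamma_k$, the smallest admissible half-space for that normal is the one with offset $c_z$; every larger admissible offset gives a superset, which is redundant in the intersection. So the intersection over all admissible half-spaces reduces to $\bigcap_z H_{z,c_z}$. One must check that $c_z$ exists. Since $C\subseteq[0,1]^d$ and $z\in\{-k,\dots,k\}^d$, we get $|h_C(z)|\le dk$, so $dk\in\Gamma_k$ always qualifies. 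If $C=\emptyset$, then $h_C\equiv-\infty$ and $c_z=-dk$. This gives a set contained in $\emptyset$ at least for the coordinate normal, via $x_1\le -dk<0$ when $k\ge1$. I expect this bookkeeping at the empty set to be the only fiddly point; otherwise this step is the standard duality between half-spaces and support functions.

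\emph{The $d=1$ embedding.} For $d=1$, every interval $[\ell,u]$ with $\ell,u\in\tfrac1n\mathbb Z\cap[0,1]$ should be $H_{1,u}\cap H_{-1,-\ell}$. This needs $u$ and $-\ell$ to lie in $\Gamma_k$, that is, $\tfrac1n\mathbb Z\cap[-1,1]\subseteq\tfrac1k\mathbb Z$. That inclusion holds when $n$ divides $k$ (for instance $n=k$), but not for arbitrary $n\le k$. The realistic route is therefore to use non-unit normals: $[\ell,u]=\{x: nx\le nu\}\cap\{x:-nx\le -n\ell\}$ with $z=\pm n\in Z_k$ since $n\le k$. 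The offsets $\pm nu$ and $\pm n\ell$ are integers in $[-k,k]\subseteq\Gamma_k$. This gives every member of $\mathcal M^{(n)}$ for each $n\le k$, and is where the statement's ``$n\le k$'' is actually used. The empty interval is already covered by the first step.
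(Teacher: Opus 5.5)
Your treatment of finiteness, the two extreme members, closure under intersection, the complete-lattice structure and the support-function formula for $\mathcal M_k(C)$ is essentially the paper's argument. The only difference there is immaterial: the paper empties the cube with $z=e_1$, $c=-dk$ instead of your $c=-1$.

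You genuinely depart from the paper on the $d=1$ clause, and your route is the correct one. The paper takes normals $z=\pm1$ and asserts $\tfrac1k\mathbb Z\supseteq\tfrac1n\mathbb Z$ for $n\le k$. That inclusion is false unless $n$ divides $k$: for $k=3$, $n=2$ the endpoint $\tfrac12$ is not in $\tfrac13\mathbb Z$, so $[0,\tfrac12]$ is not produced that way. You noticed exactly this and repaired it with normals $z=\pm n\in Z_k$ and integer offsets $nu,\,-n\ell\in\{-k,\dots,k\}\subseteq\Gamma_k$. The repair is correct, and it is where the hypothesis $n\le k$ is actually used. It also matches the paper's own later remark that non-unit normals such as $z=2$ make the family finer than $\mathcal M^{(k)}$.

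Your proof is complete, and on this point it is more careful than the paper's.
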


\begin{theorem}[Uniform approximation]\label{thm:approx}
For every convex $C\subseteq[0,1]^d$,
$d_H\bigl(C,\mathcal M_k(C)\bigr)\le 3d/k$.
Hence for $k\ge 3d/\varepsilon$ one fixed finite family is within $\varepsilon$ of {\em every}
convex answer.
\end{theorem}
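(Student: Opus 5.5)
Since $C\subseteq\mathcal M_k(C)$ by Lemma \ref{lem:moore}, only one half of the Hausdorff distance needs work. I plan to show that every $x\in\mathcal M_k(C)$ satisfies $\mathrm{dist}(x,\cl C)\le 3d/k$. The idea is to take the direction that separates $x$ from $\cl C$, round it to an integer template normal $z\in Z_k$, and use the fact that the offset $c_z$ exceeds the support value $h_C(z)$ by at most the grid spacing $1/k$.

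First I dispose of the degenerate cases. If $C=\emptyset$, then $h_C\equiv-\infty$, every $c_z$ is the minimum of $\Gamma_k$, and $\mathcal M_k(C)$ is the intersection of all templates, which should be $\emptyset$; I would check this with $z=\pm e_1$ and offset $-d$. If $k\le 3\sqrt d$, the bound is trivial, because $\operatorname{diam}[0,1]^d=\sqrt d\le 3d/k$. So assume $C\neq\emptyset$ and $k>3\sqrt d$.

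Now fix $x\in\mathcal M_k(C)$ with $r:=\mathrm{dist}(x,\cl C)>0$. Since $\cl C$ is closed and convex, the separation theorem gives a Euclidean unit vector $u$ with $\langle u,x\rangle-h_C(u)=r$, where $u$ points from the projection of $x$ onto $\cl C$ towards $x$. Put $z:=\mathrm{round}(ku)$, rounded componentwise. Then $z\in\{-k,\dots,k\}^d$ and $\|z-ku\|_\infty\le\tfrac12$. Moreover $z\neq0$, because $\|ku\|_\infty\ge k/\sqrt d>3>\tfrac12$, so $z\in Z_k$.

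Next I compare the two support functions. On one side, $\langle z,x\rangle=k\langle u,x\rangle+\langle z-ku,x\rangle$. On the other, $h_C(z)\le k\,h_C(u)+\sup_{y\in C}\langle z-ku,y\rangle$. Subtracting,
\[
\langle z,x\rangle-h_C(z)\;\ge\;kr-\sup_{y\in C}\langle z-ku,\,y-x\rangle\;\ge\;kr-\tfrac12\|y-x\|_1\;\ge\;kr-\tfrac d2 ,
\]
using $x,y\in[0,1]^d$.

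For the offset, note that $|\langle z,y\rangle|\le kd$ on the cube, so $h_C(z)\in[-dk,dk]$ and $c_z$ exists in $\Gamma_k$ with $c_z\le h_C(z)+1/k$. Since $x\in H_{z,c_z}$, we have $\langle z,x\rangle\le c_z$. Combining the two inequalities gives $kr-d/2\le 1/k$, hence
\[
r\le \frac{d}{2k}+\frac1{k^2}\le\frac{3d}{k}.
\]
The ``hence'' part of the statement then follows by taking $k\ge 3d/\varepsilon$.

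The main obstacle is the rounding step. It has to produce a normal that is nonzero and lies in $Z_k$, and the resulting error must be controlled through the $\ell_1$ diameter of the cube rather than the Euclidean one; that is where the factor $d$ comes from. The remaining checks, namely that $c_z$ lies inside the range of $\Gamma_k$ and the empty-set convention, are bookkeeping.
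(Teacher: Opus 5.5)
Your proof is correct and follows the same route as the paper. You separate $x$ from $\cl C$ by the nearest-point direction, round $k$ times that direction to an integer normal $z\in Z_k$, and use that $x$ satisfies the template inequality whose offset exceeds the support value by at most one grid step. The difference is only in bookkeeping: because you do not normalise $z$ and bound $\langle z-ku,y-x\rangle$ by H\"older in one step, you get the sharper $d/(2k)+1/k^2$ instead of the paper's $(2d+\sqrt d)/k$, and your explicit treatment of small $k$ also covers the case $k\le\sqrt d/2$, which the paper's argument that $z\neq0$ simply assumes away.
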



\paragraph{The operator}
The lattice approximated here is the closed convex one of Section \ref{sec:beyond}, whose members
are convex as Theorem \ref{thm:approx} requires, so the finite construction tracks
$F_{\mathrm{cvx}}$ and $C^\bot_{\mathrm{cvx}}$.
Write $G_k(C):=\mathcal M_k(\{f(C)\})$ for the smallest member of the family containing the
single reaction to $C$: this is $G_E$ of \eqref{eq:FE} for $E=\mathcal M_k$, standing to $G$ as
$F_k$ stands to $F$.
By Lemma \ref{lem:moore} and $h_{\{y\}}(z)=\langle z,y\rangle$ this is explicit:
\begin{equation}\label{eq:Gk}
  G_k(C)=\textstyle\bigcap_{z\in Z_k}\left\{x
  :\langle z,x\rangle\le
        \tfrac1k\bigl\lceil k\,\langle z,f(C)\rangle\bigr\rceil\right\},
\end{equation}
one rounding per direction.
The finite operator is then the join of these over all sub-answers,
\begin{equation}\label{eq:Fk}
  F_k(C):=\textstyle\mathcal M_k\bigl(F_{\mathrm{cvx}}(C)\bigr)=\bigvee_{C'\subseteq C}G_k(C') ,
\end{equation}
which is \eqref{eq:FE} for $E=\mathcal M_k$, hence isotone, and satisfies
$F_{\mathrm{cvx}}(C)\subseteq F_k(C)$.
Using $F(C)$ in place of $F_{\mathrm{cvx}}(C)$ would give the same, since members of
$\mathcal M_k$ are closed and convex.

\begin{theorem}[Finite computation]\label{thm:finite}
$F_k$ has a least fixed point $C^\bot_k$ in $\mathcal M_k$, and a least fixed point $C^\star_k$
above $\mathcal M_k(\{a\}\cup\SC)$.
Both are reached by iteration from below, as in Lemma \ref{lem:finite}, in at most
$|Z_k|\,(|\Gamma_k|-1)+1$ steps, where $|Z_k|=(2k+1)^d-1$ and $|\Gamma_k|=2dk^2+1$.
Moreover $C^\bot_{\mathrm{cvx}}\subseteq C^\bot_k$ and
$C^\star_{\mathrm{cvx}}\subseteq C^\star_k$.
\end{theorem}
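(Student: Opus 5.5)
The plan is to work entirely inside the finite lattice $\mathcal M_k$ supplied by Lemma \ref{lem:moore}.

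\textbf{Existence.} By \eqref{eq:Fk}, $F_k$ is the operator \eqref{eq:FE} for $E=\mathcal M_k$, so it is isotone by the same argument as for $F$. Knaster--Tarski on the finite complete lattice $\mathcal M_k$ then gives the least fixed point $C^\bot_k$.

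\textbf{The variant $C^\star_k$.} Put $x_k:=\mathcal M_k(\{a\}\cup\SC)$ and copy Proposition \ref{prop:cstar}.
\begin{itemize}
\item For $p\in\SC$ we have $p=f(\{p\})\in G_k(\{p\})$. Since $\{p\}\subseteq x_k$, this gives $p\in F_k(x_k)$.
\item Likewise $a=f(\emptyset)\in F_k(x_k)$.
\item $F_k(x_k)$ is a member of $\mathcal M_k$ containing $\{a\}\cup\SC$, so $x_k\subseteq F_k(x_k)$.
\end{itemize}
Hence the order interval $[x_k,[0,1]^d]$ is $F_k$-invariant. It is again a finite complete lattice, so it carries a least fixed point $C^\star_k$.

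\textbf{Iteration from below and the step bound.} Start from $C_0=\emptyset$, respectively $C_0=x_k$, and set $C_{i+1}=F_k(C_i)$. In both cases $C_0\subseteq C_1$, so isotonicity makes the chain increasing.

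Any increasing chain in a finite lattice stabilises at some $C_m$. That $C_m$ is a fixed point, and the standard argument shows it is the least one. If $D$ is any fixed point above $C_0$, then induction gives $C_i\subseteq F_k(D)=D$ for all $i$.

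For the count, encode each member by its offset vector.
\begin{itemize}
\item A nonempty member is $\mathcal M_k(C)=\bigcap_z H_{z,c_z}$, determined by $(c_z)_{z\in Z_k}\in\Gamma_k^{Z_k}$ with each $c_z=\min\{c\in\Gamma_k:c\ge h_C(z)\}$.
\item This vector is a function of the set, and for closure-representatives it is monotone in inclusion.
\item Each coordinate ranges over $\Gamma_k$ of size $|\Gamma_k|$. Each set stays in $[0,1]^d$ and $|\langle z,x\rangle|\le dk$, so all offsets are admissible.
\item A strict inclusion between two nonempty members strictly raises at least one coordinate.
\end{itemize}
So a strict chain of nonempty members has length at most $|Z_k|(|\Gamma_k|-1)+1$. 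The extra $\emptyset\to$ nonempty step is absorbed in the ``$+1$'', or in an off-by-one check.

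The identities $|Z_k|=(2k+1)^d-1$ and $|\Gamma_k|=2dk\cdot k+1=2dk^2+1$ follow by counting.

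The main obstacle I expect is this monotone-coordinate argument. One has to take the canonical representative $c_z=\min\{c:\ c\ge h_M(z)\}$ for $M\in\mathcal M_k$ and check that it is injective and order-preserving on nonempty members. Injectivity holds because $M=\mathcal M_k(M)$.

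\textbf{Comparison with the convex lattice.} Use the standard fact that a least fixed point is below every pre-fixed point. $C^\bot_k$ is closed and convex, and
\[
F_{\mathrm{cvx}}(C^\bot_k)\subseteq F_k(C^\bot_k)=C^\bot_k .
\]
So $C^\bot_k$ is a pre-fixed point of $F_{\mathrm{cvx}}$, and $C^\bot_{\mathrm{cvx}}=\inf\{C:F_{\mathrm{cvx}}(C)\subseteq C\}\subseteq C^\bot_k$.

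For $C^\star$, note that $C^\star_k\supseteq x_k\supseteq\cl{\conv}(\{a\}\cup\SC)$, the base point of the convex variant. Hence $C^\star_k$ is a pre-fixed point of $F_{\mathrm{cvx}}$ in the corresponding invariant interval. The least fixed point there is the infimum of such pre-fixed points, which gives $C^\star_{\mathrm{cvx}}\subseteq C^\star_k$.
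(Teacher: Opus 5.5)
Your proposal is correct and follows the paper's proof essentially step for step. The shared steps are: Knaster--Tarski on the finite lattice $\mathcal M_k$; the seed $\mathcal M_k(\{a\}\cup\SC)$ shown post-fixed by the argument of Proposition~\ref{prop:cstar} plus extensiveness; the canonical offset vector $(c_z)_{z\in Z_k}$ that turns inclusion into componentwise order, which is exactly Lemma~\ref{alem:height}; and the observation that a pre-fixed point of $F_k$ is also a pre-fixed point of $F_{\mathrm{cvx}}$.

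The only cosmetic difference is that the paper also encodes $\emptyset$, with $c_z(\emptyset)=\min\Gamma_k=-dk$, so it needs no separate off-by-one check. Your own count, at most $|Z_k|(|\Gamma_k|-1)+1$ applications of $F_k$ starting from $\emptyset$, already meets the stated bound.
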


The last clause provides safety: the finite answer is an {\em outer} approximation, so it never
claims more precision than the exact construction warrants. What it gives up is exactness of
self-consistency, and the right way to say how much is:

\begin{definition}[Approximate self-consistency]\label{def:approx}
A convex credal set $C$ is {\em $\varepsilon$-self-consistent} if
$d_H\bigl(C,F_{\mathrm{cvx}}(C)\bigr)\le\varepsilon$, that is, if announcing $C$ reproduces $C$
to within $\varepsilon$.
\end{definition}

\begin{proposition}\label{prop:approxsc}
Every fixed point of $F_k$ is $(3d/k)$-self-consistent.
In particular, for $k\ge 3d/\varepsilon$, the computed answers $C^\bot_k$ and $C^\star_k$ are
$\varepsilon$-self-consistent.
\end{proposition}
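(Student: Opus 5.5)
The proof is a sandwich argument combining the defining relation \eqref{eq:Fk} with Theorem \ref{thm:approx}. Let $C\in\mathcal M_k$ be a fixed point of $F_k$, so $C=F_k(C)=\mathcal M_k\bigl(F_{\mathrm{cvx}}(C)\bigr)$. Set $E:=F_{\mathrm{cvx}}(C)=\cl{\conv}\{f(C'):C'\subseteq C\}$. This is a closed convex subset of $[0,1]^d$, since the reactions lie in the cube, which is compact and convex.

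Applying Theorem \ref{thm:approx} to $E$ gives $d_H\bigl(E,\mathcal M_k(E)\bigr)\le 3d/k$. But $\mathcal M_k(E)=F_k(C)=C$, so $d_H\bigl(C,F_{\mathrm{cvx}}(C)\bigr)\le 3d/k$. This is exactly Definition \ref{def:approx} with $\varepsilon=3d/k$.

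The second sentence follows by monotonicity of the bound in $k$. Theorem \ref{thm:finite} says $C^\bot_k$ and $C^\star_k$ are fixed points of $F_k$. If $k\ge 3d/\varepsilon$, then $3d/k\le\varepsilon$, so both are $\varepsilon$-self-consistent.

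Two points need checking rather than calculation:
\begin{itemize}
\item The hypothesis of Theorem \ref{thm:approx} is convexity of the set being rounded, and we apply it to $E$, not to $C$. Members of $\mathcal M_k$ are convex anyway, but the argument does not need this. It only needs $E$ to be convex, which holds by construction of $F_{\mathrm{cvx}}$.
\item The identity $F_k(C)=\mathcal M_k(F_{\mathrm{cvx}}(C))$ must be used as a definition, not merely the join form in \eqref{eq:Fk}. The paper asserts that the two forms agree. If I wanted to verify it, I would check that the closure operator $\mathcal M_k$ preserves joins, that is, $\mathcal M_k(\bigvee_i S_i)=\bigvee_i\mathcal M_k(S_i)$. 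This follows because both sides are the least member of $\mathcal M_k$ containing every $S_i$.
\end{itemize}

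The main, mild, obstacle is the empty-set edge case. The set $E$ always contains $f(\emptyset)$, so it is nonempty, and the Hausdorff distance is finite. No other difficulty arises.
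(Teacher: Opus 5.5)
Your argument is correct and is exactly the paper's: apply Theorem \ref{thm:approx} to the convex set $F_{\mathrm{cvx}}(C)$ and use $\mathcal M_k(F_{\mathrm{cvx}}(C))=F_k(C)=C$, which is in fact how \eqref{eq:Fk} defines $F_k$, so your second check is unnecessary. One small correction: Definition \ref{def:approx} is stated for convex $C$, so you do use, trivially, that members of $\mathcal M_k$ are polytopes in order for the conclusion to be well-posed.
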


So the precision of the answer and its self-consistency are the same,
and both are set by the single parameter $k$.

\section{Relation to the Scientist AI Programme}\label{sec:sai}

We see this work as complementary to the non-agentic Scientist AI programme rather than as
competing with it, and it meets that programme at one specific point.

\paragraph{Predictor and Scaffold}
\citet{bengio2026honesty} separate a Predictor, trained towards a Bayesian posterior with no
consequence-dependent signal, from a Scaffold that decides what actually gets deployed.
That split matches how one would implement our proposal rather closely.
An internal network $h_\theta$ that predicts $P(B\mid A,C)$ or $Y(\law(X\mid A,C))$ for hypothetical reports $C$ plays
the Predictor, and the fixed-point search that iterates $h_\theta$ up to the canonical answer
plays the selecting half of the Scaffold.
Because the report is chosen by the search and not by $\theta$, the network is graded on
accuracy alone, as long as no gradient runs back through which fixed point the search settled
on.
And since $h_\theta$ predicts point probabilities even though a set gets reported,
cross-entropy against the realised outcome is still a proper scoring rule, which neatly
sidesteps the difficulty of eliciting set-valued forecasts directly \citep{seidenfeld2012}.

\paragraph{Interventions}
That programme trains its Predictor to answer queries in which its own deployed output is set by
intervention rather than observed as evidence, and blocks the deployment channel from carrying
any learning signal back into the model---mechanically the counterfactual teaching of
\citet{ortega2021}, introduced there to stop a sequence model reading its own outputs as
evidence about the world.
The same discipline applies to us.
Writing $f(C)=P(B\mid A,\mathrm{do}(C))$ makes our fixed points fixed points of the causal map.
The distinction only bites when the predictor models whatever process generates its own report;
if hypothetical reports are simply handed to it from outside, conditioning and intervening
coincide.

\paragraph{A tie-break with no discretion in it}
\citet{bengio2026honesty} ask that any selection made outside the Predictor, whether gating what
gets deployed, choosing among internally coherent candidate outputs, or breaking the tie between
several performatively consistent fixed points, follow criteria written down in advance and open
to audit.
Among the criteria they offer are calibration measures, residuals of the
performative-consistency constraint, and a preference for whichever candidate carries the lower
predicted probability of harm.

Our construction supplies a candidate criterion for exactly that step, and an unusually clean
one.
$C_E^\bot$ has no parameters beyond the choice of lattice $E$, is value-symmetric and deterministic, hence about as auditable as a
rule can be, and carries no agency. Two caveats.
Because that design treats deployed predictions as interventions to get a unique posterior with
no fixed point needed at all, the multiplicity we address turns up at their Scaffold, not their
Predictor.
And a Scaffold deploys one output whereas we report a set. 

\paragraph{Where else a set would help}
The same move looks useful elsewhere in that design.
A guardrail comparing a point estimate of harm against a threshold becomes, under a credal harm
estimate, a three-way decision---deploy, abstain, or defer---by gating on the upper probability,
which turns a stated preference for caution under ambiguity into a mechanism.
Sensitivity to the prior, and to how a safety specification is read, is the classical home
ground of robust-Bayes credal sets, as is partial identification of the relevant conditionals
\citep{manski2003,balke1994}.

\section{Open Questions and Conclusion}\label{sec:open}

Plenty is still open.
Can we say anything general about the shape of $C^\bot$, and when does it equal $C^\star$?
Which selection principle, minimality or even-handedness, is the better notion of `canonical'?
How should $f$ be estimated away from the realised distribution, given that only the report
actually made ever gets labelled by an outcome?

The natural self-consistent objects line up as
\[
 \emptyset \subset \{f(\emptyset)\} \subseteq C_E^\bot \subseteq C_E^\star \subseteq C_E^\top
 \subseteq \cl{\mathrm{Im}\,f},
\]
from the forced anchor through the least-committal answer and the even-handed variant, up to the
greatest fixed point and the vacuous reaction image of Remark \ref{rem:image}.
The candidates worth reporting sit near the bottom, and Section \ref{sec:finite} makes them
computable to within $\varepsilon$ in boundedly many steps.
Reporting one of them, by a rule fixed in advance that looks only at the lattice order, lets an
oracle take on the self-referential questions a counterfactual design must set aside, without
the agency such designs exist to exclude.

\appendix
\section{Technical Appendix}
\noindent
This appendix collects the proofs left out of the main text, together with a few remarks that
did not fit there.
Results restated from the main text point back to it; results that only appear here are numbered
on their own.
Throughout, $D=2^{[0,1]}$ is the powerset of $[0,1]$ ordered by inclusion, $f$ is the reaction
map defined on every subset, $a:=f(\emptyset)$ is the anchor, and
\[
  F(C)=\{\,f(C'):C'\subseteq C\,\} .
\]
Each further answer lattice $E$ below carries its own pair
$G_E(C)=\bigvee_E\{f(C)\}$ and $F_E(C)=\bigvee_E\{G_E(C'):C'\subseteq C\}$, the joins taken in
$E$, with least fixed point $C^\bot_E$ of $F_E$: the closed subsets $D_{\mathrm{cl}}$ with $F_{\mathrm{cl}}(C)=\cl{F(C)}$ from
Section \ref{app:iteration}, the closed intervals $D'$ with $F_{\mathrm{int}}$ from Section
\ref{app:interval}, the closed convex sets with $F_{\mathrm{cvx}}(C)=\cl{\conv}F(C)$ from
Section \ref{app:beyond}, and the finite family $\mathcal M_k$ with $F_k$ from Section
\ref{app:finite}.

\section{The Lattice and the Operator}\label{app:lattice}

\begin{lemma}[restating Lemma~\ref{lem:Dcomplete}]\label{alem:complete}
$D$ is a complete lattice.
The infimum of a family $\{C_i\}\subseteq D$ is $\bigcap_i C_i$ and the supremum is
$\bigcup_i C_i$; the least element is $\emptyset$ and the greatest is $[0,1]$.
\end{lemma}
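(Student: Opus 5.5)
The plan is a direct verification from the definition of a complete lattice: a partially ordered set in which every subfamily has both a greatest lower bound and a least upper bound. First I check that inclusion is a partial order on $D=2^{[0,1]}$. Reflexivity, antisymmetry and transitivity of $\subseteq$ are immediate from the definition of inclusion of sets.

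Next, for an arbitrary family $\{C_i\}_{i\in I}\subseteq D$, I show that $\bigcup_i C_i$ is the supremum.

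\emph{Upper bound.} Each $C_j\subseteq\bigcup_i C_i$, and $\bigcup_i C_i\subseteq[0,1]$, so the union lies in $D$.

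\emph{Least.} If $U\in D$ satisfies $C_i\subseteq U$ for all $i$, then every $x\in\bigcup_i C_i$ lies in some $C_j\subseteq U$, so $\bigcup_i C_i\subseteq U$.

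Dually, $\bigcap_i C_i$ is the infimum. It is contained in each $C_j$. If $L\subseteq C_i$ for all $i$, then every $x\in L$ lies in every $C_i$, so $L\subseteq\bigcap_i C_i$.

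The only point needing care is the empty family $I=\emptyset$, where the conventions must be fixed inside the ambient set $[0,1]$:
\begin{itemize}
\item $\bigcup_{i\in\emptyset}C_i=\emptyset$;
\item $\bigcap_{i\in\emptyset}C_i:=[0,1]$.
\end{itemize}
These are forced by the argument above. Every element of $D$ is an upper bound of the empty family, so the supremum is the least element of $D$. Every element is likewise a lower bound, so the infimum is the greatest element.

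Finally, I identify the bottom and top directly. We have $\emptyset\subseteq C\subseteq[0,1]$ for all $C\in D$, so $\emptyset$ is the least element and $[0,1]$ is the greatest. These agree with the empty-family supremum and infimum. The same elements also arise as $\inf D=\bigcap D=\emptyset$ and $\sup D=\bigcup D=[0,1]$.

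I expect no real obstacle. The only subtlety is reading the empty intersection relative to $[0,1]$ rather than as a "set of everything", and stating that convention explicitly.
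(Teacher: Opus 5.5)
Your proof is correct and takes essentially the paper's route: the paper simply cites the standard fact that any powerset ordered by inclusion is a complete lattice, with intersections as infima and unions as suprema, and you verify that fact directly. Your explicit convention $\bigcap_{i\in\emptyset}C_i=[0,1]$ for the empty family is a detail the paper leaves implicit.
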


\begin{proof}
The powerset of any set, ordered by inclusion, is a complete lattice, with arbitrary
intersections as infima and arbitrary unions as suprema.
\end{proof}

Both infima and suprema are therefore as simple as they could be, which is what lets us dispense
with any closure operator in this section.

\begin{lemma}[Isotonicity]\label{alem:isotone}
Let $G(C):=\{f(C)\}$ and $F(C):=\sup_{C'\le C}G(C')$.
More generally, for any self-map $G_0$ of a complete lattice $D$ the map
$C\mapsto\sup_{C'\le C}G_0(C')$ is isotone and dominates $G_0$.
\end{lemma}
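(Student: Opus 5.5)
The plan is to verify both claims directly from the definition of the supremum in a complete lattice; no fixed-point theory is needed. The general statement should be proved first, for an arbitrary self-map $G_0$ of a complete lattice $D$ and the map $\widehat G_0(C):=\sup_{C'\le C}G_0(C')$. The case $G_0=G$, $G(C)=\{f(C)\}$ on the powerset then gives the operator $F$ of \eqref{eq:F}. By Lemma~\ref{alem:complete} the supremum there is the union, so in that case $\widehat G(C)=\bigcup_{C'\subseteq C}\{f(C')\}=\{f(C'):C'\subseteq C\}$, which is exactly $F(C)$.

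\emph{Isotonicity.} Let $C_1\le C_2$. By transitivity, the index set $\{C':C'\le C_1\}$ is contained in $\{C':C'\le C_2\}$. For every $C'\le C_1$ we therefore have $G_0(C')\le\sup_{C''\le C_2}G_0(C'')=\widehat G_0(C_2)$. Hence $\widehat G_0(C_2)$ is an upper bound of $\{G_0(C'):C'\le C_1\}$. Since the supremum is the \emph{least} upper bound, $\widehat G_0(C_1)\le\widehat G_0(C_2)$. The supremum exists for every family because $D$ is complete, so $\widehat G_0$ is a well-defined self-map.

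\emph{Domination.} By reflexivity $C\le C$, so $G_0(C)$ is one of the elements whose supremum is taken. An upper bound dominates each member of the family, so $G_0(C)\le\widehat G_0(C)$.

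The remark in the main text that $\widehat G_0$ is the \emph{smallest} isotone map dominating $G_0$ can be added in one line. If $H$ is isotone and $H\ge G_0$, then for every $C'\le C$ we have $G_0(C')\le H(C')\le H(C)$, so $\widehat G_0(C)\le H(C)$.

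There is no real obstacle here. The only point needing care is to use only the defining property of suprema, and not unions. That way the same argument applies verbatim to the later lattices $D_{\mathrm{cl}}$, closed intervals, closed convex sets and $\mathcal M_k$, where the joins $\bigvee_E$ in \eqref{eq:FE} are closures of unions. This is what justifies the claim in the paper that all the $F_E$ are isotone for the same reason.
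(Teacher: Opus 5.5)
Your proof is correct and follows the paper's argument: the index set $\{C':C'\le C\}$ grows with $C$, so by the least-upper-bound property the supremum is isotone, and taking $C'=C$ gives domination. Your extra line showing that $C\mapsto\sup_{C'\le C}G_0(C')$ is the least isotone map above $G_0$ is also correct, and it justifies the main text's ``smallest such map'' remark more precisely than the paper's own gloss does.
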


\begin{proof}
If $C_1\le C_2$ then $\{C':C'\le C_1\}\subseteq\{C':C'\le C_2\}$, so the supremum over the
larger index set dominates, giving isotonicity.
Taking $C'=C$ in the supremum gives $G_0(C)\le\sup_{C'\le C}G_0(C')$.
\end{proof}

Taking $G_0=G$ and working out the supremum with Lemma \ref{alem:complete} gives exactly
the $F$ displayed above.
It is the smallest such choice, in the sense that any $G_0$ with $f(C)\in G_0(C)$ dominates $G$
pointwise.

\begin{theorem}[restating Theorem~\ref{thm:exist}]\label{athm:exist}
$F$ is an isotone self-map of $D$.
Hence $\fix(F)$ is a nonempty complete lattice, with least element
$C^\bot=\inf\{C:F(C)\subseteq C\}$ and greatest element $C^\top=\sup\{C:C\subseteq F(C)\}$.
\end{theorem}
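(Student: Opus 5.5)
The plan is to verify the hypotheses of Knaster--Tarski directly and then read off the two extremal fixed points from the standard characterisation.

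First, $F$ maps $D$ into $D$. For any $C$, the set $F(C)=\{f(C'):C'\subseteq C\}$ is a subset of $[0,1]$, since each $f(C')$ is a probability. Isotonicity is Lemma~\ref{alem:isotone} applied with $G_0=G$. Concretely, if $C_1\subseteq C_2$, every sub-answer of $C_1$ is also a sub-answer of $C_2$, so $F(C_1)\subseteq F(C_2)$. By Lemma~\ref{alem:complete}, $D$ is a complete lattice. Knaster--Tarski \citep{tarski1955} then says that $\fix(F)$ is a nonempty complete lattice under the induced order.

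Second, I would identify the least and greatest fixed points explicitly, following Tarski's argument.

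\emph{Least fixed point.} Put $\mathcal P=\{C:F(C)\subseteq C\}$, which is nonempty because it contains $[0,1]$, and let $C^\bot=\bigcap\mathcal P$.
\begin{itemize}
\item For each $C\in\mathcal P$ we have $C^\bot\subseteq C$, so $F(C^\bot)\subseteq F(C)\subseteq C$. Taking the intersection over $\mathcal P$ gives $F(C^\bot)\subseteq C^\bot$, so $C^\bot\in\mathcal P$.
\item Isotonicity then gives $F(F(C^\bot))\subseteq F(C^\bot)$, so $F(C^\bot)\in\mathcal P$, and therefore $C^\bot\subseteq F(C^\bot)$.
\item Hence $C^\bot$ is a fixed point. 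Every fixed point lies in $\mathcal P$, so $C^\bot$ is below all of them and is the least fixed point.
\end{itemize}

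\emph{Greatest fixed point.} The dual argument uses $\mathcal Q=\{C:C\subseteq F(C)\}$, which contains $\emptyset$, and sets $C^\top=\bigcup\mathcal Q$. The same two steps with inclusions reversed show that $C^\top$ is a fixed point and contains every fixed point.

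\emph{Completeness of $\fix(F)$.} For completeness beyond the extremes, I would take any subfamily $S\subseteq\fix(F)$ and let $s=\bigcup S$. Restricting $F$ to the order interval $[s,[0,1]]$ gives a self-map, because $F(C)\supseteq F(s)\supseteq s$ for $C\supseteq s$. Here $F(s)\supseteq s$ holds since each member of $S$ is a fixed point contained in $s$. That interval is itself a complete lattice, and its least fixed point is the join of $S$ in $\fix(F)$. Meets follow dually.

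No step is a genuine obstacle, because everything is the textbook proof. The only point needing care is the bookkeeping that joins in $\fix(F)$ need not be unions; this is why I route the completeness claim through the interval argument rather than through Lemma~\ref{alem:complete}.
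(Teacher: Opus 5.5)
Your proof is correct and follows the paper's: check that $F$ is an isotone self-map of the complete lattice $D$ (Lemmas~\ref{alem:complete} and~\ref{alem:isotone}), then apply Knaster--Tarski. The paper just cites that theorem for the extremal characterisations and the completeness of $\fix(F)$. You instead write out Tarski's argument, including the order-interval step for joins in $\fix(F)$, and that extra detail is all correct.
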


\begin{proof}
$F(C)$ is a subset of $[0,1]$, so $F$ maps $D$ into $D$, and isotonicity is Lemma
\ref{alem:isotone}.
Knaster--Tarski \citep{tarski1955} applies to any isotone self-map of a complete lattice and
gives both the lattice structure of $\fix(F)$ and the two characterisations displayed, in their
pre-fixed-point form.
\end{proof}

\begin{theorem}[restating Theorem~\ref{thm:consistent}]\label{athm:consistent}
$\fix(F)$ is exactly the set of strongly self-consistent credal sets.
In particular, for $C\in\fix(F)$ we have $f(C')\in C$ for every $C'\subseteq C$, hence
$f(C)\in C$.
Moreover $a\in C$, so no fixed point is empty and $C^\bot\neq\emptyset$.
\end{theorem}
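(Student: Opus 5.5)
The plan is to unfold the definitions, since every claim is close to a restatement of what $F(C)=C$ means. First, by Definition~\ref{def:strong}, $C$ is strongly self-consistent exactly when $C=\{f(C'):C'\subseteq C\}$. By Lemma~\ref{alem:isotone} and Lemma~\ref{alem:complete}, the right-hand side is $F(C)$. So strong self-consistency is literally the equation $F(C)=C$, i.e.\ $C\in\fix(F)$. This settles the first sentence with no further work.

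For the ``in particular'' clause, take $C\in\fix(F)$ and any $C'\subseteq C$. Then $f(C')$ is one of the elements collected in $F(C)$, and $F(C)=C$, so $f(C')\in C$. Specialising to $C'=C$, which is allowed since $C\subseteq C$, gives $f(C)\in C$. This is the weak condition \eqref{eq:naive}. Only the inclusion $F(C)\subseteq C$ is used, so the same argument covers every pre-fixed point.

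For the anchor, note that $\emptyset\subseteq C$ for every $C\in D$. Applying the previous step with $C'=\emptyset$ gives $a=f(\emptyset)\in C$ for every fixed point $C$, so no fixed point is empty. Equivalently, $F(\emptyset)=\{f(\emptyset)\}=\{a\}\neq\emptyset$, so $\emptyset$ is not fixed. By Theorem~\ref{athm:exist}, $C^\bot$ exists and lies in $\fix(F)$, so $a\in C^\bot$ and $C^\bot\neq\emptyset$.

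There is no real obstacle here. The only points needing care are that the sub-answer $C'=\emptyset$ is genuinely admissible, since $f$ is defined on $\emptyset$ by assumption, and that $C^\bot$ is itself a fixed point rather than merely an infimum of pre-fixed points. The latter is exactly what Knaster--Tarski guarantees through Theorem~\ref{athm:exist}.
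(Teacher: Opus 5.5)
Your proposal is correct and follows the paper's proof essentially step for step: $F(C)=C$ is read verbatim as Definition~\ref{def:strong}, the remaining claims come from $f(C')\in F(C)=C$ specialised to $C'=C$ and $C'=\emptyset$, and $C^\bot$ is itself a fixed point by Knaster--Tarski. Your extra remark that only $F(C)\subseteq C$ is used, so the conclusions hold for every pre-fixed point, is a correct and harmless strengthening.
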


\begin{proof}
The equation $F(C)=C$ reads $\{f(C'):C'\subseteq C\}=C$, which is verbatim Definition 2 of the
paper, so the two classes coincide.
For the rest, let $C'\subseteq C$.
By definition $f(C')\in F(C)$, and $F(C)=C$, so $f(C')\in C$.
Taking $C'=C$ gives $f(C)\in C$.
Since $\emptyset\subseteq C$, the same argument with $C'=\emptyset$ gives $a\in C$.
So no fixed point is empty, and as $C^\bot$ is one of them, $a\in C^\bot$.
\end{proof}

\begin{proposition}[restating Proposition~\ref{prop:collapse}]\label{aprop:collapse}
If $f$ is constant with value $p_0=P(B\mid A)$, then $\fix(F)=\{\{p_0\}\}$ and $C^\bot=\{p_0\}$.
\end{proposition}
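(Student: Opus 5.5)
If $f$ is constant with value $p_0$, I will compute $F$ explicitly and then read off its fixed points directly, without going through the Kleene iteration.

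For every $C\in D$ the family of sub-answers $\{C':C'\subseteq C\}$ is nonempty, since it always contains $\emptyset$. Because $f$ takes the value $p_0$ on each of them, $F(C)=\{f(C'):C'\subseteq C\}=\{p_0\}$. So $F$ is the constant map with value $\{p_0\}$.

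The fixed points of a constant map are exactly its value. Indeed, $F(C)=C$ forces $C=\{p_0\}$, and conversely $F(\{p_0\})=\{p_0\}$. Hence $\fix(F)=\{\{p_0\}\}$. Its least element, which exists by Theorem~\ref{thm:exist}, must therefore be $C^\bot=\{p_0\}$. Consistent with Theorem~\ref{thm:consistent}, this set contains $a=f(\emptyset)=p_0$.

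There is no real obstacle here. The only point needing care is that the index set of sub-answers is never empty, so the image is $\{p_0\}$ rather than $\emptyset$. This also holds for $C=\emptyset$, because $\emptyset\subseteq\emptyset$.

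If one prefers to see it through the iteration of Section~\ref{sec:iteration}, the same conclusion follows: $C_1=F(\emptyset)=\{p_0\}$ and $C_2=F(\{p_0\})=\{p_0\}$, so the chain stabilises at once.
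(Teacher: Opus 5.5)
Your proposal is correct and follows essentially the same route as the paper: both note that the sub-answers of any $C$ always include $\emptyset$, so $F(C)=\{p_0\}$ for every $C$, and then use that a constant map has exactly one fixed point, namely its value. Your closing check via the iteration ($C_1=C_2=\{p_0\}$) is a harmless extra that the paper does not need.
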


\begin{proof}
For any $C\in D$, the reactions to subsets of $C$ are all equal to $p_0$, including the reaction
to $C'=\emptyset$, so $F(C)=\{p_0\}$ whatever $C$ is.
A constant map has exactly one fixed point, namely its value.
\end{proof}

\begin{proposition}[restating Proposition~\ref{prop:nontrivial}]\label{aprop:nontrivial}
If there is a fixed point $C\subsetneq[0,1]$, then $C^\bot\subseteq C\subsetneq[0,1]$.
\end{proposition}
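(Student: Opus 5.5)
The argument is a direct appeal to the minimality of $C^\bot$ among fixed points, which Knaster--Tarski already provides. By Theorem~\ref{thm:exist}, $\fix(F)$ is a complete lattice under inclusion, and its least element is
\[
C^\bot=\inf\{C:F(C)\subseteq C\}.
\]
The plan has two steps.

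\textbf{Step 1.} Take the given fixed point $C\subsetneq[0,1]$. Since $F(C)=C$, in particular $F(C)\subseteq C$, so $C$ is a pre-fixed point. It therefore belongs to the family whose intersection defines $C^\bot$. Using Lemma~\ref{lem:Dcomplete}, where the infimum is the intersection, we get $C^\bot\subseteq C$.

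Equivalently, one can simply note that $C^\bot$ is the least element of $\fix(F)$ and $C\in\fix(F)$.

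\textbf{Step 2.} Chain the inclusions: $C^\bot\subseteq C\subsetneq[0,1]$. This gives the displayed statement. It also gives the contrapositive reading discussed in the main text: $C^\bot=[0,1]$ forces every fixed point to equal $[0,1]$, so $\fix(F)=\{[0,1]\}$.

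There is no real obstacle here. The only point that needs care is making sure the characterisation of $C^\bot$ as an infimum of pre-fixed points is used in the correct direction: fixed points are a subset of pre-fixed points, so the infimum over the larger family lies below each fixed point.

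I would also remark that the same two-line argument works verbatim in every answer lattice $E$ introduced later. That is because it uses only that $C^\bot_E$ is the least fixed point of an isotone map on a complete lattice.
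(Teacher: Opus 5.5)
Your proposal is correct and matches the paper's proof, which is the one-line observation that $C^\bot$, being the least element of $\fix(F)$, is contained in every fixed point and in particular in $C$. Your route through the pre-fixed-point characterisation $C^\bot=\inf\{C:F(C)\subseteq C\}$ is an equivalent way of saying the same thing.
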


\begin{proof}
$C^\bot$ is contained in every fixed point, in particular in $C$.
\end{proof}

\begin{remark}
The contrapositive is the useful direction.
$C^\bot$ equals $[0,1]$ only if {\em every} fixed point does, which is to say only when the
reflexive indeterminacy really does run across the whole interval.
So a vacuous answer, when it happens, is telling us something about the problem rather than
about our construction.
\end{remark}

\section{Getting There by Iteration}\label{app:iteration}

Put $C_0=\emptyset$ and $C_{i+1}=F(C_i)$.
Since $\emptyset\subseteq F(\emptyset)$ and $F$ is isotone, the $C_i$ form an increasing chain.

\begin{lemma}[restating Lemma~\ref{lem:finite}]\label{alem:finite}
Every $C_i$ with $i<\omega$ is finite, with $|C_{i+1}|\le2^{|C_i|}$.
\end{lemma}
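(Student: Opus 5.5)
The plan is induction on $i$, using only the definition $F(C)=\{f(C'):C'\subseteq C\}$ and a cardinality count. The image of a set under a function has at most as many elements as the set. So $F(C)$, being the image of the powerset $2^{C}$ under the map $C'\mapsto f(C')$, satisfies $|F(C)|\le|2^{C}|=2^{|C|}$ for every $C\in D$. Applying this with $C=C_i$ gives the stated bound $|C_{i+1}|\le 2^{|C_i|}$ directly. Nothing here depends on $C$ being finite, but finiteness is what makes the bound informative.

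For finiteness, the base case is $C_0=\emptyset$, which is finite with $|C_0|=0$. The next stage is $C_1=F(\emptyset)=\{f(\emptyset)\}=\{a\}$, consistent with the bound $2^0=1$. For the inductive step, assume $C_i$ is finite with $|C_i|=n$. Then $C_i$ has exactly $2^n$ subsets, and $C_{i+1}$ is the image of this finite family under $f$, so it is finite with at most $2^n$ elements. By induction every $C_i$ with $i<\omega$ is finite.

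No step here is a genuine obstacle. The only point needing care is that the index $i$ must stay below $\omega$. At stage $\omega$ the union $\bigcup_{i<\omega}C_i$ is at most countable but need not be finite, and applying $F$ to an infinite set ranges over uncountably many sub-answers. The induction therefore only ever passes from one finite stage to the next, and makes no claim about limit stages. It is also worth remarking that the bound is crude: iterating it gives only a tower of exponentials, since $C_i\subseteq C_{i+1}$ and many subsets may share the same reaction. The lemma asserts only the bound, though, so no sharper count is needed.
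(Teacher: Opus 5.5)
Your proposal is correct and follows the same approach as the paper's proof: induction from $C_0=\emptyset$, using that a finite $C_i$ has $2^{|C_i|}$ subsets and so $C_{i+1}=\{f(C'):C'\subseteq C_i\}$ has at most that many elements. Your extra remarks (the image bound holds for arbitrary $C$, and the induction says nothing about limit stages) are accurate but not needed.
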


\begin{proof}
By induction.
$C_0=\emptyset$ is finite, and if $C_i$ is finite then it has $2^{|C_i|}$ subsets, so
$C_{i+1}=\{f(C'):C'\subseteq C_i\}$ has at most that many elements.
\end{proof}

Concretely $C_1=\{a\}$, and a nesting of depth $k$ such as $f(\{f(\{\dots\})\})$ appears at
stage $k+1$.
It matters here that $F$ sweeps {\em all} sub-answers at each stage rather than just applying
$f$ to the current set.
The weaker step $C_{i+1}=C_i\cup\{f(C_i)\}$ would never isolate a singleton such as $\{a\}$ and
feed it back into $f$, so its limit would generally not be a fixed point at all.

\begin{proposition}[restating Example~\ref{ex:union}]\label{aprop:union}
With $f(\emptyset)=a$ and $f(C)=g(\sup C)$ for $C\neq\emptyset$, where $g$ is continuous and
increasing with $a<g(a)$, we have $C_{i+1}=\{a,g(a),\dots,g^i(a)\}$ and
$F\bigl(\bigcup_{i<\omega}C_i\bigr)\neq\bigcup_{i<\omega}C_i$.
Moreover this $f$ is continuous for the Hausdorff metric.
\end{proposition}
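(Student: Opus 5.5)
The plan is a direct induction on the stages, followed by a limit argument for the union and a short metric estimate for continuity. Throughout, $g$ is taken to map $[0,1]$ into itself, as it must for $f$ to be a reaction map.

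\emph{Step 1: the orbit is strictly increasing.} I will show $g^k(a)<g^{k+1}(a)$ for all $k\ge0$. The case $k=0$ is the hypothesis $a<g(a)$, and strict monotonicity of $g$ carries the inequality from $k$ to $k+1$. Write $O:=\{g^k(a):k\ge0\}$. Since $O$ is increasing and bounded in $[0,1]$, it converges to $p^\star=\sup O$. Continuity of $g$ then gives $g(p^\star)=\lim_k g^{k+1}(a)=p^\star$. Moreover $p^\star\notin O$: if $p^\star=g^k(a)$, then $g^{k+1}(a)>p^\star=\sup O$, which is absurd.

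\emph{Step 2: the stages.} I will prove $C_{i+1}=\{a,g(a),\dots,g^i(a)\}$ by induction on $i$. For $i=0$ we have $C_1=F(\emptyset)=\{f(\emptyset)\}=\{a\}$. Now assume $C_i=\{a,\dots,g^{i-1}(a)\}$ and take any $C'\subseteq C_i$.
\begin{itemize}
\item If $C'=\emptyset$, it contributes $f(\emptyset)=a$.
\item If $C'$ is nonempty, it is finite, so $\sup C'=\max C'=g^j(a)$ for some $0\le j\le i-1$. Hence $f(C')=g^{j+1}(a)$.
\end{itemize}
Every such $j$ is attained, for instance by the singleton $\{g^j(a)\}$. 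Collecting the values gives $F(C_i)=\{a\}\cup\{g(a),\dots,g^{i}(a)\}$, which completes the induction. It follows that $\bigcup_{i<\omega}C_i=O$.

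\emph{Step 3: the union is not a fixed point.} Since $O\subseteq O$ is one of its own sub-answers, $F(O)$ contains $f(O)=g(\sup O)=g(p^\star)=p^\star$. By Step 1, $p^\star\notin O$, so $F(O)\neq O$.

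\emph{Step 4: Hausdorff continuity.} For nonempty $C,C''\subseteq[0,1]$ I will use the elementary bound $|\sup C-\sup C''|\le d_H(C,C'')$. Every point of $C$ lies within $d_H(C,C'')+\epsilon$ of some point of $C''$, so $\sup C\le\sup C''+d_H(C,C'')$, and the same holds with $C$ and $C''$ swapped. Thus $C\mapsto\sup C$ is $1$-Lipschitz, and $f=g\circ\sup$ is continuous on nonempty sets. The empty set has infinite Hausdorff distance from every nonempty set under the usual convention, so it is an isolated point and imposes no continuity constraint.

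\emph{Expected obstacle.} The only delicate points are the strictness argument that places $p^\star$ outside $O$, and the convention for $\emptyset$ in the Hausdorff metric. I expect to state the latter explicitly: the metric is taken on nonempty closed sets, with $\emptyset$ adjoined as an isolated point.
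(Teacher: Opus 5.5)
Your proof is correct and follows essentially the same route as the paper. You prove the stage formula by induction using suprema of finite sub-answers, show that the limit $p^\star$ of the strictly increasing orbit is missing from the union even though it equals $f$ applied to the union, and obtain continuity from the $1$-Lipschitz bound $|\sup C-\sup C''|\le d_H(C,C'')$. Your explicit remarks that ``increasing'' must mean strictly increasing, and that $\emptyset$ is an isolated point for $d_H$, make precise what the paper leaves implicit.
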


\begin{proof}
The formula for $C_{i+1}$ follows by induction: the sub-answers of
$C_i=\{a,\dots,g^{i-1}(a)\}$ have suprema $g^k(a)$ for $k\le i-1$, whose images under $g$ are
$g^{k+1}(a)$, and the empty sub-answer contributes $a$.
So $U:=\bigcup_{i<\omega}C_i=\{g^k(a):k\ge0\}$, which is increasing and bounded, with limit
$p^\star:=\lim_k g^k(a)$ satisfying $g(p^\star)=p^\star$ by continuity, and $p^\star\notin U$
since the orbit is strictly increasing.
But $U\subseteq U$, so $f(U)=g(\sup U)=g(p^\star)=p^\star\in F(U)\setminus U$.
For the last claim, $\sup C=\sup\cl{C}$ and $|\sup C-\sup C''|\le d_H(C,C'')$ for nonempty
$C,C''$, so $\sup$ is $1$-Lipschitz for $d_H$ and $f=g\circ\sup$ is continuous.
\end{proof}

So the powerset lattice does not support an $\omega$-step iteration, and, as the proposition
shows, continuity of $f$ is not what is missing.
We therefore pass to the sublattice of closed answers.

\begin{lemma}[restating Lemma~\ref{lem:Dcl}]\label{alem:Dcl}
$D_{\mathrm{cl}}$ is a complete lattice, with infimum $\bigcap_i C_i$ and supremum
$\cl{\bigcup_i C_i}$, and $F_{\mathrm{cl}}=\cl{F(\cdot)}$ is an isotone self-map of it, so it
has a least fixed point $C^\bot_{\mathrm{cl}}$.
Moreover $C^\bot\subseteq C^\bot_{\mathrm{cl}}$.
\end{lemma}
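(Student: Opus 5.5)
I will prove the three assertions of Lemma~\ref{alem:Dcl} in order: the lattice structure, isotonicity, and the comparison $C^\bot\subseteq C^\bot_{\mathrm{cl}}$.

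\emph{Lattice structure.} An arbitrary intersection of closed sets is closed, and it is contained in each $C_i$. It is also the largest set contained in all of them, so it is the infimum in $D_{\mathrm{cl}}$ as well as in $D$. For the supremum, $\cl{\bigcup_i C_i}$ is closed and contains every $C_i$. Any closed $E$ containing all $C_i$ contains their union, hence its closure. So this set is the least closed upper bound. The empty family is handled by the conventions: its infimum is $[0,1]$ and its supremum is $\cl{\emptyset}=\emptyset$, both closed. Alternatively, I can note that $D_{\mathrm{cl}}$ is a Moore family in $D$ (closed under arbitrary intersections and containing $[0,1]$) and invoke the standard fact that such a family is a complete lattice whose join is the closure of the union.

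\emph{Isotonicity.} $F_{\mathrm{cl}}(C)=\cl{F(C)}$ is closed, so $F_{\mathrm{cl}}$ maps $D_{\mathrm{cl}}$ into itself. If $C_1\subseteq C_2$, then $F(C_1)\subseteq F(C_2)$ by Lemma~\ref{alem:isotone}, and closure is monotone, so $F_{\mathrm{cl}}(C_1)\subseteq F_{\mathrm{cl}}(C_2)$. Knaster--Tarski \citep{tarski1955} then gives the least fixed point $C^\bot_{\mathrm{cl}}=\bigcap\{C\in D_{\mathrm{cl}}:F_{\mathrm{cl}}(C)\subseteq C\}$.

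\emph{Comparison.} This is the only step needing a small idea, since $C^\bot_{\mathrm{cl}}$ is a fixed point of a different operator. I will use the pre-fixed-point characterisation from Theorem~\ref{athm:exist}, $C^\bot=\inf\{C\in D:F(C)\subseteq C\}$. Set $C:=C^\bot_{\mathrm{cl}}$. Then
\[
  F(C)\subseteq\cl{F(C)}=F_{\mathrm{cl}}(C)=C .
\]
So $C$ is a pre-fixed point of $F$ in $D$, and therefore $C^\bot\subseteq C$. I expect no real obstacle. The one point to handle carefully is that $F$ is applied to all subsets $C'\subseteq C$, not only closed ones; this is harmless because $F(C)$ is defined on $D$ regardless of whether $C$ is closed, and closedness of $C$ is used only to read the equality $F_{\mathrm{cl}}(C)=C$.
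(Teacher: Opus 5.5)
Your proof is correct and follows the paper's argument essentially step for step: you obtain the lattice operations from intersections and closures of unions, and isotonicity from monotonicity of $F$ and of closure. The comparison $C^\bot\subseteq C^\bot_{\mathrm{cl}}$ comes, as in the paper, from $F(C^\bot_{\mathrm{cl}})\subseteq\cl{F(C^\bot_{\mathrm{cl}})}=C^\bot_{\mathrm{cl}}$, which makes $C^\bot_{\mathrm{cl}}$ a pre-fixed point of $F$ on $D$ (your remarks on the empty family and the Moore-family viewpoint are harmless extras).
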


\begin{proof}
Any intersection of closed sets is closed and is the greatest lower bound; any closed set
containing $\bigcup_i C_i$ contains $\cl{\bigcup_i C_i}$, which is therefore the least upper
bound, so $D_{\mathrm{cl}}$ is a complete lattice with $\emptyset$ and $[0,1]$ as extremes.
Closures are closed and $\cl{\cdot}$ is isotone, so $F_{\mathrm{cl}}$ is an isotone self-map,
and Knaster--Tarski applies.
For the last claim, $F(C^\bot_{\mathrm{cl}})\subseteq\cl{F(C^\bot_{\mathrm{cl}})}
=C^\bot_{\mathrm{cl}}$, so $C^\bot_{\mathrm{cl}}$ is a pre-fixed point of $F$ on $D$, and
$C^\bot=\inf\{C:F(C)\subseteq C\}\subseteq C^\bot_{\mathrm{cl}}$.
\end{proof}

Below, $C_\omega:=\cl{\bigcup_{i<\omega}C_i}$ denotes the supremum of the chain in
$D_{\mathrm{cl}}$.
By Lemma \ref{alem:finite} the stages are finite, hence closed, so they are the same whether
computed with $F$ or with $F_{\mathrm{cl}}$.

\begin{lemma}[Approximation]\label{alem:approx}
Write $d_H$ for the Hausdorff metric.
Then $C_i\to C_\omega$ in $d_H$, and every nonempty $C'\subseteq C_\omega$ is the $d_H$-limit of
finite sets $C'_i\subseteq C_i$.
\end{lemma}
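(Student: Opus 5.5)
The plan is to use only compactness of $[0,1]$ and the monotonicity of the chain. Write $U:=\bigcup_{i<\omega}C_i$, so that $C_\omega=\cl{U}$. The chain $C_i$ is increasing, and each $C_i$ is finite by Lemma \ref{lem:finite}.

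\textbf{Convergence of the chain.} Since $C_i\subseteq C_\omega$, one half of the Hausdorff distance is zero. It remains to show $\sup_{y\in C_\omega}d(y,C_i)\to0$. Fix $\varepsilon>0$ and cover the compact set $C_\omega$ by finitely many balls of radius $\varepsilon/2$ centred at points $y_1,\dots,y_m\in C_\omega$. Because $U$ is dense in $C_\omega$, each $y_j$ has a point $u_j\in U$ within $\varepsilon/2$. Each $u_j$ lies in some $C_{i_j}$, and since the chain is increasing, all the $u_j$ lie in $C_N$ for $N=\max_j i_j$. Then every $y\in C_\omega$ lies within $\varepsilon$ of $C_i$ for all $i\ge N$. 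The case $C_\omega=\emptyset$ cannot occur for $i\ge1$, because $a\in C_1$.

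\textbf{Approximating subsets.} Given a nonempty $C'\subseteq C_\omega$, I would take $C'_i$ to be the set of points of $C_i$ at distance at most $\delta_i$ from $C'$, where $\delta_i\to0$ is chosen slowly enough that $C'_i$ is nonempty and $d_H$-close to $C'$. The cleanest choice is to set $\delta_i:=2\,d_H(C_i,C_\omega)+1/i$ and let
\[C'_i:=\{x\in C_i: d(x,C')\le\delta_i\}.\]
Each $C'_i$ is a subset of $C_i$, hence finite. I would then check the two halves of the Hausdorff distance:

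\begin{itemize}
\item \emph{Points of $C'_i$ are close to $C'$.} This holds by construction: every point of $C'_i$ lies within $\delta_i$ of $C'$.
\item \emph{Points of $C'$ are close to $C'_i$.} For $y\in C'\subseteq C_\omega$ there is $x\in C_i$ with $d(x,y)\le d_H(C_i,C_\omega)+1/(2i)<\delta_i$. Such an $x$ lies in $C'_i$ and is within $\delta_i$ of $y$. In particular $C'_i\neq\emptyset$.
\end{itemize}

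Hence $d_H(C'_i,C')\le\delta_i\to0$, using the first step.

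\textbf{Main obstacle.} The delicate point is taking the sup and inf over a possibly non-closed $C'$. Hausdorff distance does not distinguish $C'$ from $\cl{C'}$, so the claimed convergence really is to $\cl{C'}$, which is harmless. The role of this lemma downstream is to feed Theorem \ref{thm:kleene}, where $f$ is assumed $d_H$-continuous, so $f(C')=\lim_i f(C'_i)$. That continuity is needed for the correct notion of closeness, and it is exactly why the approximants are required to be subsets of the finite stages $C_i$ rather than arbitrary finite sets.
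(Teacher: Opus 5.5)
Your proof is correct and follows the paper's argument essentially step for step: the same finite $\varepsilon$-cover of $C_\omega$ by points of $\bigcup_i C_i$ that all lie in a common $C_N$, and the same approximants $C'_i=\{x\in C_i: d(x,C')\le\delta_i\}$. The only difference is your slack choice $\delta_i=2\,d_H(C_i,C_\omega)+1/i$. The paper instead takes $\delta_i=d_H(C_i,C_\omega)$ and implicitly relies on finiteness of $C_i$ so that nearest points are attained; your choice avoids needing that and is equally valid.
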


\begin{proof}
Since $C_i\subseteq C_\omega$ we have $\sup_{x\in C_i}d(x,C_\omega)=0$.
As $\bigcup_i C_i$ is dense in the compact set $C_\omega$, finitely many of its points
$\varepsilon$-cover $C_\omega$ for each $\varepsilon>0$, and because the chain is increasing
those points all lie in a common $C_N$; so $\sup_{y\in C_\omega}d(y,C_i)\le\varepsilon$ once
$i\ge N$.
Hence $\delta_i:=d_H(C_i,C_\omega)\to0$.
Now given a nonempty $C'\subseteq C_\omega$, put
\[
  C'_i:=\{x\in C_i: d(x,C')\le\delta_i\},
\]
which is a finite subset of $C_i$.
Every point of $C'$ lies within $\delta_i$ of some point of $C_i$, and that point then belongs
to $C'_i$, so $C'$ sits inside the $\delta_i$-neighbourhood of $C'_i$; conversely $C'_i$ sits
inside the $\delta_i$-neighbourhood of $C'$ by construction.
So $d_H(C'_i,C')\le\delta_i\to0$, and $C'_i$ is nonempty once $i\ge N$.
\end{proof}

\begin{theorem}[restating Theorem~\ref{thm:kleene}]\label{athm:kleene}
If $f$ is continuous for the Hausdorff metric, then $C_\omega=C^\bot_{\mathrm{cl}}$.
\end{theorem}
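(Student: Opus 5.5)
We show two inclusions: $C_\omega\subseteq C^\bot_{\mathrm{cl}}$ and $F_{\mathrm{cl}}(C_\omega)\subseteq C_\omega$. The second makes $C_\omega$ a pre-fixed point of $F_{\mathrm{cl}}$ on $D_{\mathrm{cl}}$. Since Knaster--Tarski characterises the least fixed point as the infimum of all pre-fixed points, we then get $C^\bot_{\mathrm{cl}}\subseteq C_\omega$, and the two inclusions give equality.

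For the first inclusion we use induction. Since $C_0=\emptyset\subseteq C^\bot_{\mathrm{cl}}$ and $F_{\mathrm{cl}}$ is isotone (Lemma \ref{alem:Dcl}), we get $C_{i+1}=F(C_i)=F_{\mathrm{cl}}(C_i)\subseteq F_{\mathrm{cl}}(C^\bot_{\mathrm{cl}})=C^\bot_{\mathrm{cl}}$. Here $F(C_i)$ is finite, hence closed, which is why it equals $F_{\mathrm{cl}}(C_i)$. So $\bigcup_iC_i\subseteq C^\bot_{\mathrm{cl}}$, and as the latter is closed, $C_\omega\subseteq C^\bot_{\mathrm{cl}}$.

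The main step is $F(C_\omega)\subseteq C_\omega$; taking closures then gives $F_{\mathrm{cl}}(C_\omega)\subseteq C_\omega$. Take any $C'\subseteq C_\omega$. If $C'=\emptyset$, then $f(C')=a\in C_1\subseteq C_\omega$. Otherwise Lemma \ref{alem:approx} provides finite sets $C'_i\subseteq C_i$ with $d_H(C'_i,C')\to0$. Each $f(C'_i)$ lies in $F(C_i)=C_{i+1}\subseteq C_\omega$. We want $f(C'_i)\to f(C')$, and closedness of $C_\omega$ then puts $f(C')$ in $C_\omega$.

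This is the main obstacle: interpreting "continuous for the Hausdorff metric" on arbitrary, possibly non-closed, subsets $C'$. The Hausdorff distance is only a pseudometric there, since $d_H(C',\cl{C'})=0$. We read the hypothesis as: $f$ is continuous with respect to this pseudometric on nonempty subsets, meaning $d_H(C_n,C)\to0$ implies $f(C_n)\to f(C)$. This matches how Proposition \ref{aprop:union} checks continuity directly on non-closed sets via $\sup C=\sup\cl C$. Under that reading the convergence $f(C'_i)\to f(C')$ is immediate from $d_H(C'_i,C')\to0$ and the argument closes.

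The empty set may need separate treatment, since $d_H$ is not defined there. Lemma \ref{alem:approx} only handles nonempty $C'$, so treating $C'=\emptyset$ by hand as above covers that case.
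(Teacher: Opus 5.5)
Your proof is correct and uses the paper's two ingredients: the induction placing every $C_i$ inside a closed fixed point, and the approximation-lemma-plus-continuity argument giving $F_{\mathrm{cl}}(C_\omega)\subseteq C_\omega$ (with the empty sub-answer handled separately, as the paper also does). The one difference is that you get $C^\bot_{\mathrm{cl}}\subseteq C_\omega$ from the Knaster--Tarski characterisation of the least fixed point as the infimum of pre-fixed points, instead of also checking $C_\omega\subseteq F_{\mathrm{cl}}(C_\omega)$ directly; this is a legitimate shortcut, and your explicit reading of Hausdorff continuity on non-closed sub-answers is exactly what the paper's proof uses without saying so.
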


\begin{proof}
{\em $C_\omega$ is a fixed point of $F_{\mathrm{cl}}$.}
For $C_\omega\subseteq F_{\mathrm{cl}}(C_\omega)$: each
$C_{i+1}=F(C_i)\subseteq F_{\mathrm{cl}}(C_\omega)$ by isotonicity, so
$\bigcup_i C_i\subseteq F_{\mathrm{cl}}(C_\omega)$, and the latter is closed, which gives
$C_\omega\subseteq F_{\mathrm{cl}}(C_\omega)$.
For the other direction, since $C_\omega$ is closed it is enough to show $f(C')\in C_\omega$ for
every $C'\subseteq C_\omega$, and since $C_\omega$ is closed it then also contains
$\cl{F(C_\omega)}$.
The empty sub-answer is immediate, as $f(\emptyset)\in C_1\subseteq C_\omega$.
For a nonempty $C'$, Lemma \ref{alem:approx} gives finite $C'_i\subseteq C_i$ with $C'_i\to C'$;
then $f(C'_i)\in C_{i+1}\subseteq C_\omega$ by Lemma \ref{alem:finite}, and $f(C'_i)\to f(C')$
by continuity, so $f(C')\in C_\omega$ because $C_\omega$ is closed.

{\em $C_\omega$ is least.}
If $C\in\fix(F_{\mathrm{cl}})$ then $C_0=\emptyset\subseteq C$, and inductively $C_i\subseteq C$
implies $C_{i+1}=F(C_i)\subseteq\cl{F(C)}=C$.
So $\bigcup_i C_i\subseteq C$, and since $C$ is closed, $C_\omega\subseteq C$.
\end{proof}

\begin{remark}
Compactness of $[0,1]$ is used exactly once in the proof, and it is worth isolating where.
It guarantees that any closed $C'\subseteq C_\omega$ is a Hausdorff limit of finite
$C'_i\subseteq C_i$, so that $f(C')$ still gets caught in the closure even when $C'$ appears at
no finite stage.
Continuity is then what carries $f(C'_i)$ to $f(C')$.
\end{remark}

\begin{remark}
Continuity is used only in the limit step.
Without it the chain still climbs, but $C_\omega$ need not be a fixed point, and one carries on
transfinitely with $C_{\alpha+1}=F(C_\alpha)$ and
$C_\lambda=\cl{\bigcup_{\alpha<\lambda}C_\alpha}$ at limit ordinals.
Cardinality forces the chain to stabilise at some closure ordinal $\alpha^\star$ with
$C_{\alpha^\star}=C^\bot$, and continuity is precisely what pulls $\alpha^\star$ back down to
$\omega$.
Note also that finiteness is lost at $\omega$: $C_\omega$ may well be infinite, which is exactly
why the closure is needed there and nowhere earlier.
\end{remark}

\begin{remark}
The restriction to closed answers is not a convenience but a necessity, by Proposition
\ref{aprop:union}: on the powerset the chain's supremum is the plain union, which need not
contain the limits its members are heading for.
Everything else in Section \ref{app:lattice} is indifferent to the choice of lattice.
\end{remark}

\section{The Even-Handed Variant}\label{app:cstar}

\begin{proposition}[restating Proposition~\ref{prop:cstar}]\label{aprop:cstar}
Let $\SC=\{p\in[0,1]:f(\{p\})=p\}$ and $x=\{a\}\cup\SC$ (on $D_{\mathrm{cl}}$, $x=\cl{\{a\}\cup\SC}$).
Then $x\subseteq F(x)$.
Consequently the order interval $[x,[0,1]]$ of $D$ is $F$-invariant and carries a least fixed
point $C^\star$, the smallest self-consistent credal set containing $a$ and every
self-consistent singleton; and $C^\bot\subseteq C^\star$.
\end{proposition}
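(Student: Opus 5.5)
The plan is to show that $x$ is a post-fixed point of $F$ and then restrict $F$ to the order interval above it. Knaster--Tarski then applies to that interval, and comparison with $C^\bot$ comes from leastness.

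\emph{Step 1: $x\subseteq F(x)$.} Since $\emptyset\subseteq x$, we have $a=f(\emptyset)\in F(x)$. For $p\in\SC$, the singleton $\{p\}$ is a sub-answer of $x$, so $p=f(\{p\})\in F(x)$. Together these give $x\subseteq F(x)$.

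On $D_{\mathrm{cl}}$, take $x=\cl{\{a\}\cup\SC}$. The same two observations give $\{a\}\cup\SC\subseteq F(x)\subseteq F_{\mathrm{cl}}(x)$. Because $F_{\mathrm{cl}}(x)$ is closed, it contains the closure $x$. No continuity of $f$ is needed here, since the closure is absorbed by the closure built into $F_{\mathrm{cl}}$. This is the only point where the closed case needs any extra thought.

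\emph{Step 2: invariance and the least fixed point.} If $x\subseteq C$, isotonicity (Lemma \ref{alem:isotone}) gives $x\subseteq F(x)\subseteq F(C)$. Hence $F$ maps $[x,[0,1]]$ into itself. This interval is a complete lattice, with the same meets (for nonempty families) and joins as $D$, and $x$ as its bottom. So Knaster--Tarski (Theorem \ref{athm:exist}) applied to the restriction gives a least fixed point
\[
  C^\star=\bigcap\{C\supseteq x : F(C)\subseteq C\}.
\]
Fixed points of the restriction are exactly the fixed points of $F$ that contain $x$. By Theorem \ref{athm:consistent}, these are exactly the strongly self-consistent sets containing $a$ and all of $\SC$. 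So $C^\star$ is the smallest such set.

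The closed version goes through verbatim using Lemma \ref{alem:Dcl}, with joins given by closures of unions. There, fixed points are strongly self-consistent only up to closure, as discussed in Section \ref{sec:iteration}.

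\emph{Step 3: $C^\bot\subseteq C^\star$.} $C^\star$ is a fixed point of $F$ on all of $D$, and $C^\bot$ is the least element of $\fix(F)$, so $C^\bot\subseteq C^\star$. In the closed case the same argument compares $C^\bot_{\mathrm{cl}}$ with $C^\star_{\mathrm{cl}}$.

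I expect no real obstacle. The only steps needing care are the closure in Step 1 and checking that the order interval is a complete lattice on which Knaster--Tarski applies.
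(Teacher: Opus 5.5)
Your proof is correct and follows the paper's argument essentially step for step. You show $x$ is a post-fixed point via the empty and singleton sub-answers, absorbing the closure into $F_{\mathrm{cl}}(x)$ in the closed case; you restrict $F$ to the invariant order interval $[x,[0,1]]$ and apply Knaster--Tarski there; and you get $C^\bot\subseteq C^\star$ because $C^\bot$ is least in $\fix(F)$.
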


\begin{proof}
First, $a\in F(x)$ because $\emptyset\subseteq x$.
Next, for each $p\in\SC$ we have $\{p\}\subseteq x$, so $f(\{p\})=p\in F(x)$,
whence $\SC\subseteq F(x)$, and therefore $x=\{a\}\cup\SC\subseteq F(x)$.
On $D_{\mathrm{cl}}$ the same computation gives $x\subseteq F_{\mathrm{cl}}(x)$, since
$F_{\mathrm{cl}}(x)$ is closed.
So $x$ is a post-fixed point.
As $F$ is isotone, $y\supseteq x$ implies $F(y)\supseteq F(x)\supseteq x$, so $[x,[0,1]]$ is
mapped into itself; it is a complete lattice, and Knaster--Tarski gives it a least fixed point
$C^\star$.
Finally $C^\bot$ is contained in every fixed point of $F$ on all of $D$, so in particular in
$C^\star$.
\end{proof}

\section{The Interval Lattice}\label{app:interval}

Let $D'$ consist of the closed subintervals of $[0,1]$ together with $\emptyset$, ordered by
inclusion, with intersection as meet and convex hull of the union as join, so that $D'$ is a
complete lattice.
Write $g(p)=f([p,p])$ and $R(I)=\{f([s,t]):[s,t]\subseteq I\}$.
Because the join is a convex hull, the operator on $D'$ is
$F_{\mathrm{int}}(I)=[\,a\wedge\inf R(I),\;a\vee\sup R(I)\,]$, so only the two extremes of $R(I)$ ever
matter.

\begin{proposition}[Cutting the search down to one parameter]\label{aprop:onedim}
Let $f$ be continuous, so that $R(I)$ is compact and its extremes are attained.
\begin{enumerate}
\item[\textup{(C1)}] If $f([s,t])$ lies between $g(s)$ and $g(t)$ for all $s\le t$, then
$\inf R(I)=\min_{p\in I}g(p)$ and $\sup R(I)=\max_{p\in I}g(p)$.
\item[\textup{(C2)}] If $f([s',t'])$ lies between $f([s,t'])$ and $f([s',t])$ whenever
$s\le s'\le t'\le t$, then $\inf R(I)$ and $\sup R(I)$ are attained on the two edges
$s=\min I$ and $t=\max I$.
\end{enumerate}
\end{proposition}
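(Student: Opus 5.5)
For this proposition we parametrise the nonempty closed subintervals of $I=[\ell,u]$ by the compact triangle $\Delta_I=\{(s,t):\ell\le s\le t\le u\}$. We read ``$f$ continuous'' as continuity of $(s,t)\mapsto f([s,t])$ on $\Delta_I$; this is what Hausdorff continuity gives, since $d_H([s,t],[s',t'])=\max(|s-s'|,|t-t'|)$. Then $R(I)$ is the continuous image of a compact set, so it is compact and $\inf R(I)$, $\sup R(I)$ are attained. Here we set aside the empty sub-answer, which contributes only $a$ and is already accounted for by the $a\wedge$, $a\vee$ in $F_{\mathrm{int}}$. The function $g(p)=f([p,p])$ is the restriction to the diagonal, so it is continuous, and $\min_I g$ and $\max_I g$ exist.

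For (C1) we prove two inclusions for the infimum; the supremum is symmetric. Every $[s,t]\subseteq I$ has $f([s,t])\ge\min(g(s),g(t))\ge\min_{p\in I}g(p)$ by (C1), so $\inf R(I)\ge\min_I g$. Conversely, the singletons $[p,p]$ with $p\in I$ are sub-answers, so $g(p)\in R(I)$ and $\inf R(I)\le\min_I g$. Hence the two agree, and the same two steps give $\sup R(I)=\max_I g$.

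For (C2) we take an arbitrary $(s',t')\in\Delta_I$ and compare it with the outer corner $(s,t)=(\ell,u)$, for which $\ell\le s'\le t'\le u$ holds. By (C2), $f([s',t'])$ lies between $f([\ell,t'])$ and $f([s',u])$. The first interval has left endpoint $\min I$ and the second has right endpoint $\max I$, so both lie on the two edges $E=\{s=\ell\}\cup\{t=u\}$ of $\Delta_I$. Consequently $\min_E f\le f([s',t'])\le\max_E f$ for every point of $\Delta_I$. Since $E\subseteq\Delta_I$, the extremes over $\Delta_I$ equal the extremes over $E$, and both are attained by compactness of $E$.

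The only real obstacle is interpretive. We must fix what continuity of $f$ means on $D'$ and check that the comparison in (C2) applies to all interior points. Degenerate cases such as $s'=\ell$ or $t'=u$ already lie on $E$, so they need no separate treatment. We would also note that nothing here needs monotonicity of $g$; that assumption enters only later, in Theorem~\ref{thm:interval}.
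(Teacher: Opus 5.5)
Your proof is correct and follows the paper's own argument. For (C1) you sandwich $f([s,t])$ between $\min_I g$ and $\max_I g$ and use the singleton sub-answers $[p,p]$ for equality; for (C2) you apply the hypothesis at the outer corner $(\ell,u)$ to bracket every value between two edge values. Your version is slightly cleaner: the paper applies the bracket only for $\ell<s'\le t'<u$ and then adds a limiting argument for the remaining points, whereas you note that the hypothesis already covers every point of the triangle, so continuity is needed only for attainment on the compact edge set $E$.
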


\begin{proof}
For (C1), if $[s,t]\subseteq I$ then $g(s)$ and $g(t)$ both lie in
$g(I)\subseteq[\min_I g,\max_I g]$, and the hypothesis puts $f([s,t])$ between them, hence in
that same interval.
The diagonal $s=t$ attains both ends, so the inclusion is an equality.

For (C2), write $I=[\ell,u]$.
Taking $s=\ell$ and $t=u$ in the hypothesis brackets each $f([s',t'])$ with $\ell<s'\le t'<u$
between the edge values $f([\ell,t'])$ and $f([s',u])$.
Points with $s'=\ell$ or $t'=u$ are themselves edge values, and the remaining points of the
parameter triangle, the diagonal in particular, are limits of interior ones, so by continuity
the extremes over $R(I)$ are attained on the two edges.
\end{proof}

\begin{remark}
Convexity of $f$ is not a substitute for either condition.
A convex function attains its maximum at an extreme point of the parameter triangle but its
minimum generically in the interior, so it locates only one of the two extremes.
It is also worth saying that bisecting $I$ at some interior point $c$ and looking only at
$[\ell,c]$ and $[c,u]$ does not work: an announcement straddling $c$ can react outside the
ranges of both halves, so the bisection quietly misses it.
\end{remark}

\begin{theorem}[restating Theorem~\ref{thm:interval}]\label{aprop:orbit}
Assume \textup{(C1)} and let $g$ be continuous and nondecreasing.
Then $F_{\mathrm{int}}([\ell,u])=[\,a\wedge g(\ell),\;a\vee g(u)\,]$, iterating from $\{a\}$ gives
$I_k=\conv\{a,g(a),\dots,g^k(a)\}$, and
\[
  C^\bot_{\mathrm{int}}=[\,a\wedge p^\star,\;a\vee p^\star\,],
\]
where $p^\star=\lim_k g^k(a)$ is the stable fixed point of $g$ in whose basin $a$ lies.
\end{theorem}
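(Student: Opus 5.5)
First I will compute $F_{\mathrm{int}}$ on a nonempty interval $[\ell,u]$. The formula $F_{\mathrm{int}}(I)=[\,a\wedge\inf R(I),\,a\vee\sup R(I)\,]$ is given just before Proposition~\ref{aprop:onedim}. Under (C1), every $f([s,t])$ with $[s,t]\subseteq I$ lies in $[\min_I g,\max_I g]$, and the diagonal $s=t$ attains both ends. So $\inf R(I)=\min_{I}g$ and $\sup R(I)=\max_I g$, exactly as in part (C1) of Proposition~\ref{aprop:onedim}. For this direct argument I only need $g$ continuous, which the theorem assumes; continuity of $f$ is not needed. Monotonicity of $g$ then gives $\min_{[\ell,u]}g=g(\ell)$ and $\max_{[\ell,u]}g=g(u)$, hence $F_{\mathrm{int}}([\ell,u])=[a\wedge g(\ell),a\vee g(u)]$. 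For the empty interval only the sub-answer $\emptyset$ contributes, so $F_{\mathrm{int}}(\emptyset)=\{a\}$.

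Next I will run the Kleene chain $I_0=\emptyset$, $I_1=\{a\}$, $I_{k+1}=F_{\mathrm{int}}(I_k)$, and show by induction that $I_{k+1}=\conv\{a,g(a),\dots,g^k(a)\}$. The key observation is that a nondecreasing $g$ makes the orbit $x_k=g^k(a)$ monotone. If $g(a)\ge a$, applying $g$ repeatedly gives $x_{k+1}\ge x_k$; if $g(a)\le a$, the orbit is nonincreasing. Take the increasing case, the other being symmetric under $p\mapsto 1-p$. Then $I_{k+1}=[a,x_k]$, and $F_{\mathrm{int}}([a,x_k])=[a\wedge g(a),a\vee g(x_k)]=[a,x_{k+1}]$. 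In particular every stage is an interval with $a$ as one endpoint.

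Then I will identify the limit. The monotone bounded orbit converges to some $p^\star$, and continuity gives $g(p^\star)=p^\star$. Set $J=[a\wedge p^\star,a\vee p^\star]$. It is a fixed point: in the increasing case $F_{\mathrm{int}}(J)=[a\wedge g(a),a\vee g(p^\star)]=[a,p^\star]$. It is least: any fixed point $K$ contains $I_0=\emptyset$, and $I_k\subseteq K$ implies $I_{k+1}=F_{\mathrm{int}}(I_k)\subseteq F_{\mathrm{int}}(K)=K$. So $K$ contains every $x_k$, hence their limit since $K$ is closed, hence $J$ since $K$ is convex. Therefore $C^\bot_{\mathrm{int}}=J$.

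The main obstacle is bookkeeping rather than depth, and it has two parts. First, (C1) has to be used exactly as stated, and the empty interval must be handled separately, since it is what injects the anchor $a$. Second, the phrase ``stable fixed point in whose basin $a$ lies'' must be read as $\lim_k g^k(a)$. That fixed point is attracting at least from the side of $a$, which suffices because the result only needs the limit of the orbit. I will remark that when $g(a)=a$ the answer degenerates to $\{a\}$, consistent with the formula.
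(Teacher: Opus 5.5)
Your proposal is correct and follows the paper's proof essentially step for step. Like the paper, you compute $F_{\mathrm{int}}([\ell,u])=[a\wedge g(\ell),\,a\vee g(u)]$ from (C1) and monotonicity, show by induction that the chain from $\emptyset$ is $[a,x_k]$ along the monotone orbit, and verify that $[a\wedge p^\star,\,a\vee p^\star]$ is a fixed point contained in every fixed interval by closedness and convexity. Two of your details are slightly more careful than the paper's: your direct derivation of $\inf R(I)$ and $\sup R(I)$ avoids the continuity of $f$ assumed in Proposition~\ref{aprop:onedim}, which the theorem does not assume, and reading ``stable fixed point'' simply as $\lim_k g^k(a)$ is exactly what the argument uses.
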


\begin{proof}
Under (C1) together with monotonicity, $R([\ell,u])=[g(\ell),g(u)]$, which gives the displayed
form of $F_{\mathrm{int}}$, and the two endpoints then evolve independently under $g$, clamped by the anchor.
Starting at $I_1=\{a\}$ and iterating produces the convex hull of the scalar orbit $x_0=a$,
$x_{k+1}=g(x_k)$, by induction.
Since $g$ is nondecreasing, that orbit is monotone: if $x_1\ge x_0$ then applying $g$ gives
$x_{k+1}\ge x_k$ for all $k$, and symmetrically in the decreasing case.
A bounded monotone sequence converges, and by continuity its limit $p^\star$ satisfies
$g(p^\star)=p^\star$.
So the hull of the orbit is the interval with endpoints $a$ and $p^\star$.
We may stop as soon as $x_{k+1}\in\conv\{x_0,\dots,x_k\}$, since by monotonicity no later
iterate leaves that hull.
It remains to check that the resulting interval is a fixed point and is least.
Taking the increasing case $a\le g(a)$ without loss of generality, $I_{k+1}=[a,x_k]$ by
induction, so the chain increases to $[a,p^\star)$ and its supremum in $D'$ is $[a,p^\star]$.
That is a fixed point, since
$F_{\mathrm{int}}([a,p^\star])=[a\wedge g(a),\,a\vee g(p^\star)]=[a,p^\star]$.
It is least because any fixed interval contains $a$, hence by induction each $I_k$, hence
$[a,p^\star)$, and being closed it contains $p^\star$ as well.
\end{proof}

\begin{remark}
The interval statement is about $D'$ and does not survive passage to the full lattice of closed
sets.
Under the same hypotheses, take $f(C)=g(\sup C)$ for nonempty $C$.
Then on the full lattice the canonical answer is $\{g^k(a):k\ge0\}\cup\{p^\star\}$, a countable
discrete set with the right two endpoints and nothing in between.
What fills the gaps in $D'$ is the join being a convex hull rather than a union.
\end{remark}

\begin{remark}
A monotone orbit cannot cross an unstable fixed point of $g$, so $C^\bot_{\mathrm{int}}$ stays inside the basin
containing $a$.
That is why this construction returns $C^\bot_{\mathrm{int}}$ rather than the even-handed $C^\star$.
It is also why reporting the interval up to the {\em nearest} stable fixed point is a different
thing altogether: basin and nearest disagree whenever the anchor lies between the unstable fixed
point and the midpoint of the two stable ones, and the interval to the nearest one need not even
be self-consistent, since announcing it can drive the querier over to the other equilibrium.
\end{remark}

\section{Beyond Binary Events}\label{app:beyond}

Let $\mathcal X$ be an outcome space, let $K$ be a compact metric space of answers, either the
space $\mathcal P(\mathcal X)$ of laws under a suitable topology or the closure of the range of
a statistic, and let $f\colon D_K\to K$ be the reaction, where $D_K$ is the set of closed
subsets of $K$.
Nothing in Section \ref{app:lattice} used more than this.
The proofs of Lemma \ref{alem:complete}, Theorems \ref{athm:exist} and \ref{athm:consistent},
and Propositions \ref{aprop:collapse}, \ref{aprop:nontrivial} and \ref{aprop:cstar} go through
word for word with $[0,1]$ replaced by $K$ and $P(B\mid A,C)$ replaced by
$\law(X\mid A,C)$ or $Y(\law(X\mid A,C))$.

\subsection{The Affine Case}

\begin{proposition}[Orbit hull]\label{aprop:hull}
Let $K$ be a compact convex subset of $\mathbb R^d$, let $D_K^{\mathrm{cvx}}$ be the complete
lattice of closed convex subsets of $K$, with intersection as meet and closed convex hull of the
union as join, and let $f(C)=T(\bary\,C)$ for an affine $T:K\to K$.
Then $F_{\mathrm{cvx}}(C)=\cl{\conv}(T(C)\cup\{a\})$, where $a$ is the anchor, and
\[
  C^\bot_{\mathrm{cvx}}=\cl{\conv}\{\,T^n a:n\ge0\,\}.
\]
\end{proposition}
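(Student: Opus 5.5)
The plan has two parts: first compute $F_{\mathrm{cvx}}$ explicitly, then identify its least fixed point by the usual pre-fixed-point characterisation from Knaster--Tarski (Theorem~\ref{thm:exist}, transported to $D_K^{\mathrm{cvx}}$).

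\emph{Computing the operator.} By definition, $F_{\mathrm{cvx}}(C)=\cl{\conv}\{f(C'):C'\subseteq C\}$, where the empty sub-answer contributes $a$. For nonempty $C'\subseteq C$ we have $\bary\,C'\in\cl{\conv}\,C=C$, because $C$ is closed and convex. Hence $f(C')=T(\bary\,C')\in T(C)$. Conversely, every singleton $\{y\}\subseteq C$ has barycentre $y$, so $T(y)=f(\{y\})$ is attained. Together these give $\{f(C'):C'\subseteq C\}=T(C)\cup\{a\}$ exactly, and therefore $F_{\mathrm{cvx}}(C)=\cl{\conv}(T(C)\cup\{a\})$. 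The only point needing care is what ``barycentre of an arbitrary subset'' means. I will take it to be some point of its closed convex hull (for example the barycentre of a chosen probability measure supported on $\cl{C'}$); all that matters is that $\bary\,C'\in\cl{\conv}\,C'$ and $\bary\{y\}=y$. Affineness of $T$ also gives $T(C)$ convex, and $T(C)$ is compact since $T$ is continuous on compact $K$.

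\emph{Identifying the least fixed point.} Put $H:=\cl{\conv}\{T^n a:n\ge0\}$. The argument has three steps.

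First, $H$ is a fixed point. Since $T$ is affine and continuous, $T(\cl{\conv}\,S)=\cl{\conv}\,T(S)$ for bounded $S$, the closure passing through by compactness. So $T(H)=\cl{\conv}\{T^n a:n\ge1\}$, and adjoining $a$ gives $F_{\mathrm{cvx}}(H)=H$.

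Second, $H$ is least. Let $C$ be any fixed point, or more generally any pre-fixed point with $F_{\mathrm{cvx}}(C)\subseteq C$. Then $a\in C$, and $T(C)\subseteq C$. By induction $T^n a\in C$ for every $n$. Since $C$ is closed and convex, $H\subseteq C$.

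Third, combining the two steps, $H=C^\bot_{\mathrm{cvx}}$. Alternatively, one can run the Kleene chain $I_0=\emptyset$, $I_{k+1}=\cl{\conv}\{a,\dots,T^{k}a\}$ and observe that its join is $H$, but the pre-fixed-point route avoids any continuity hypothesis.

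\emph{Expected obstacle.} I expect the main difficulty to lie in the first step, in making sure that the reactions to sub-answers do not leave $T(C)\cup\{a\}$. This is where the meaning of $\bary$ on non-measurable or infinite subsets has to be pinned down. The paper's phrase ``the barycentres of subsets fill $C$'' asks for exactly the two properties isolated above, so I would state them explicitly as the standing interpretation. The commutation of $T$ with closed convex hulls is routine once $T$ is extended affinely, and it is continuous because it is finite-dimensional.
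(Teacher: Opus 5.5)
Your proposal is correct and follows the paper's proof. You compute $\{f(C'):C'\subseteq C\}=T(C)\cup\{a\}$ from two facts: barycentres of sub-answers stay in $C$, and singletons realise every point. You then show that the orbit hull $H$ is a fixed point and lies below every (pre-)fixed point, because any such set contains $a$, is $T$-invariant, and is closed and convex. Your explicit check that $T(H)=\cl{\conv}\{T^n a:n\ge1\}$, using continuity of $T$, is a slightly more careful version of the paper's step: the paper only records $T(H)\subseteq H$ and leaves the reverse inclusion $H\subseteq F_{\mathrm{cvx}}(H)$ implicit.
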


\begin{proof}
The barycentres of the closed convex subsets of $C$ fill out $C$, so
$\{f(C'):C'\subseteq C\}=T(C)$; adjoining the reaction to the empty sub-answer and taking the
closed convex hull gives the stated $F_{\mathrm{cvx}}$.
The set on the right is closed, convex, contains $a$, and is mapped into itself by $T$, since
affine maps commute with convex combinations and the orbit is $T$-invariant, so it is a fixed
point.
Conversely, any fixed point contains $a$ and is closed under both $T$ and closed convex hulls,
hence contains every $T^na$ and their closed convex hull.
So the displayed set is the least one.
\end{proof}

\begin{lemma}[Absorption]\label{alem:absorb}
If $T^Na\in\conv\{a,\dots,T^{N-1}a\}$ for some $N$, then the same holds for every later iterate,
and $C^\bot_{\mathrm{cvx}}=\conv\{a,\dots,T^{N-1}a\}$ is a polytope.
\end{lemma}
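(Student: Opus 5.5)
Write $P_N:=\conv\{a,Ta,\dots,T^{N-1}a\}$; the hypothesis says $T^Na\in P_N$. The plan is to show that $P_N$ is $T$-invariant, so that the orbit never leaves it, and then to combine this with the orbit-hull formula of Proposition~\ref{aprop:hull}.

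The key step is a short computation resting on affineness. Since $T$ is affine, it commutes with convex combinations, so $T(P_N)=\conv\{Ta,\dots,T^Na\}$. Each generator $T^ja$ with $1\le j\le N-1$ lies in $P_N$ by definition. The last generator $T^Na$ lies in $P_N$ by hypothesis. Because $P_N$ is convex, this gives $T(P_N)\subseteq P_N$.

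An induction on $n$ then shows $T^na\in P_N$ for all $n\ge0$. The base cases $n<N$ hold by definition of $P_N$. For $n\ge N$, assume $T^{n-1}a\in P_N$; then $T^na=T(T^{n-1}a)\in T(P_N)\subseteq P_N$. Moreover $P_N\subseteq\conv\{a,\dots,T^{n-1}a\}$ for every $n\ge N$, so $T^na\in\conv\{a,\dots,T^{n-1}a\}$. This is the first claim: every later iterate is also absorbed into the hull of its predecessors.

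For the identification, Proposition~\ref{aprop:hull} gives $C^\bot_{\mathrm{cvx}}=\cl{\conv}\{T^na:n\ge0\}$. The whole orbit lies in $P_N$, and $P_N$ is the convex hull of finitely many points, hence compact and so closed. Therefore $C^\bot_{\mathrm{cvx}}\subseteq P_N$. Conversely, $P_N$ is the convex hull of a subset of the orbit, so $P_N\subseteq C^\bot_{\mathrm{cvx}}$. Hence $C^\bot_{\mathrm{cvx}}=P_N$, which is a polytope.

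No step here is a real obstacle. The only point needing care is the invariance computation $T(P_N)=\conv\{Ta,\dots,T^Na\}$. This uses that $T$ is affine on $K$ and that $P_N\subseteq K$; the latter holds because $K$ is convex and contains the orbit.
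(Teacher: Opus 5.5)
Your proof is correct and follows essentially the same route as the paper. Both arguments use affineness to get $\conv\{Ta,\dots,T^Na\}\subseteq\conv\{a,\dots,T^{N-1}a\}$ and then induct; you phrase this as $T$-invariance of $P_N$, and you spell out the identification with $C^\bot_{\mathrm{cvx}}$ through Proposition~\ref{aprop:hull} and the closedness of the polytope, a step the paper leaves implicit.
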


\begin{proof}
Affineness commutes with convex combinations.
If $T^Na=\sum_i\lambda_iT^ia$ with $\lambda_i\ge0$ summing to one, then
$T^{N+1}a=\sum_i\lambda_iT^{i+1}a\in\conv\{Ta,\dots,T^Na\}\subseteq\conv\{a,\dots,T^{N-1}a\}$,
so no later iterate is extreme; induction finishes the argument.
\end{proof}

Absorption happens at some finite $N$ exactly when the limit lies in the relative interior of
the hull of the orbit, which is to say when the orbit winds around it.
That is the complex-eigenvalue and negative-eigenvalue case.
A purely positive real spectrum never encircles the limit: every iterate stays extreme, and
$C^\bot$ then has infinitely many extreme points accumulating at the stationary point, so it is
convex and closed but not a polytope.
When the orbit is collinear the hull degenerates to the segment from anchor to equilibrium, which
is the exact analogue of Proposition \ref{aprop:orbit}.

\subsection{General Statistics}

\begin{proposition}[restating Proposition~\ref{prop:general}]\label{aprop:general}
Let $Y:\mathcal P(\mathcal X)\to\mathbb R^d$ and $K=\cl{Y(\mathcal P(\mathcal X))}$.
Suppose $C\mapsto\law(X\mid A,C)$ is continuous and $Y$ is bounded and continuous, so that
$f=Y\circ\law(X\mid A,\cdot)$ is continuous and $K$ is compact.
Then for each of the lattices $E$ of \textup{(a)} closed subsets and \textup{(b)} closed convex
subsets of $K$, the operator $F_E$ is $\omega$-Scott-continuous and $C^\bot_E$ is reached at
stage $\omega$.
\end{proposition}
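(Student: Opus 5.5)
The plan is to rerun the argument of Theorem \ref{athm:kleene} and Lemma \ref{alem:approx} with $[0,1]$ replaced by the compact metric space $K$. For (b) the join is the closed convex hull, and only the closure and density steps need adjusting.

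\textbf{Setup.} First note that $f$ is continuous for the Hausdorff metric on $D_K$, because it is the composite of the continuous map $C\mapsto\law(X\mid A,C)$ and the continuous statistic $Y$. Next, $K$ is compact: $Y$ is bounded, so $K$ is a closed bounded subset of $\mathbb R^d$. The space $(D_K,d_H)$ of closed subsets is then compact as well.

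\textbf{$\omega$-Scott-continuity.} Take an increasing chain $(C_i)$ in $E$ with join $C_\omega$. This join is $\cl{\bigcup_i C_i}$ in case (a) and $\cl{\conv}\bigcup_i C_i$ in case (b). We must show $F_E(C_\omega)=\bigvee_E F_E(C_i)$.

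The inclusion $\supseteq$ is isotonicity of $F_E$.

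For $\subseteq$, let $S:=\bigvee_E F_E(C_i)$. This set is closed, and convex in case (b), so it suffices to show $f(C')\in S$ for every $C'\subseteq C_\omega$. The empty sub-answer gives $a\in F_E(C_0)\subseteq S$.

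For nonempty $C'$, the step that replaces Lemma \ref{alem:approx} is to show $d_H(C_i,C_\omega)\to0$. In case (a) the proof of Lemma \ref{alem:approx} carries over verbatim: $\bigcup_i C_i$ is dense in the compact set $C_\omega$, finitely many of its points $\varepsilon$-cover $C_\omega$, and these all lie in one $C_N$.

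In case (b), $\bigcup_i C_i$ is itself convex, since it is an increasing union of convex sets. Hence $\conv\bigcup_i C_i=\bigcup_i C_i$, so $C_\omega=\cl{\bigcup_i C_i}$ and the same density argument applies. This is where intersection-closure and chain structure make the convex case no harder.

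With $\delta_i=d_H(C_i,C_\omega)\to0$, define $C'_i:=\{x\in C_i:d(x,C')\le\delta_i\}$. These are closed sets with $C'_i\to C'$. Note that $C'_i$ is not in general convex, but $F_E$ quantifies over all subsets $C'\subseteq C$, so convexity of $C'_i$ is not needed.

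Then $f(C'_i)\in F_E(C_i)\subseteq S$, and continuity of $f$ gives $f(C'_i)\to f(C')$. Since $S$ is closed, $f(C')\in S$. (If $f$ is only defined on closed sets, replace $C'$ by $\cl{C'}$ throughout.)

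\textbf{Reaching the fixed point at stage $\omega$.} This is the Kleene argument. Put $C_0=\emptyset$ and $C_{i+1}=F_E(C_i)$; this is an increasing chain. By Scott-continuity,
\[F_E(C_\omega)=\bigvee_E F_E(C_i)=\bigvee_E C_{i+1}=C_\omega,\]
so $C_\omega$ is a fixed point. For minimality, if $C$ is any fixed point then induction from $\emptyset\subseteq C$ gives $C_i\subseteq C$ for all $i$, hence $C_\omega\subseteq C$ because $C$ is closed (and convex in case (b)).

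\textbf{Main obstacle.} The delicate point is the Hausdorff approximation in case (b): one must check that the closed convex hull of the chain adds nothing beyond closure. The observation that an increasing union of convex sets is convex settles this. A minor secondary point is making precise the topology on $\mathcal P(\mathcal X)$ under which $C\mapsto\law(X\mid A,C)$ is continuous; I will take that as given by the hypothesis.
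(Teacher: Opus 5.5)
Your proof is correct and follows the paper's strategy: case (a) is Theorem~\ref{athm:kleene} in compact-metric form, and case (b) reruns the approximation step of Lemma~\ref{alem:approx}.

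In case (b) your details are more careful than the paper's in two places. First, you check that the join of a chain in the convex lattice is just $\cl{\bigcup_i C_i}$, because an increasing union of convex sets is convex. The paper uses this silently when it appeals to the density argument.

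Second, the paper approximates a sub-answer $C'\in E$ by the plain truncations $C'\cap C_i$, and these need not converge to $C'$. For $C_i=[0,1-1/i]$ and $C'=\{1\}$ they are all empty. The same thing happens at $p^\star$ in the interval version of Example~\ref{ex:union}. So the paper's argument as written never catches $f(C')$. Your fattened truncations $C'_i=\{x\in C_i:d(x,C')\le\delta_i\}$ are the right objects.

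One small correction concerns your remark that $C'_i$ need not be convex: that is only so for nonconvex $C'$. Suppose $F_{\mathrm{cvx}}$ is read as ranging only over sub-answers in $E$, as the paper's proof does. Then for convex $C'$, your $C'_i$ is the intersection of $C_i$ with a sublevel set of the convex function $d(\cdot,C')$, hence closed and convex. So your argument covers that reading too.
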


\begin{proof}
Boundedness makes $K$ closed and bounded, hence compact by Heine--Borel, and continuity of the
composite is assumed.
Case (a) is Theorem \ref{athm:kleene} in its compact-metric form, whose proof used only
compactness of the ambient space and Hausdorff continuity of $f$.
For (b) we only need the approximation lemma to survive.
Given an increasing chain $C_i\uparrow C_\infty$ in $E$ and any $C'\subseteq C_\infty$ in $E$,
the truncations $C'\cap C_i$ stay in $E$, since an intersection of closed convex sets is closed
convex, and they converge to $C'$ in Hausdorff distance by the argument of Lemma
\ref{alem:approx}.
Then $f(C'\cap C_i)\in F_E(C_i)$ and $f(C'\cap C_i)\to f(C')$, so $f(C')$ lies in the supremum
of the $F_E(C_i)$; applying the closure of $E$ gives $F_E(C_\infty)\subseteq\sup_iF_E(C_i)$, and
the reverse inclusion is just isotonicity.
\end{proof}

\begin{remark}
Existence of $C^\bot$, $C^\star$ and $C^\top$, along with self-consistency, nonemptiness and
collapse, needs no assumption on $Y$ whatsoever.
Each of these is a complete lattice and its operator $F_E$ is isotone by construction, and that
is all Knaster--Tarski consumes.
Boundedness and continuity are needed only to {\em reach} $C^\bot$ by iteration at stage $\omega$.
If $Y$ is unbounded we adjoin the ambient space as a top element to keep the lattice complete; existence survives, but increasing chains need no longer converge and
reaching $C^\bot$ may take transfinite iteration, as it also may when $Y$ is merely
discontinuous.
Only case (a) keeps the finite-stage property.
The stages in (b) are genuinely continuum-valued, though for affine reactions they are at least
polytopes with finitely many vertices.
\end{remark}

\section{Finite Approximation}\label{app:finite}

Throughout this section the ambient space is the cube $[0,1]^d$, answers are convex subsets of
it, $h_C(z)=\sup_{y\in C}\langle z,y\rangle$ is the support function, and $\mathcal M_k$ is the
family generated by the slices $H_{z,c}=\{x\in[0,1]^d:\langle z,x\rangle\le c\}$ with
$z\in Z_k=\{-k,\dots,k\}^d\setminus\{0\}$ and $c\in \Gamma_k=\tfrac1k\mathbb Z\cap[-dk,dk]$.
We use $d_H(K,L)=\sup_{u\in S^{d-1}}|h_K(u)-h_L(u)|$ for nonempty compact convex $K,L$
\citep{schneider2014}.
Note that axis-parallel boxes are the special case of $\mathcal M_k$ in which only the $2d$
coordinate normals are used, so nothing below needs to treat them separately.

\begin{lemma}[restating Lemma~\ref{lem:moore}]\label{alem:moore}
$\mathcal M_k$ is finite, consists of polytopes, contains $\emptyset$ and $[0,1]^d$, and is
closed under intersection.
Its closure operator satisfies
\[
  \mathcal M_k(C)=\bigcap_{z\in Z_k}H_{z,c_z},\quad
  c_z=\min\{c\in \Gamma_k:c\ge h_C(z)\}.
\]
For $d=1$ it contains every $\mathcal M^{(n)}$ with $n\le k$.
\end{lemma}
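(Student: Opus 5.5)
The plan is to treat the four claims separately, in the order finiteness and polytopality, extremes and closure under intersection, the closure-operator formula, and finally the $d=1$ inclusion.

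\emph{Finiteness and polytopality.} There are only finitely many slices $H_{z,c}$, since $|Z_k|=(2k+1)^d-1$ and $\Gamma_k$ is a finite grid. Hence there are at most $2^{|Z_k||\Gamma_k|}$ subfamilies, and so finitely many intersections. Each intersection is a finite intersection of closed halfspaces with the cube, so it is a (possibly empty) polytope. The empty subfamily gives $[0,1]^d$. For $\emptyset$ we need a single slice that misses the cube; we take $z=e_1$ and $c=-1/k$, which lies in $\Gamma_k$ because $dk\ge 1/k$. Closure under intersection is immediate, since an intersection of intersections of subfamilies is the intersection of the union of those subfamilies. A finite intersection-closed family containing the top element is a complete lattice with intersection as meet, as a Moore family.

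\emph{The closure formula.} Write $\mathcal M_k(C)$ for the intersection of all members containing $C$, which is the least member above $C$. Every member is an intersection of slices, so this equals the intersection of all slices $H_{z,c}\supseteq C$. For a fixed $z$, the condition $C\subseteq H_{z,c}$ is exactly $h_C(z)\le c$. Moreover $H_{z,c}\subseteq H_{z,c'}$ whenever $c\le c'$, so among the admissible slices with normal $z$ the smallest is $H_{z,c_z}$, with $c_z$ the least grid value $\ge h_C(z)$. This minimum exists because $|h_C(z)|\le\sum_i|z_i|\le dk$ on the cube. The intersection over all admissible $(z,c)$ therefore collapses to $\bigcap_z H_{z,c_z}$.

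The only care needed is the edge case $C=\emptyset$. There $h_\emptyset=-\infty$, so $c_z=-dk$. The formula still returns $\emptyset$, because the slice for $z=e_1$ with $c=-dk<0$ is empty. This uses $dk>0$, which holds since $d,k\ge1$.

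\emph{The case $d=1$.} Here $Z_k=\{\pm1,\dots,\pm k\}$, and in particular $Z_k\ni\pm n$ for $n\le k$. Take $[\ell,u]$ with $\ell=i/n$ and $u=j/n$. We write it as $H_{n,j/\cdot}\cap H_{-n,\cdot}$, that is, the set where $nx\le j$ and $-nx\le -i$. The offsets $j$ and $-i$ are integers, hence lie in $\tfrac1k\mathbb Z$, and they lie in $[-k,k]\subseteq[-dk,dk]$ since $0\le i,j\le n\le k$. So $[\ell,u]\in\mathcal M_k$, and $\emptyset$ was already shown to be a member.

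The main obstacle is not conceptual. It is checking that the offset range $[-dk,dk]$ is large enough in each place it is used: that $h_C(z)$ never exceeds the top grid value, that the empty set is realisable, and that in $d=1$ the integer offsets fit. I would verify these bounds explicitly and keep the rest of the argument as the purely order-theoretic Moore-family reasoning above.
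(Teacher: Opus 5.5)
Your proof is correct, and for finiteness, polytopality, the two extremes, closure under intersection and the support-function formula it follows the paper's argument. Your explicit treatment of $C=\emptyset$ via $h_\emptyset=-\infty$ and your choice $c=-1/k$ in place of $c=-dk$ are only cosmetic differences.

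The one genuine difference is the $d=1$ clause, and there your route is the better one. The paper uses the normals $z=\pm1$ and appeals to $\tfrac1k\mathbb Z\supseteq\tfrac1n\mathbb Z$. That inclusion holds only when $n$ divides $k$. For example, with $k=3$ and $n=2$ the interval $[0,\tfrac12]$ cannot be cut out using $z=\pm1$ and offsets in $\tfrac13\mathbb Z$.

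You instead take $z=\pm n\in Z_k$ with integer offsets $j$ and $-i$, which lie in $\tfrac1k\mathbb Z\cap[-k,k]$. This works for every $n\le k$; in the example, $z=2$, $c=1$ gives $[0,\tfrac12]$. So your version of that step repairs a gap in the paper's proof rather than merely reproducing it.
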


\begin{proof}
$\mathcal H_k$ is finite, so the set of intersections of its subfamilies is finite; each member
is a bounded intersection of finitely many halfspaces, hence a polytope.
The empty subfamily gives $[0,1]^d$, and $z=e_1$ with $c=-dk$ gives $\emptyset$.
Closure under intersection holds because an intersection of intersections of subfamilies is the
intersection of their union.
For the formula, $C\subseteq\bigcap\mathcal S$ iff $C\subseteq H$ for every $H\in\mathcal S$, so
the smallest member containing $C$ is the intersection of all generators containing $C$; and
$C\subseteq H_{z,c}$ iff $c\ge h_C(z)$, so for each $z$ only the least admissible $c$ matters.
That minimum exists because $h_C(z)\le\|z\|_1\le dk$ on the cube.
For $d=1$ and $n\le k$, taking $z=\pm1$ and $c\in\tfrac1k\mathbb Z\supseteq\tfrac1n\mathbb Z$
produces every interval with endpoints in $\tfrac1n\mathbb Z$.
\end{proof}

\begin{remark}
For $d=1$ the family is strictly finer than $\mathcal M^{(k)}$: taking $z=2$ and $c=1/k$ gives
the halfline $x\le 1/(2k)$, whose endpoint is off the grid $\tfrac1k\mathbb Z$.
Only the inclusion $\mathcal M^{(n)}\subseteq\mathcal M_k$ is used above, and being finer only
helps.
\end{remark}

\begin{theorem}[restating Theorem~\ref{thm:approx}]\label{athm:approx}
For every convex $C\subseteq[0,1]^d$ we have $d_H(C,\mathcal M_k(C))\le 3d/k$.
\end{theorem}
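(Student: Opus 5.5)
The plan is to use the support-function description of $\mathcal M_k(C)$ from Lemma~\ref{alem:moore}. Since $C\subseteq\mathcal M_k(C)=:P$, only one half of the Hausdorff distance needs bounding. We must show that every point $x\in P$ lies within $3d/k$ of $\cl C$. The case $C=\emptyset$ is trivial, because then $h_C\equiv-\infty$, so $c_z=-dk$ and $P=\emptyset$. So take $C$ nonempty and, replacing $C$ by $\cl C$ (which changes neither $h_C$ nor $P$), compact. Fix $x\in P$ with $r:=d(x,C)>0$.

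\textbf{Step 1: separation.} Let $y_0$ be the nearest point of $C$ to $x$, and put $u=(x-y_0)/r\in S^{d-1}$. The projection characterisation gives the exact separation $\langle u,x\rangle-h_C(u)=r$. Rescale to the $\ell^\infty$ sphere by setting $v=u/\|u\|_\infty$. Since $\|u\|_\infty\le1$,
\[
  \langle v,x\rangle-h_C(v)=r/\|u\|_\infty\ge r .
\]

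\textbf{Step 2: rounding the direction to the template.} Let $z$ be the coordinatewise nearest integer vector to $kv$. Because $\|kv\|_\infty=k$, every coordinate of $z$ lies in $\{-k,\dots,k\}$, and at least one equals $\pm k$, so $z\in Z_k$. Put $w=z-kv$. Then $\|w\|_\infty\le\tfrac12$ and hence $\|w\|_1\le d/2$, which gives $|\langle w,y\rangle|\le d/2$ for every $y\in[0,1]^d$. Membership $x\in P$ together with the definition of $c_z$ in Lemma~\ref{alem:moore} yields
\[
  \langle z,x\rangle\le c_z<h_C(z)+\tfrac1k ,
\]
because $\Gamma_k$ has mesh $1/k$ and covers the range $[-dk,dk]$ of $h_C$ on $Z_k$. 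Sublinearity of support functions gives $h_C(z)\le k\,h_C(v)+h_C(w)\le k\,h_C(v)+d/2$, using $C\subseteq[0,1]^d$. On the other side, since $x\in[0,1]^d$ as well, $\langle z,x\rangle\ge k\langle v,x\rangle-d/2$.

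\textbf{Step 3: combining.} Chaining the inequalities from Step 2 gives $k\bigl(\langle v,x\rangle-h_C(v)\bigr)<d+1/k$. With Step 1 this becomes
\[
  r<\frac dk+\frac1{k^2}\le\frac{2d}{k}\le\frac{3d}{k}.
\]
So $\sup_{x\in P}d(x,C)\le3d/k$, which together with $C\subseteq P$ is the claim. The "hence" part follows by choosing $k\ge3d/\varepsilon$.

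The main obstacle is Step 2, namely controlling the error from replacing the true separating normal by a lattice normal. The key idea is to normalise in $\ell^\infty$ rather than $\ell^2$. This normalisation does two things at once. First, it guarantees that the rounded vector lands in $Z_k$ (nonzero, with entries bounded by $k$). Second, it only increases the separation gap, since $\|u\|_\infty\le1$. The rounding error $w$ then costs at most $\|w\|_1\le d/2$ on each side, because both $x$ and $C$ live in the unit cube. The offset rounding costs only $1/k$. The slack between $2d/k$ and the stated $3d/k$ comfortably absorbs any cruder bookkeeping of these terms.
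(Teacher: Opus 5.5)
Your proof is correct, and it takes a genuinely different route from the paper's. The paper normalises the separating direction in $\ell^2$. It shows that the directions $z/\|z\|_2$, $z\in Z_k$, form a $\delta$-net of $S^{d-1}$ with $\delta=\sqrt d/k$, and notes that the normalised offsets have spacing at most $1/k\le\delta$. It then moves the exact gap $t$ from the true normal to the net direction, using $\|x\|_2\le\sqrt d$ and the $\sqrt d$-Lipschitz continuity of $h_C$, and ends at $t\le(2\sqrt d+1)\delta=(2d+\sqrt d)/k$.

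You instead normalise in $\ell^\infty$ and work with the unnormalised constraint $\langle z,x\rangle\le c_z$ throughout. The direction error $w=z-kv$ is paid for by $\ell^1$--$\ell^\infty$ duality on the cube. The offset error $1/k$ gets divided by the factor $k$ that $z$ carries, so it costs only $1/k^2$.

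That buys you three things. First, the rounded vector lies in $Z_k$ for every $k$. The paper's argument for $z\neq0$ quietly assumes $k>\sqrt d/2$; this is harmless, since for smaller $k$ the bound exceeds the diameter $\sqrt d$ of the cube, but it is unstated. Second, the offset rounding is exposed as a second-order effect, which the paper's crude spacing bound $1/(k\|z\|_2)\le1/k$ conceals. Third, the constant improves to $d/k+1/k^2\le2d/k$. It improves further to $d/(2k)+1/k^2$ if you bound the two $w$-terms together: $h_C(w)-\langle w,x\rangle=\sup_{y\in C}\langle w,y-x\rangle\le d/2$ because $y-x\in[-1,1]^d$.

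In return, the paper's decomposition isolates the only two properties of the template that are actually used: a $\delta$-net of normals and $\delta$-fine offsets. So it transfers verbatim to other template families. Yours depends on $Z_k$ being all nonzero integer points of a cube, which is exactly what makes $\ell^\infty$-normalised rounding land inside it.
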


\begin{proof}
If $C=\emptyset$ both sides are empty, and replacing $C$ by $\cl C$ changes neither $h_C$ nor
the hull, so assume $C$ nonempty and closed.
Put $\delta=\sqrt d/k$ and $\eta=\sqrt d/(2k)$.

{\em The normals form a $\delta$-net of the sphere.}
Given $v\in S^{d-1}$, let $z$ be the componentwise nearest integer to $kv$; then $|z_i|\le k$,
so $z\in Z_k$ once $z\neq0$. 
The latter holds because $\|kv-z\|_2\le\sqrt d/2$ with $k>\sqrt d/2$ forces
$\|z\|_2 + \|kv-z\|_2\ge \|z+kv-z\|_2 = k$, hence
$\|z\|_2\ge k-\sqrt d/2>0$.
Put $w=z/k$ and $u=z/\|z\|_2=w/\|w\|_2$.
Then $\|w-v\|_2\le\eta$ and $\bigl|\,\|w\|_2-1\bigr|\le\|w-v\|_2\le\eta$, so
\[
\begin{aligned}
  \|u-v\|_2&\le\Bigl\|\tfrac{w}{\|w\|_2}-w\Bigr\|_2+\|w-v\|_2\\
  &\le \bigl|1-\|w\|_2\bigr|+\eta\ \le\ 2\eta\ =\ \delta .
\end{aligned}
\]

{\em The offsets are $\delta$-fine and wide enough.}
Since $H_{z,c}=\{x\in[0,1]^d:\langle u,x\rangle\le c/\|z\|_2\}$, the available offsets in
normalised form are $\tfrac{1}{k\|z\|_2}\mathbb Z$, of spacing at most $1/k\le\delta$ because
$z$ is a nonzero integer vector, so $\|z\|_2\ge1$.
The range $|c|\le dk$ covers all normalised offsets in $[-\sqrt d,\sqrt d]$, since
$\|z\|_2\le k\sqrt d$ gives $dk/\|z\|_2\ge\sqrt d\ge|h_C(u)|$.
So the least available normalised offset $b_u$ above $h_C(u)$ satisfies
$h_C(u)\le b_u\le h_C(u)+\delta$.

{\em The estimate.}
(See Figure~\ref{fig:proof} for an illustration.)
Clearly $C\subseteq\mathcal M_k(C)$.
Let $x\in\mathcal M_k(C)$ with $t=\operatorname{dist}(x,C)>0$. As $C$ is closed, we can let $y\in C$ be nearest to $x$,
and put $v=(x-y)/t\in S^{d-1}$.
Since $C$ is convex,
the projection inequality $\langle v,y'-y\rangle\le0$ for $y'\in C$ gives
$h_C(v)=\langle v,y\rangle$, hence $\langle v,x\rangle=h_C(v)+t$.
Choose $u$ as above.
Using $\|x\|_2\le\sqrt d$ and that $h_C$ is Lipschitz with constant
$\sup_{y\in C}\|y\|_2\le\sqrt d$,
\[
\begin{aligned}
  h_C(v)+t-\delta\sqrt d&=\langle v,x\rangle-\delta\sqrt d\le\langle u,x\rangle\le b_u\\
  &\le h_C(u)+\delta\le h_C(v)+\delta\sqrt d+\delta .
\end{aligned}
\]
Therefore $t\le(2\sqrt d+1)\delta=(2d+\sqrt d)/k\le3d/k$, as $\sqrt d\le d$.
\end{proof}

\begin{figure}[t]\centering
\includegraphics[width=\linewidth]{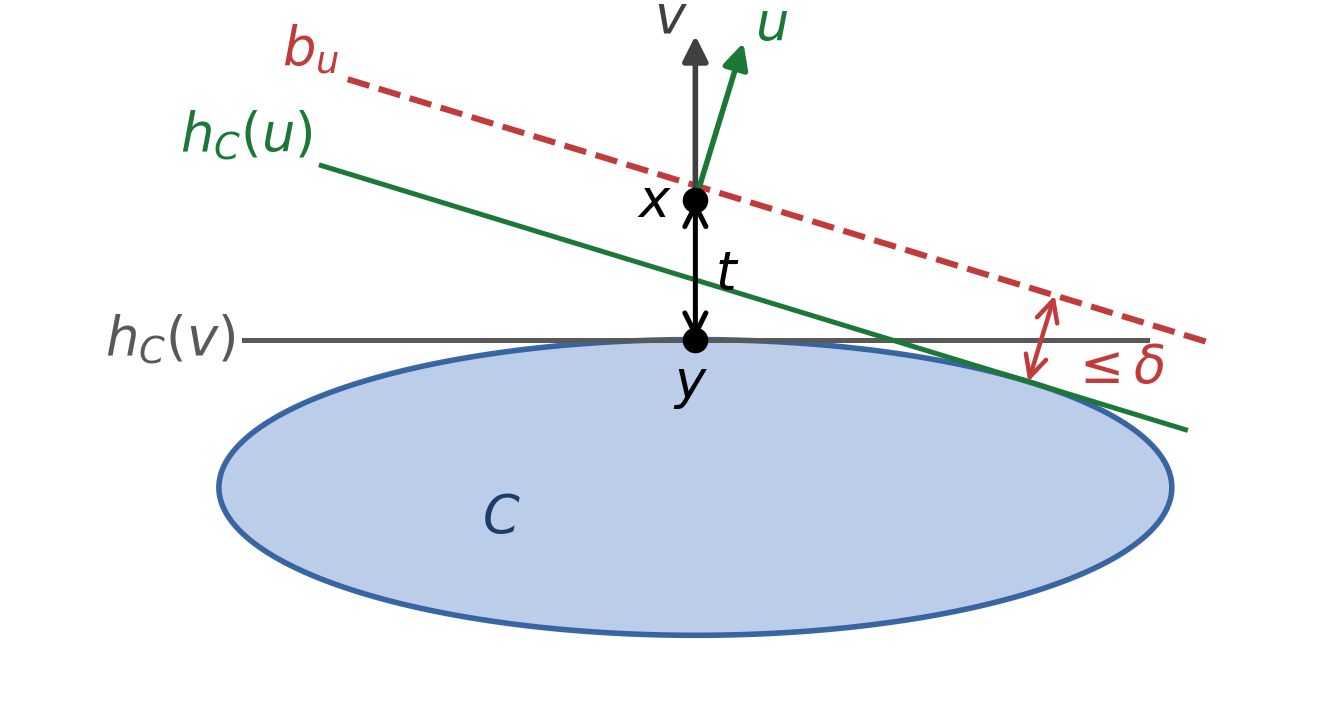}
\caption{The estimate in the proof of Theorem \ref{athm:approx}, drawn in $d=2$. The point
$x\in\mathcal M_k(C)$ lies at distance $t$ from $C$, with nearest point $y$ and
$v=(x-y)/t$; the horizontal line is the supporting hyperplane $\{\langle v,\cdot\rangle=h_C(v)\}$,
which touches $C$ at $y$. The lattice direction $u$ satisfies $\|u-v\|\le\delta$ (the angle is
drawn much larger than that, so the two can be told apart). Note that $x$ lies beyond the
supporting hyperplane $\{\langle u,\cdot\rangle=h_C(u)\}$, so the estimate must run through the
grid hyperplane $\{\langle u,\cdot\rangle=b_u\}$ that actually bounds $\mathcal M_k(C)$ in
direction $u$; that $b_u$ exceeds $h_C(u)$ by at most $\delta$ is what the offset grid
buys.}\label{fig:proof}
\end{figure}

\begin{lemma}[Canonical offsets and height]\label{alem:height}
For $M\in\mathcal M_k$ put $c_z(M)=\min\{c\in \Gamma_k:M\subseteq H_{z,c}\}$.
Then $M=\bigcap_{z}H_{z,c_z(M)}$, and for $M_1,M_2\in\mathcal M_k$ we have
$M_1\subseteq M_2$ if and only if $c_z(M_1)\le c_z(M_2)$ for all $z\in Z_k$.
Consequently every strictly increasing chain in $\mathcal M_k$ has at most
$|Z_k|(|\Gamma_k|-1)+1$ members, where $|Z_k|=(2k+1)^d-1$ and $|\Gamma_k|=2dk^2+1$.
\end{lemma}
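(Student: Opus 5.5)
The plan has three steps. First, recover $M$ from its canonical offsets. Second, show that inclusion is equivalent to a componentwise comparison of those offsets. Third, bound chain length by a potential that counts grid steps.

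\emph{Recovering $M$.} Since $M\in\mathcal M_k$, it is an intersection $\bigcap\mathcal S$ of generators $H_{z,c}$. Each generator containing $M$ has $c\ge c_z(M)$, by minimality of $c_z(M)$. So $M\subseteq\bigcap_z H_{z,c_z(M)}\subseteq\bigcap\mathcal S=M$. The second inclusion holds because for every $H_{z,c}\in\mathcal S$ we have $H_{z,c_z(M)}\subseteq H_{z,c}$.

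The minimum always exists. Offsets run down to $-dk$, and $c=dk$ always works because $\langle z,x\rangle\le\|z\|_1\le dk$ on the cube. For $M=\emptyset$ every $c$ qualifies, so $c_z(\emptyset)=-dk$ for all $z$. The representation still holds in that case, since $H_{e_1,-dk}=\emptyset$.

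This is exactly the closure formula of Lemma~\ref{alem:moore} applied to $C=M$, with $\mathcal M_k(M)=M$. I would simply cite that lemma.

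\emph{Order equivalence.} If $M_1\subseteq M_2$, then every $c$ admissible for $M_2$ in direction $z$ is admissible for $M_1$. Hence $c_z(M_1)\le c_z(M_2)$. Conversely, if $c_z(M_1)\le c_z(M_2)$ for all $z$, then $H_{z,c_z(M_1)}\subseteq H_{z,c_z(M_2)}$ for all $z$, and intersecting gives $M_1\subseteq M_2$ by the representation.

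\emph{Height bound.} Define the potential
\[
\Phi(M)=\sum_{z\in Z_k}k\bigl(c_z(M)+dk\bigr),
\]
an integer in $[0,|Z_k|(|\Gamma_k|-1)]$, since $k c_z\in\{-dk^2,\dots,dk^2\}$.

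Let $M_1\subsetneq M_2$. By the equivalence, $c_z(M_1)\le c_z(M_2)$ for all $z$. Equality in every coordinate would force $M_1=M_2$ by the representation, so at least one offset rises strictly, by at least one grid step $1/k$. Hence $\Phi$ increases by at least $1$.

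A strictly increasing chain therefore has pairwise distinct integer potentials in a range of size $|Z_k|(|\Gamma_k|-1)+1$. That bounds the number of members.

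The counts are direct:
\begin{itemize}
\item $|Z_k|=(2k+1)^d-1$.
\item $\Gamma_k=\tfrac1k\mathbb Z\cap[-dk,dk]$ has $2dk\cdot k+1=2dk^2+1$ points.
\end{itemize}

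I expect no real obstacle. The only delicate point is the empty set and the extreme offsets, where I need the minimum in $c_z$ to be attained inside $\Gamma_k$. This is handled by the bound $h_M(z)\le dk$ together with the fact that $-dk$ is the least element of $\Gamma_k$.
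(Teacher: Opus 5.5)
Your proof is correct and follows the paper's own argument. It has the same three parts: recovering $M$ from the tightest generator in each normal direction, characterising inclusion componentwise, and bounding chain length by noting that some offset must rise by at least one grid step at each strict inclusion. Your integer potential $\Phi$ just makes that last count explicit, and your treatment of the empty set and of the attainment of the minimum fills in details the paper leaves implicit.
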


\begin{proof}
$M$ is an intersection of generators, and replacing each by the tightest generator with the same
normal that still contains $M$ leaves $M$ unchanged, which gives the representation.
If $M_1\subseteq M_2$ then $M_1\subseteq H_{z,c_z(M_2)}$, so $c_z(M_1)\le c_z(M_2)$;
conversely, componentwise domination gives
$M_1=\bigcap_zH_{z,c_z(M_1)}\subseteq\bigcap_zH_{z,c_z(M_2)}=M_2$.
So inclusion is componentwise order on a vector of $|Z_k|$ coordinates, each ranging over the
$|\Gamma_k|$-element grid $\Gamma_k$, and along a strictly increasing chain at least one coordinate
strictly increases at each step, which can happen at most $|Z_k|(|\Gamma_k|-1)$ times.
Finally $\Gamma_k=\tfrac1k\mathbb Z\cap[-dk,dk]$ has $2dk^2+1$ elements.
\end{proof}

\begin{theorem}[restating Theorem~\ref{thm:finite}]\label{athm:finite}
$F_k=\mathcal M_k\circ F_{\mathrm{cvx}}$ is an isotone self-map of $\mathcal M_k$ with
$F_{\mathrm{cvx}}(C)\subseteq F_k(C)$ for all $C$, and $F_k(C)=\bigvee_{C'\subseteq C}G_k(C')$.
It has a least fixed point $C^\bot_k$, and a least fixed point $C^\star_k$ above
$\mathcal M_k(\{a\}\cup\SC)$; both are reached by iteration from below in at most
$|Z_k|(|\Gamma_k|-1)+1$ steps, and $C^\bot\subseteq C^\bot_k$, $C^\star\subseteq C^\star_k$.
\end{theorem}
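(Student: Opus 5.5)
The plan is to reduce everything to facts already established about $\mathcal M_k$ and about closure operators, and to use Lemma \ref{alem:height} for termination.

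\textbf{Step 1: $F_k$ maps $\mathcal M_k$ to itself and dominates $F_{\mathrm{cvx}}$.} By Lemma \ref{alem:moore}, $\mathcal M_k(\cdot)$ is the closure operator of a family closed under intersection. It is therefore extensive, isotone and idempotent, and it takes values in $\mathcal M_k$. Since $F_{\mathrm{cvx}}$ is isotone by the pattern \eqref{eq:FE}, the composite $F_k=\mathcal M_k\circ F_{\mathrm{cvx}}$ is isotone. Extensivity gives $F_{\mathrm{cvx}}(C)\subseteq F_k(C)$.

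\textbf{Step 2: the join formula.} For $F_k(C)=\bigvee_{C'\subseteq C}G_k(C')$, note that the join in $\mathcal M_k$ of a family is $\mathcal M_k$ of its union. Every member of $\mathcal M_k$ is closed and convex, so $\mathcal M_k(S)=\mathcal M_k(\cl{\conv}S)$. Hence
\[
\bigvee_{C'} \mathcal M_k(\{f(C')\}) = \mathcal M_k\Bigl(\bigcup_{C'}\mathcal M_k(\{f(C')\})\Bigr)=\mathcal M_k(F(C)) = \mathcal M_k(F_{\mathrm{cvx}}(C)).
\]
The middle equality holds because each $\mathcal M_k(\{f(C')\})$ lies between $\{f(C')\}$ and $\mathcal M_k(F(C))$.

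\textbf{Step 3: existence and termination.} $\mathcal M_k$ is a finite complete lattice, so Knaster--Tarski gives $C^\bot_k$. For $C^\star_k$, put $x_k=\mathcal M_k(\{a\}\cup\SC)$. I will show $x_k\subseteq F_k(x_k)$ as in Proposition \ref{aprop:cstar}. We have $a=f(\emptyset)$ and each $p\in\SC$ equal to $f(\{p\})$ with $\{p\}\subseteq x_k$, so $\{a\}\cup\SC\subseteq F(x_k)\subseteq F_k(x_k)$. Applying the closure operator and using that $F_k(x_k)\in\mathcal M_k$ gives the claim. The order interval $[x_k,[0,1]^d]$ is then $F_k$-invariant and has a least fixed point.

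Iterating from $\emptyset$ (respectively from $x_k$) produces an increasing chain. It stays below every fixed point (respectively every fixed point above $x_k$) by the usual induction. Once two consecutive terms coincide, the chain has reached the least fixed point. Before that point it is strictly increasing, so Lemma \ref{alem:height} bounds its length by $|Z_k|(|\Gamma_k|-1)+1$ members.

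\textbf{Step 4: comparison with the exact answers.} This is the subtle step, because $F_k$ acts on $\mathcal M_k$ while $F_{\mathrm{cvx}}$ acts on closed convex sets. Take the fixed point $M=C^\bot_k$. Since $M\in\mathcal M_k$, it is closed and convex, and $F_{\mathrm{cvx}}(M)\subseteq F_k(M)=M$. So $M$ is a pre-fixed point of $F_{\mathrm{cvx}}$, and Knaster--Tarski's characterisation $C^\bot_{\mathrm{cvx}}=\inf\{C:F_{\mathrm{cvx}}(C)\subseteq C\}$ yields $C^\bot_{\mathrm{cvx}}\subseteq C^\bot_k$.

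For the starred version, $C^\star_k$ contains $x_k\supseteq\{a\}\cup\SC$. It is therefore also a pre-fixed point of $F_{\mathrm{cvx}}$ lying in the interval above $\cl{\conv}(\{a\}\cup\SC)$. The least fixed point of an isotone map on an interval $[x,\top]$ with $x\le F(x)$ is the infimum of the pre-fixed points in that interval. This gives $C^\star_{\mathrm{cvx}}\subseteq C^\star_k$.

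Read literally, the statement writes $C^\bot$ and $C^\star$. On the cube these are the convex-lattice versions, consistent with the main-text statement.
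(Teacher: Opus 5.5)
Your proposal is correct and follows essentially the same route as the paper. It uses isotonicity and extensivity of the closure $\mathcal M_k(\cdot)$ for the first claims, the closure turning unions into joins while being blind to closed convex hulls for the join formula, Knaster--Tarski on the finite lattice (with the seed $\mathcal M_k(\{a\}\cup\SC)$ shown post-fixed as in Proposition~\ref{aprop:cstar}) together with Lemma~\ref{alem:height} for termination, and the fact that fixed points of $F_k$ are pre-fixed points of $F_{\mathrm{cvx}}$ to obtain $C^\bot_{\mathrm{cvx}}\subseteq C^\bot_k$ and $C^\star_{\mathrm{cvx}}\subseteq C^\star_k$, which is also what the paper's proof actually establishes.
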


\begin{proof}
$\mathcal M_k(\cdot)$ and $F_{\mathrm{cvx}}$ are both isotone and $\mathcal M_k(\cdot)$ is
extensive, so $F_k$
is an isotone self-map of $\mathcal M_k$ dominating $F_{\mathrm{cvx}}$.
A closure operator turns unions into joins, $\mathcal M_k(\bigcup_jS_j)=\bigvee_j\mathcal
M_k(S_j)$, and $F_{\mathrm{cvx}}(C)$ is the closed convex hull of $\{f(C'):C'\subseteq C\}$, whose
$\mathcal M_k$-hull is unchanged by taking convex hulls; so
$F_k(C)=\bigvee_{C'\subseteq C}\mathcal M_k(\{f(C')\})=\bigvee_{C'\subseteq C}G_k(C')$.
Existence of $C^\bot_k$ is Knaster--Tarski on the finite complete lattice $\mathcal M_k$; for
$C^\star_k$, note that $\mathcal M_k(\{a\}\cup\SC)\subseteq F_k(\mathcal M_k(\{a\}\cup\SC))$ by
the argument of Proposition \ref{aprop:cstar} together with extensiveness, so the order interval
above it is $F_k$-invariant and again a finite complete lattice.
The chain $C_0=\emptyset$, $C_{i+1}=F_k(C_i)$ is increasing, so by Lemma \ref{alem:height} it
stabilises within the stated number of steps, at the least fixed point by the usual induction.
For the last claim, if $F_k(C)\subseteq C$ then $F_{\mathrm{cvx}}(C)\subseteq F_k(C)\subseteq C$,
so every pre-fixed point of $F_k$ is one of $F_{\mathrm{cvx}}$, whence
$C^\bot_{\mathrm{cvx}}=\inf\{C:F_{\mathrm{cvx}}(C)\subseteq C\}\subseteq
\inf\{C:F_k(C)\subseteq C\}=C^\bot_k$, and likewise
above the seed for $C^\star$.
\end{proof}

\begin{proposition}[restating Proposition~\ref{prop:approxsc}]\label{aprop:approxsc}
Every $C\in\fix(F_k)$ satisfies $d_H(C,F_{\mathrm{cvx}}(C))\le3d/k$.
\end{proposition}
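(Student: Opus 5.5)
The plan is to combine the fixed-point equation with Theorem~\ref{athm:approx}, applied to the convex set $F_{\mathrm{cvx}}(C)$. Let $C\in\fix(F_k)$ and put $S:=F_{\mathrm{cvx}}(C)=\cl{\conv}\{f(C'):C'\subseteq C\}$. This set is closed and convex in $[0,1]^d$. By Theorem~\ref{athm:finite}, $F_k=\mathcal M_k\circ F_{\mathrm{cvx}}$, so the fixed-point equation reads
\[
C=F_k(C)=\mathcal M_k(S).
\]
The quantity to bound, $d_H(C,F_{\mathrm{cvx}}(C))$, is therefore exactly $d_H(\mathcal M_k(S),S)$.

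Since $S$ is a convex subset of $[0,1]^d$, Theorem~\ref{athm:approx} applies directly and gives $d_H(S,\mathcal M_k(S))\le 3d/k$. This is the claimed bound. No further comparison with the true least fixed point of $F_{\mathrm{cvx}}$ is needed, because the statement only concerns one application of $F_{\mathrm{cvx}}$ to $C$.

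Two small points need checking.

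\emph{Nonemptiness.} $S$ is nonempty, since it contains $a=f(\emptyset)$; so the Hausdorff distance is between nonempty compact convex sets, as Theorem~\ref{athm:approx} assumes. If one worries that the empty set is itself a fixed point, note that $\emptyset$ is a subset of every $C$. Hence $a\in S$, so $\emptyset\ne S\subseteq\mathcal M_k(S)=C$, and $C\neq\emptyset$.

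\emph{Convexity.} Theorem~\ref{athm:approx} is stated for convex $C$, and $S$ is convex by construction, being a closed convex hull.

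The second sentence of the statement then follows by applying the first to the fixed points $C^\bot_k$ and $C^\star_k$ from Theorem~\ref{athm:finite}, using $3d/k\le\varepsilon$ when $k\ge 3d/\varepsilon$. I do not expect a genuine obstacle. The only step that needs care is writing the fixed-point equation in the form $C=\mathcal M_k(F_{\mathrm{cvx}}(C))$, rather than treating $C$ as a rounding of itself.
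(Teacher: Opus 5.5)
Your proposal is correct and is essentially the paper's proof: rewrite the fixed-point equation as $C=F_k(C)=\mathcal M_k(F_{\mathrm{cvx}}(C))$, then apply Theorem~\ref{athm:approx} to the convex set $F_{\mathrm{cvx}}(C)$. Your nonemptiness check is valid but not needed, since Theorem~\ref{athm:approx} already covers the empty case.
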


\begin{proof}
$F_{\mathrm{cvx}}(C)$ is convex, so Theorem \ref{athm:approx} gives
$d_H\bigl(F_{\mathrm{cvx}}(C),\mathcal M_k(F_{\mathrm{cvx}}(C))\bigr)\le3d/k$, and
$\mathcal M_k(F_{\mathrm{cvx}}(C))=F_k(C)=C$.
\end{proof}

\begin{remark}
The number of iterations is bounded, but each one still evaluates a supremum over sub-answers,
which is where the combinatorial cost now sits; this is the same issue as the $2^{|C_i|}$ in
Lemma \ref{alem:finite}, and structural assumptions such as the no-overshoot condition \eqref{eq:c1} cut it down.
Note also that the bound of Theorem \ref{athm:approx} is linear in $\delta$, not quadratic as in
Dudley's tangent-facet construction \citep{dudley1974}: for the segment $C=[-e_1,e_1]$ the hull
contains $te_2$ whenever $t\le\min\{|u_1|/u_2: u\in N,\ u_2>0\}$, which is $\Theta(\delta)$ for
generic segment directions.
The price is exactly that our normals are fixed in advance and so cannot be tangent to $C$,
which is what makes the family a closure system at all.
\end{remark}


\end{document}